\documentclass[lettersize,journal]{IEEEtran}
\usepackage{amsmath,amsfonts,amsthm}
\usepackage{array}
\usepackage{url}
\usepackage{amssymb}
\usepackage{multirow}
\usepackage{upgreek}
\usepackage{graphicx}
\usepackage{etoolbox}

\usepackage{cite}
\usepackage[resetlabels,labeled]{multibib}
\newcites{S}{Appendix References}

\usepackage{pifont}

\def\R#1{(\ref{#1})}

\usepackage{wasysym}

\usepackage[ruled,vlined]{algorithm2e}
\usepackage[noend]{algpseudocode}
\usepackage[caption=false,font=normalsize,labelfont=sf,textfont=sf]{subfig}
\usepackage[full]{textcomp}
\usepackage{stfloats}
\usepackage{verbatim}
\usepackage{booktabs}

\usepackage{subcaption}
\usepackage{tikz}
\usepackage{enumitem}
\usepackage{kotex}

\newcommand{\cmark}{\ding{51}}%
\newcommand{\xmark}{\ding{55}}%

\usepackage{stix}
\usepackage[dvipsnames]{xcolor}

\definecolor{FeatGreen}{RGB}{0,128,0}
\colorlet{FeatRed}{red}
\colorlet{MeanOrange}{orange}
\colorlet{WBlue}{blue}

\newcommand{\symSq}[1]{\tikz[baseline=-0.5ex]\draw[#1,thick] (-0.1,-0.1) rectangle (0.1,0.1);}
\newcommand{\symDot}[1]{\tikz[baseline=-0.5ex]\draw[#1,fill=#1] (0,0) circle[radius=0.1];}
\newcommand{\symTri}[1]{\tikz[baseline=-0.5ex]\draw[#1,thick] (-0.1,0)--(0.1,0)--(0,0.17)--cycle;}

\def\R#1{(\ref{#1})}
\newcommand*{\cO}{\mathcal{O}}
  
\newcommand{\be}[1]{\begin{equation} #1 \end{equation}}

\newcommand{\ea}[1]{\begin{align} #1 \end{align}}

\newcommand{\nn}{\nonumber}

\newtheorem{theorem}{Theorem}

\usepackage[pagebackref,breaklinks,colorlinks]{hyperref}
\usepackage{xr-hyper}  
\begin{document}

\title{Equilibrium
contrastive learning for imbalanced image classification}

\author{
Sumin Roh, Harim Kim, Ho Yun Lee$^{\dagger}$, and Il Yong Chun$^{\dagger}$

\thanks{
$^{\dagger}$\emph{Corresponding authors.}
The work was supported in part by NRF Grant RS-2023-00213455 funded by MSIT, 
the Digital Therapeutics Development and Demonstration Support Program Grant H0601-24-1023 funded by MSIT and NIPA,
the BK21 FOUR Project, 
IITP Grant RS-2019-II190421 (AI Graduate School Support Program (Sungkyunkwan University)) 
funded by MSIT, 
KIAT Grant RS-2024-00418086 (HRD Program for Industrial Innovation) funded by MOTIE, 
IBS-R015-D1,
Future Medicine 20*30 Project of the Samsung Medical Center (Grant No.~SMO1240791), 
IITP Grant RS-2021-II212068 funded by MSIT and Artificial Intelligence Innovation Hub, 
SKKU-SMC and SKKU-KBSMC Future Convergence Research Program grants.
}

\thanks{
Sumin Roh is with the Department of Electrical and Computer Engineering (ECE), Sungkyunkwan University (SKKU), Suwon 16419, South Korea (email: 
\href{mailto:sumsumin@g.skku.edu}{\tt sumsumin@g.skku.edu}).
Harim Kim is with the Department of Radiology, Samsung Medical Center (SMC), Seoul 06351, South Korea (email: 
\href{mailto:k3541729@gmail.com}{\tt k3541729@gmail.com}).
Ho Yun Lee is with the Department of Radiology, SMC, Seoul 06351, South Korea, and with School of Medicine, SKKU, Suwon 16419, South Korea (email: \href{mailto:hoyunlee96@gmail.com}{\tt hoyunlee96@gmail.com}).
Il Yong Chun is with the Departments of ECE, Artificial Intelligence, Advanced Display Engineering, and Semiconductor Convergence Engineering, Display Convergence Engineering, SKKU, Suwon 16419, South Korea, and also with the Center for Neuroscience Imaging Research, Institute for Basic Science, Suwon 16419, South Korea (email: \href{mailto:iychun@skku.edu}{\tt iychun@skku.edu}).

}
}

{Shell \MakeLowercase{\textit{et al.}}: A Sample Article Using IEEEtran.cls for IEEE Journals}

\maketitle

\begin{abstract}
The data imbalance issue is critical in image classification.
Contrastive learning (CL) is a predominant technique in image classification,
but they showed limited classification performance when the class distribution in a training dataset is skewed (i.e., an imbalanced dataset). 
Recently, several supervised CL methods have been proposed particularly to promote an ideal regular simplex geometric configuration in the representation space---characterized by intra-class feature collapse and uniform inter-class mean spacing---especially for imbalanced datasets.
In particular, existing prototype-based methods include class representatives, i.e., prototypes, as additional samples to achieve a more balanced treatment of all classes.
However, the existing supervised CL methods for imbalanced datasets still suffer from two major limitations.
First, they do \emph{not} consider the alignment between the class means/prototypes and linear classifiers, which could lead to poor generalization.
Second, existing prototype-based methods treat class prototypes as \emph{only} one additional sample per class, making their influence depend on the number of class instances in a batch and causing unbalanced contributions across classes.
To address these limitations, 
we propose Equilibrium Contrastive Learning (ECL), a supervised CL framework designed to promote geometric equilibrium in the representation space, where class features, means, and classifiers are harmoniously balanced under data imbalance.
The proposed ECL framework uses two main strategies.
First, ECL promotes the representation geometric equilibrium (i.e., a regular simplex geometry characterized by collapsed class samples and uniformly distributed class means), 
while balancing the contributions of class-average features and class prototypes.
Second, ECL establishes a classifier-class center geometric equilibrium by aligning classifier weights and class prototypes in the representation space.
We ran experiments with three long-tailed natural image datasets, 
the CIFAR-10-LT, CIFAR-100-LT and ImageNet-LT benchmark datasets,
and the two imbalanced medical image classification datasets,
the ISIC 2019 benchmark and our constructed lung cancer chest computed tomography dataset.
Results with the five imbalanced datasets show that ECL outperforms existing state-of-the-art supervised CL methods designed for imbalanced/long-tailed classification.
Codes are available at \href{https://github.com/suminRoh/ECL-main}{this link}.
\end{abstract}

\begin{IEEEkeywords}
Long-tailed recognition, 
Contrastive learning, 
Supervised learning. 
\end{IEEEkeywords}

\begin{figure}[t]
    \centering
    \includegraphics[width=\columnwidth]{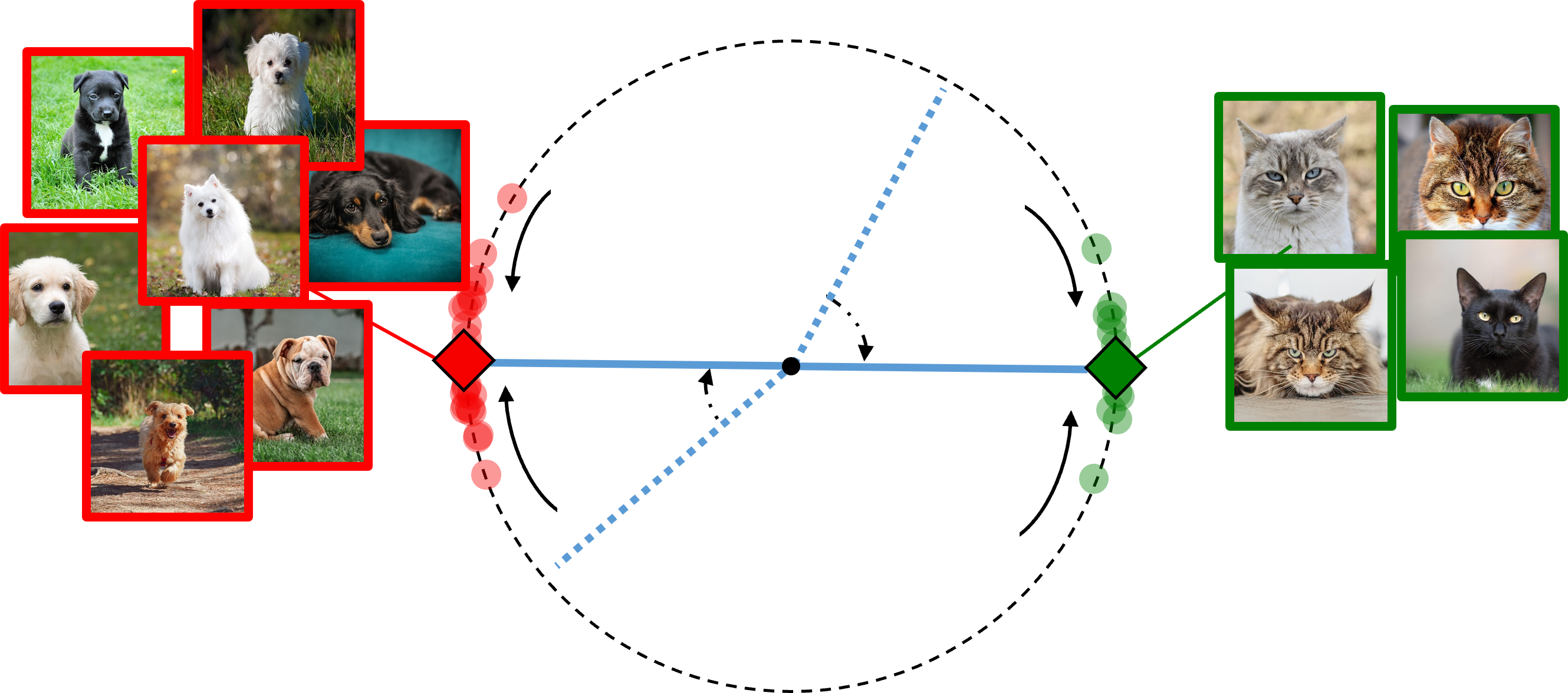}
    
    \caption{
    Illustration of {\bfseries E}quilibrium {\bfseries C}ontrastive {\bfseries L}earning ({\bfseries ECL}).
    ECL promotes the following three key geometric properties that collectively establish geometric equilibrium in the representation space:
    \textit{1)} the collapse of within-class representations (denoted by $\mdsmblkcircle$) to their class means (denoted by $\lozenge$), 
    \textit{2)} equidistant spacing among class means, and
    \textit{3)} alignment between linear classifier weights (denoted by skyblue sticks) and class means.
    It can simplify the geometries in representation and classifier, leading to simple nearest-centroid classifications in the representation space. 
    The simple geometries are useful for improving the generalization capability.}
    \label{fig:configuration}
\vspace{-1pc} 
\end{figure}

\section{Introduction}
\label{sec:intro}

\IEEEPARstart{I}{mbalanced} distributions can easily be found in real-world datasets.
Any dataset that exhibits an unequal distribution among its classes is referred to as an imbalanced dataset \cite{imbalance}.
Image classification becomes challenging when the dataset has imbalanced distributions.
Contrastive learning (CL) is a predominant technique in image classification \cite{simclr,kcl,supcon} and several supervised CL methods have been proposed to handle imbalanced datasets \cite{GLMC,TSC,BCL}.
In particular,
\cite{TSC,BCL}
promote in the representation space that all instances with the same label collapse to points and these points are equidistant from each other, i.e., \emph{intra-class feature collapse} and \emph{uniform inter-class mean spacing} \cite{neuralcollapse}, respectively.
The existing prototype-based CL methods \cite{BCL, PaCo, GPaco,TSC} aim to promote the aforementioned geometric properties by incorporating class prototypes as additional samples, 
which enables a more balanced consideration of all classes under imbalanced settings.
However, the existing supervised CL methods have two limitations.
First, existing CL methods underestimate the alignment between linear classifiers and the class center representations \cite{neuralcollapse}, 
leading to limited generalization performances in imbalanced test sets.
Second, the existing prototype-based CL methods \cite{BCL, PaCo, GPaco, TSC} treat each prototype merely as a single additional sample per class, causing its influence to diminish as the number of class instances in a batch increases. 
Consequently, the prototype signal becomes weak, making the objectives sensitive to mini-batch composition.

To overcome these limitations, 
this paper proposes {\bfseries E}quilibrium {\bfseries C}ontrastive {\bfseries L}earning ({\bfseries ECL}), a new supervised end-to-end (E2E) framework that promotes geometric equilibrium in the representation space by harmoniously balancing class features, means, and classifier weights.
We propose two complementary strategies in the ECL framework.
First, we propose the {\bfseries B}alanced {\bfseries C}lass-wise {\bfseries ECL} ({\bfseries BC-ECL}) formulation that promotes representation geometric equilibrium---a regular simplex geometry characterized by intra-class feature collapse and uniform inter-class mean spacing---in a batch-invariant manner by balancing the contributions of class-average features and prototypes.
Second, we propose the {\bfseries C}lassifier-class {\bfseries C}enter {\bfseries G}eometric {\bfseries E}quilibrium ({\bfseries CC-GE}) promoting loss that aligns class prototypes and classifier weights.

Combined all together, 
proposed ECL can promote the symmetric and very simple geometry in a batch-invariant manner, to improve generalization performance in imbalanced learning.
See its illustration in Fig.~\ref{fig:configuration}.

We ran experiments 
with three long-tailed natural image datasets,
the CIFAR-10-LT, CIFAR-100-LT, and ImageNet-LT benchmark datasets \cite{cifar},
and with two imbalanced medical image datasets,
the International Skin Imaging Collaboration (ISIC) 2019 benchmark dataset \cite{ISIC} for skin lesion image classification 
and our constructed Lung Cancer chest Computed Tomography (LCCT) dataset to determine the risk of cancer recurrence.
Our numerical experiments demonstrate that the proposed ECL method achieves outperforming performance compared to existing state-of-the-art (SOTA) supervised CL methods for handling imbalanced datasets.

We analyzed different supervised CL methods with the three perspectives,
\textit{1)} how well within-class representations collapse to their class means, 
\textit{2)} how far are class means between different classes,
and 
\textit{3)} how well last-layer classifiers are aligned with the class prototypes/means.
We observed that 
proposed ECL improves these key metrics, compared to those from the SOTA methods.

Our contributions can be summarized as follows: 
\begin{itemize}
    \item The proposed ECL framework promotes three key geometric properties in a batch-invariant manner: 
    \textit{1)} intra-class feature collapse, 
    \textit{2)} uniform inter-class mean spacing, and 
    \textit{3)} alignment between classifier weights and class center representations.
    Together, ECL establishes simplified and stable geometric structures jointly among class features, means, and classifier weights, ultimately improving generalization in imbalanced learning.

    \item Proposed ECL achieves the SOTA performance on five imbalanced datasets: ImageNet-LT, CIFAR-10-LT, CIFAR-100-LT, ISIC 2019, and LCCT.
\end{itemize}

\section{Related work}

This section reviews existing supervised CL methods that are proposed to handle imbalanced/long-tailed datasets. 
\cite{PaCo} proposes a two-stage method that introduces parametric class-wise learnable centers to rebalance the learning process from an optimization perspective.
The method improves the gradient balance across classes, making the model more attentive to minority classes.
Global and Local Mixture Consistency Cumulative learning (GLMC) \cite{GLMC} is an E2E method that proposes a global and local mixture consistency loss to improve the robustness of the feature extractor, and a cumulative head-tail soft label reweighted loss to mitigate the majority class bias problem.
Targeted Supervised Contrastive learning (TSC) \cite{TSC} is a two-stage method that introduces
uniformly distributed targets to each class during training to prevent that the representation space is biased by majority classes.
BCL \cite{BCL} is an E2E method that proposes a balanced CL loss to form a regular simplex configuration in the representation space.
BCL additionally employs the logit compensation approach to learn an unbiased classifier.
Parametric Contrastive learning (PaCo) \cite{PaCo} is an E2E method that introduces learnable class-wise centers into the contrastive loss to rebalance optimization across head and tail classes.
Generalized PaCo (GPaCo) \cite{GPaco} is an E2E method that generalizes PaCo by removing the momentum encoder, yielding a single-encoder parametric contrastive framework with learnable class-wise centers for improving optimization stability and generalization.
Probabilistic Contrastive learning (ProCo)~\cite{Probco} is an end-to-end method that models each class using a von Mises–Fisher distribution, allowing the model to learn from class-level distributions rather than relying solely on mini-batch instances.
By comparing feature representations against these class-wise distributions within the contrastive loss, ProCo achieves effective learning without requiring large mini-batch sizes.

Beyond imbalanced learning, Contrast your Neighbors (CoNe) \cite{cone} is an E2E supervised CL method that supervises each instance using its neighbors in the representation space together with a distributional-consistency regularizer, encouraging similar instances to share similar predictive distributions.

TSC \cite{TSC} and BCL \cite{BCL} aims to promote a geometric equilibrium among representations from different classes---namely, intra-class feature collapse and uniform inter-class mean spacing---under imbalanced datasets.
To ensure balanced consideration of all classes in the presence of class imbalance,
TSC \cite{TSC}, BCL \cite{BCL}, PaCo \cite{PaCo}, and GPaCo \cite{GPaco} incorporate class prototypes as extra samples.
The proposed ECL framework differs from the existing methods in the following key aspects:
\begin{itemize}
\item Beyond promoting geometric equilibrium among representations, 
ECL emphasizes the importance of CC-GE---namely, alignment between classifier weights with class-center representations---an aspect overlooked in the existing supervised CL methods \cite{GLMC,cone,PaCo,GPaco,Probco,BCL,TSC}.
This alignment is crucial for improving generalization under class imbalance.
\item ECL promotes representation geometric equilibrium in a batch-invariant manner by balancing class-average features and prototypes contributions.
In contrast, in TSC~\cite{TSC}, BCL~\cite{BCL}, PaCo~\cite{PaCo}, and GPaCo~\cite{GPaco}, 
the influence of prototypes varies across classes depending on their sample counts, potentially causing the geometric structure to fluctuate with mini-batch composition.
\end{itemize}

\section{Methods} 

This section describes the proposed ECL framework.
Section~\ref{sec:prelim} provides some preliminaries to facilitate the understanding of the proposed method.
Section~\ref{preliminary analysis} presents two key insights that motivate the proposed method.
Section~\ref{ECL} presents the details of the proposed ECL framework.

\subsection{Preliminaries} \label{sec:prelim}

The goal of image classification is to learn a mapping from the input space $\mathcal{X}$ to the target space 
$\mathcal{Y} = \{1, 2, \ldots, C\}$, 
where a training dataset is given by $\mathcal{D} = \{ (\mathbf{x}_i, y_i) :  i \in \mathcal{I} \}$ with $\mathbf{x}_i \in \mathcal{X}$ and $y_i \in \mathcal{Y}$ denoting an input image and its corresponding class label, respectively, and $C$ is the number of classes, and $\mathcal{I} = \{ 1,\ldots,N \}$.
The quality of this mapping critically depends on the geometric structure formed by learned representations and a classifier.
To address this challenge, we propose ECL that promotes three key properties that simplify and stabilize the geometry in the representation space, particularly for handling imbalanced datasets.
Under balanced datasets, such configurations empirically confer important benefits, such as improved generalization and robustness~\cite{neuralcollapse}.
We describe the three key properties below:

\begin{itemize}

\item (a) \textbf{Intra-class feature collapse.}
As training progresses,
all the within-class samples collapse to their class means in a representation space in the end.
See the illustration in Fig.~\ref{fig:properties}(a).

\item (b) \textbf{Inter-class mean geometric equilibrium}.
Distances and angles between each class means become equal forming a symmetric representation structure at the terminal training phase.
See the illustration in Fig.~\ref{fig:properties}(b).

\item (c) \textbf{CC-GE: Classifier-class Center Geometric Equilibrium}.
Under a balanced dataset, although the class means and linear classifiers live in dual vector spaces, they can converge to each other with rescaling \cite{neuralcollapse}.
This self-duality convergence property---referred to as the CC-GE---can be formally written as follows:
\begin{equation}
    \left 
    \| 
    \frac{\mathbf{W}^\top}{\left \| \mathbf{W} \right \|_\mathrm{F}}- \frac{\mathbf{M}}{\| \mathbf{M} \|_\mathrm{F}} 
    \right \|_\mathrm{F} 
    \rightarrow  0,
\label{eq:NC3}
\end{equation}
where $\mathbf{W}^\top := [\mathbf{w}_1, \ldots, \mathbf{w}_C]$ is the classifier weights matrix by stacking the weights of classifier and
$
\mathbf{M} 
:= [\boldsymbol{\upmu}_{1} - \dot{\boldsymbol{\upmu}}, \ldots, \boldsymbol{\upmu}_{c} - \dot{\boldsymbol{\upmu}}]
$
is the class means matrix by stacking the class means,
in which 
$\mathbf{w}_c$ is the $c$th class weight vector of a classifier,
$\boldsymbol{\upmu}_{c}$ is the $c$th class mean vector
and $\dot{\boldsymbol{\upmu}}$ is the average of all class means.
See the illustration in Fig.~\ref{fig:properties}(c).
In analyzing different methods, we use (\ref{eq:NC3}) to evaluate how well the self-duality is induced.

\end{itemize}

\begin{figure}[t!]
\centering
\small\addtolength{\tabcolsep}{-5pt}
\renewcommand{\arraystretch}{1}

\begin{tabular}{ccc}
    \centering
    \includegraphics[height=0.34\columnwidth]{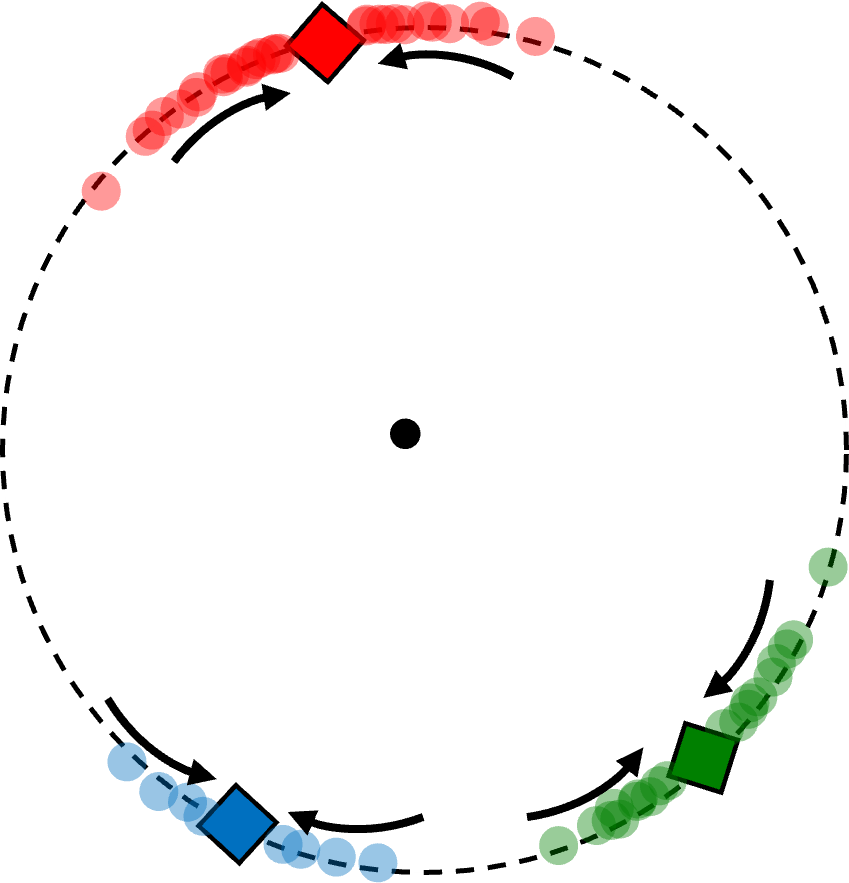} & 
    \includegraphics[height=0.34\columnwidth]{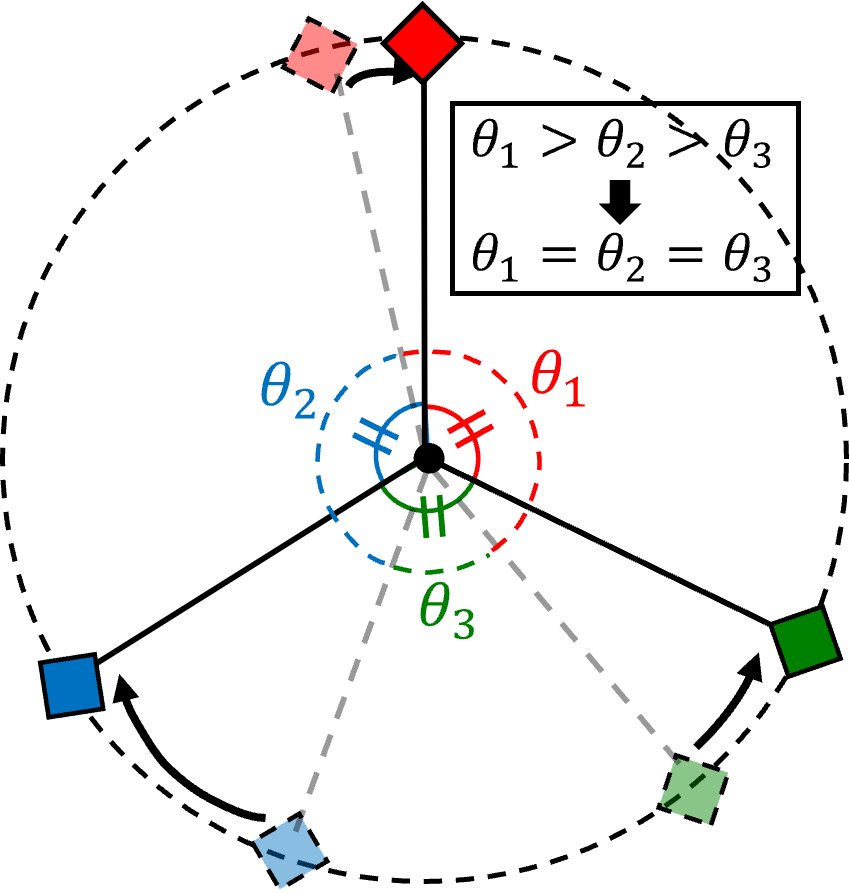} & 
    \includegraphics[height=0.34\columnwidth]{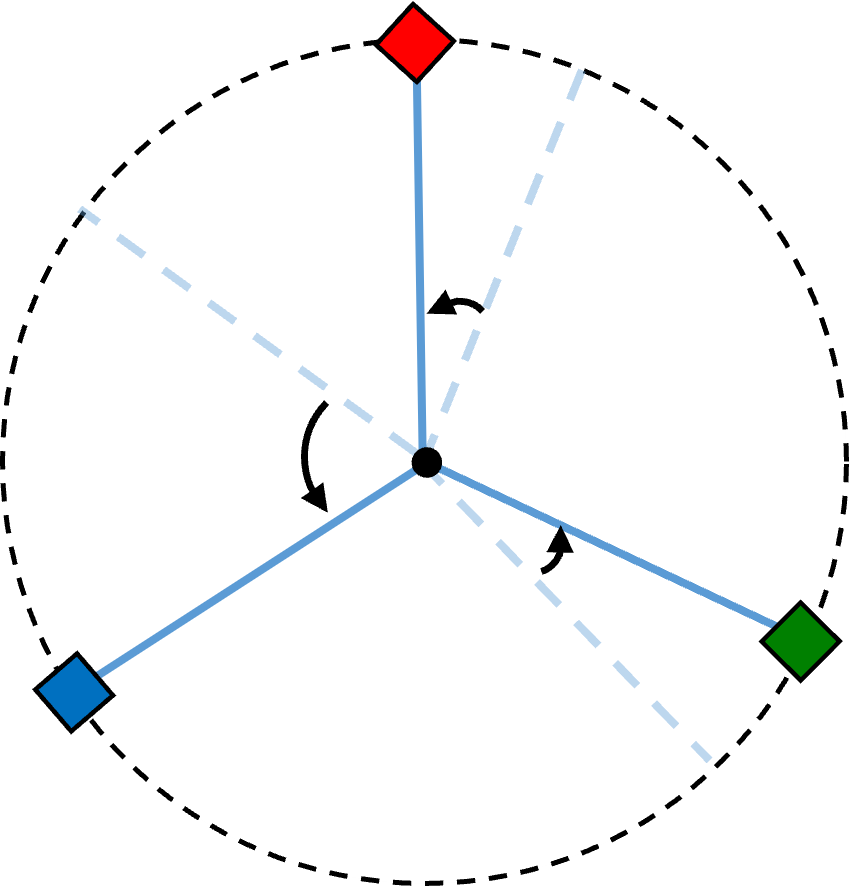} \\
    (a)  & (b) & (c)
\end{tabular}

\caption{
    Illustration of three key properties to simplify geometries via ECL under class imbalance (red and green denote features from different majority classes, and blue denotes features from the minority class).
    (a) {\bfseries Intra-class feature collapse:} all the intra-class representations (denoted by $\mdsmblkcircle$) collapse to their corresponding class means (denoted by $\lozenge$).
    (b) {\bfseries Inter-class mean geometric equilibrium:} distances and angles among class means become equal, forming a symmetric geometric structure. 
    (c) {\bfseries CC-GE: Classifier–class center geometric equilibrium:} the linear classifiers (denoted by skyblue sticks) and class means converge to each other with rescaling.
    }
\label{fig:properties}
\vspace{-1pc}  
\end{figure}

In practice, it is difficult to use (\ref{eq:NC3}) as loss function in training with highly imbalanced datasets,
because samples from minority classes may not be included in a batch so one cannot compute the class means.
In the proposed ECL framework, we extend (\ref{eq:NC3}) to use class prototypes.
We observe that this extension can accelerate training and improve the classification performance (see Section~\ref{sec:mean_vs_proto}).

Combining all these properties,
one can simplify a classifier in the sense that
a classifier converges to the class means with the nearest distance.
For balanced datasets,
supervised CL \cite{supcon} can naturally achieve the three properties above \cite{cp1}.
However, class imbalance inevitably deteriorates the aforementioned three key properties \cite{minoritycollapse,neuralcollapse_imb}:

\noindent
\textbf{Does class imbalance hinder property (a)?} 
In theory, the supervised CL method \cite{supcon} can induce property (a) regardless of whether a dataset is balanced or not \cite{cp1,BCL}.
In practice, however, majority classes often exhibit larger effective intra-class variability due to their overwhelming number of samples.

\noindent
\textbf{How does class imbalance hinder property (b)?}
The theoretical analysis in \cite{cp1, BCL} reveals that data imbalance mainly disrupts the geometric equilibrium of inter-class means because majority classes dominate repulsion terms.
Under class imbalance, each sample moves farther away from majority classes, as the overwhelming number of majority-class instances exerts a disproportionately large repulsive influence on the feature space.
Suppose that in Fig.~\ref{fig:configuration}(b), the red majority class is larger than the green majority class, while the blue minority class is far smaller than both.
In light of the above analysis, this imbalance induces angular separations such that $\theta_1 (\text{green-red}) > \theta_2 (\text{red-blue}) > \theta_3 (\text{blue-green})$, resulting in an asymmetric inter-class mean geometry.
BCL \cite{BCL} is designed to alleviate this issue, but it introduces another limitation that motivates our work (see later Section~\ref{sec:motivation:batch}).

\noindent
\textbf{How does class imbalance hinder property (c)?}
Under a balanced dataset, 
a symmetric geometry of the classifier weights emerges as a consequence of the representation geometric equilibrium satisfying properties (a) and (b),
at the minimal supervised CL loss \cite{cp1}.
However, in practice, and particularly under class imbalance,
it is challenging to satisfy properties (a) and (b) with supervised CL methods,
thereby making it inherently difficult to satisfy property (c) as well.
In other words, even when properties (a) and (b) are partially promoted in an imbalanced learning setup---such as in methods like BCL \cite{BCL}---property (c) does not necessarily emerge as a natural consequence. 
In imbalanced learning, majority classes tend to exhibit larger effective intra-class variability simply due to their overwhelming number of samples.
This broader variability induces larger angular margins for majority classes, causing their classifier weights to drift away from their corresponding class-mean directions.
Conversely, minority classes—with fewer samples and smaller variability—require only modest margins, keeping their weights closer to their means.
These asymmetric angular tendencies distort the decision boundaries, preventing the classifier weights from converging to a symmetric geometric configuration and thereby hindering property (c).
(See later Fig.~\ref{fig:bcl}(a) for empirical evidence.)
Exiting supervised CL methods underestimate property (c), and this limitation motivates our work (see later Section~\ref{sec:motivation:cc-ge}).

\subsection{Motivations: Two key insights from analyzing BCL} \label{preliminary analysis}

This section motivates the proposed method by presenting two key insights from analyzing the predominant supervised CL method for imbalanced datasets, BCL \cite{BCL}.
The main idea of BCL is to include class prototypes in every mini-batch so that all classes are consistently considered even under imbalanced datasets.
BCL encourages the learned representations to collapse toward the vertices of a regular simplex,
a highly symmetric configuration in which all class means are equally spaced.
In the following subsections, we derive two key insights from analyzing BCL:
\begin{itemize}
\item The contribution of prototypes depends on batch composition in forming the  representation geometric equilibrium, which may impede performance improvements particularly for majority classes.
\item The classifier weights and class center representations may become misaligned, leading to poor generalization performance.
\end{itemize}
These naturally extend to other supervised CL methods for imbalanced datasets.

\subsubsection{Prototype contribution depending on batch composition}
\label{sec:motivation:batch}

The BCL loss is given by the following~\cite{BCL}:
\begingroup
\setlength{\thinmuskip}{1.5mu}
\setlength{\medmuskip}{2mu plus 1mu minus 2mu}
\setlength{\thickmuskip}{2.5mu plus 2.5mu}
\ea{
\mathcal{L}_{\text{BCL}}
:= &
 -  \frac{1}{|\mathcal{B}|}  \sum_{i \in \mathcal{B}} \frac{1}{|\mathcal{P}(i)|+1}
\sum\limits_{j \in  {\mathcal{P}(i) \cup \{ p_{y_i} \}} }
\nn
\\
& 
\log
 \frac{\exp( \mathbf{z}_{i}\cdot \mathbf{z}_{j}  / \tau )}{\sum\limits_{c \in \mathcal{Y} }\frac{1}{|\mathcal{B}_c|+1}\sum\limits_{k\in \mathcal{B}_c \cup \{ p_{c} \} } \exp(\mathbf{z}_{i}\cdot \mathbf{z}_{k} / \tau)    },
\label{eq:BCL}
}
\endgroup
where $\mathcal{B}$ is the index set of all training samples in a CL mini-batch, 
$i$ denotes the index of an anchor sample,
$\mathcal{P}(i)$ represents the index set of all positive samples associated with the anchor $i$ within a CL mini-batch, excluding the anchor itself,
${p}_{c}$ denotes the index of the $c$th class prototype,
$\mathcal{B}_{c}$ represents the set of all samples of the $c$th class in a CL mini-batch, 
and $|\cdot|$ denotes the number of samples in a set.
For the $i$th anchor sample,
its feature representation and class label are $\mathbf{z}_{i}$ and $y_i$, respectively.
The temperature parameter $\tau$ adjusts the scale of similarity scores before the exponential operation,
effectively controlling the contrastive strength between an anchor and its positive or negative samples.
BCL performs class-wise averaging in the denominator of \R{eq:BCL}, where the features of samples within each class are averaged. 
In addition, class prototypes are included in every mini-batch.

The behavior of BCL \R{eq:BCL} (with $\tau = 1$) is analyzed through the following bound~\cite{BCL}:
\begingroup
\setlength{\thinmuskip}{1.5mu}
\setlength{\medmuskip}{2mu plus 1mu minus 2mu}
\setlength{\thickmuskip}{2.5mu plus 2.5mu}
\fontsize{9.5pt}{11.4pt}\selectfont
\ea{
&
\mathcal{L}_{\text{BCL}}
\ge
\sum_{i\in \mathcal{B}}
\log\Bigg(
1+(C-1)\exp\Bigg(
\nn
\\
&
\underbrace{\frac{1}{C-1}\sum_{c \in \mathcal{Y} \backslash {y_i} }
\frac{1}{|\mathcal{B}_c|+1}\sum_{k\in \mathcal{B}_c  \cup \{ p_{c} \} }\mathbf z_i \cdot \mathbf z_k}_{\text{repulsion term}}
-
\underbrace{\frac{1}{|\mathcal{P}(i)|+1}\sum_{j \in {\mathcal{P}(i) \cup \{ p_{y_i} \}} } \mathbf z_i \cdot \mathbf z_j}_{\text{attraction term}}
\Bigg)
\Bigg),
\label{eq:BCL:bound}
}
\endgroup

This shows that the class-wise averaging operator in the denominator of \R{eq:BCL} mitigates the dominance of majority classes in the repulsion term of \R{eq:BCL:bound}. 
Without it, majority classes dominate the repulsion term in \R{eq:BCL:bound}, resulting in an asymmetric geometry.
In addition, incorporating class prototypes into every mini-batch in \R{eq:BCL} ensures that inter-class relations are consistently preserved, even when minority classes are more likely to be absent from a batch.

\emph{However,} each class prototype is treated as only a single additional sample and is averaged with the corresponding instances. 
As a result, its contribution diminishes as the number of samples from that class increases within a batch, in both the attraction and repulsion terms of \R{eq:BCL:bound}.
Consequently, the loss in \R{eq:BCL} becomes sensitive to batch composition. 
As the number of instances from a class increases within a batch, the contribution of its prototype diminishes, 
causing variations in batch-wise class means and shifting class representations across batches rather than forming stable geometric relations.
This may impede the performance improvement particularly for majority classes, as the contribution of prototypes from majority classes is weak (see empirical evidence later in Section~\ref{sec:comparison_cl}).

\subsubsection{Misalignment between classifier weights and class center representations}
\label{sec:motivation:cc-ge}

\begin{figure}[t]
\centering
\small\addtolength{\tabcolsep}{-5pt}
\renewcommand{\arraystretch}{1}

\begin{tabular}{cc}
    \centering
    \includegraphics[width=0.5\columnwidth]{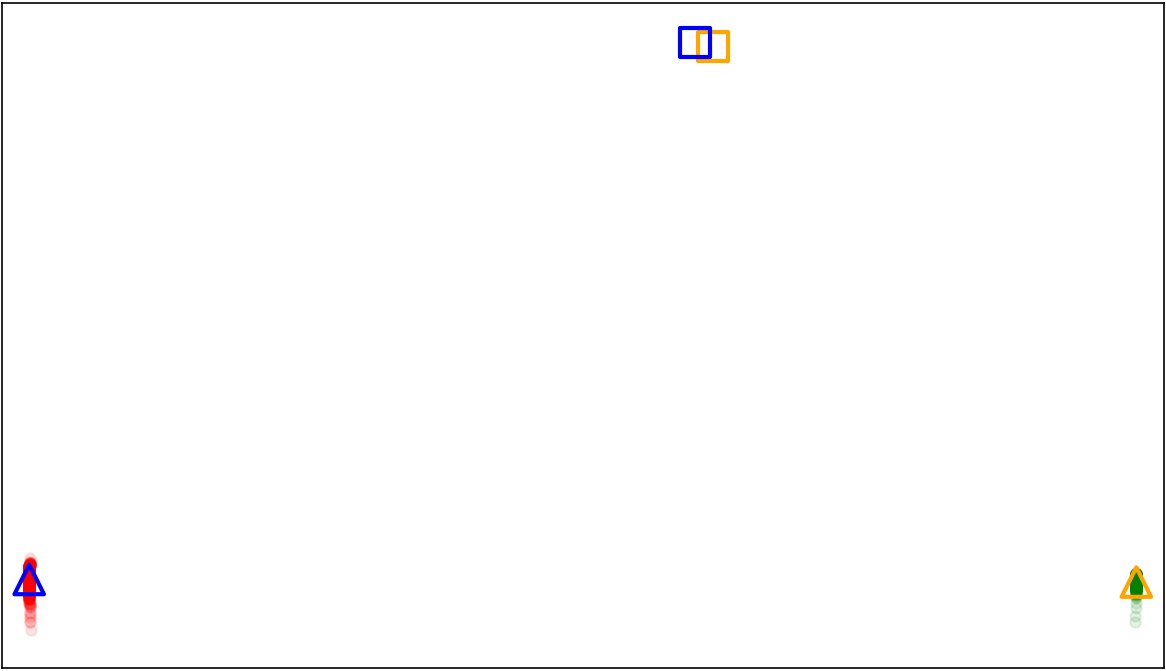} & 
    \includegraphics[width=0.5\columnwidth]{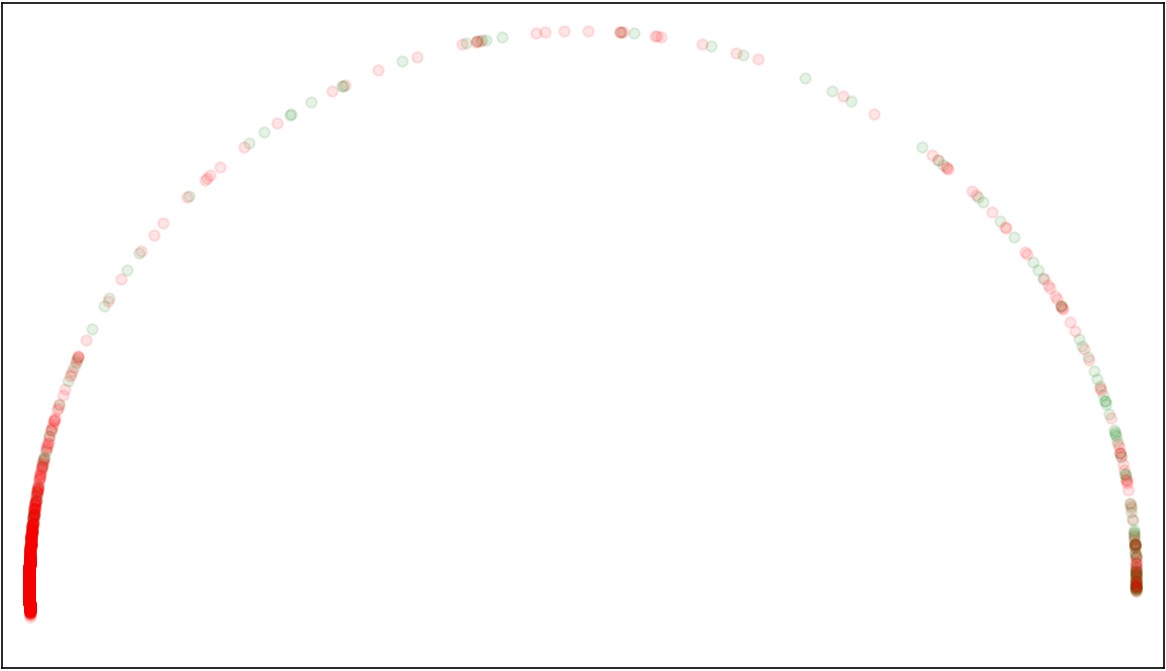} 
    \\
    (a) Training set & (b) Test set \\
\end{tabular}
    \caption{Visualization of feature distribution on the unit sphere obtained by BCL with the LCCT dataset zooming in to the upper half for better visualization.
    The $\triangle$ and $\square$ symbols indicate the class means and classifier weights, while the red and green points represent samples from the majority and minority classes, respectively.
    The results show that the CC-GE property is \emph{not} induced.
    }
\label{fig:bcl}
\vspace{-1pc} 
\end{figure}

Although BCL is designed to promote the geometric equilibrium among representations---namely, properties (a) and (b) in Section~\ref{sec:prelim}---it lacks a mechanism to establish a symmetric geometric configuration for the classifier, i.e., the CC-GE property (c) in Section~\ref{sec:prelim}.
We argue that this missing component plays a critical role in its limited generalization performance under class imbalance.

As discussed in Section~\ref{sec:prelim},
under class imbalance, it becomes difficult even for BCL to fully promote properties (a) and (b) in the representation space, 
thereby making it inherently challenging for property (c) to emerge.  
In particular, 
the imbalance in effective intra-class variability across classes  
induces asymmetric angular margins, 
causing classifier weights to drift away from their 
corresponding class-mean directions and preventing the formation of a symmetric geometric configuration.

The above rationale is supported by our observation.
BCL demonstrated poor generalization on imbalanced LCCT test sets; see Fig.~\ref{fig:bcl}(b).
As illustrated in Fig.~\ref{fig:bcl}(a), 
while the representation geometric equilibrium is \emph{partially} promoted, 
the classifier nevertheless does not form a symmetric geometric configuration—i.e., the CC-GE is \emph{not} induced---with CC-GE measures of $0.51$ and $0.73$ for the LCCT training and test sets, respectively.
In particular, the majority class exhibits larger effective intra-class variability than the minority class, which pushes the decision boundary toward the minority side and results in the formation of asymmetric classifier weights.
Our rationale and observation correspond well with the theoretical claim in \cite{minoritycollapse,neuralcollapse_imb} that imbalanced learning deteriorates inducing the key properties in Section~\ref{sec:prelim} and generalization capability.

If the class representations, class means, and classifier weights are jointly in geometric equilibrium---i.e., if all three properties in Section~\ref{sec:prelim} are simultaneously satisfied---the classification problem reduces to a nearest-centroid–like decision rule that improves generalization. 
In particular, when the last-layer classifier weights are well aligned with their corresponding class means (i.e., when the CC-GE property is induced), classification simply amounts to selecting the class whose weight vector is most closely aligned with the input representation, analogous to nearest-centroid classification.

In the next section, we propose the ECL framework that can address the two aforementioned limitations.
Specifically, the proposed BC-ECL formulation in Section~\ref{sec:BC-ECL} mitigates the batch-dependent prototype contribution in forming the representation geometric equilibrium,
while the proposed CC-GE promoting loss in Section~\ref{sec:ccge} resolves the classifier–class center misalignment.

\subsection{ECL: Equilibrium Contrastive Learning} 
\label{ECL}

\begin{figure*}[ht!]
    \centering
    \begin{tikzpicture}
        \node[anchor=south west, inner sep=0] (image) at (0,0) {\includegraphics[width=0.95\textwidth ]{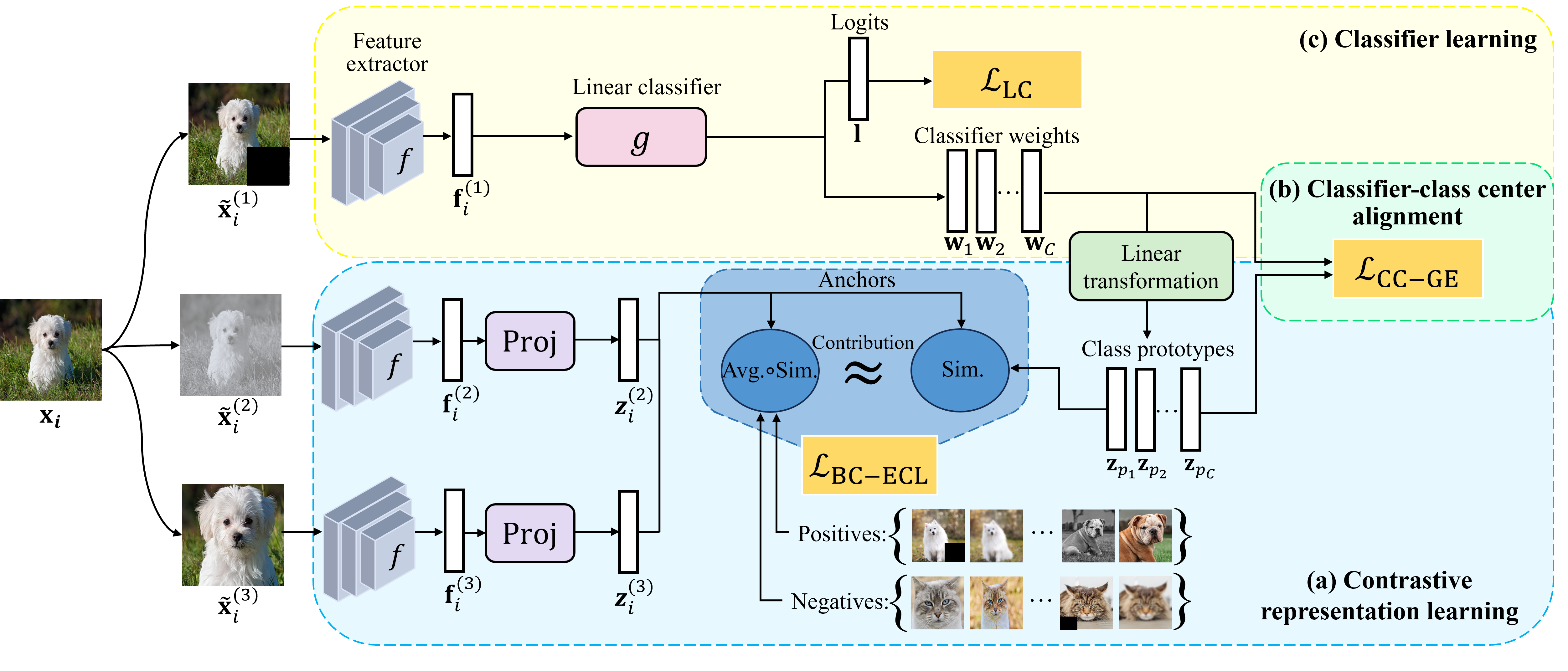}};

    \end{tikzpicture}
    \vspace{-0.5pc} 
     \caption{
    The overall ECL architecture. The proposed ECL consists of three core modules, 
    (a) contrastive representation learning with proposed $\mathcal{L}_{\text{BC-ECL}}$ in Section~\ref{sec:BC-ECL}, 
    (b) classifier-class center alignment with proposed $\mathcal{L}_{\text{CC-GE}}$ in Section~\ref{sec:ccge}, and
    (c) classifier learning with $\mathcal{L}_{\text{LC}}$ introduced in Section~\ref{sec:lc}.
    \textbf{(a)} The proposed $\mathcal{L}_{\text{BC-ECL}}$ formulation equalizes the contribution between class-average features and prototypes in promoting the intra-class feature collapse and geometric equilibrium among class means.
    We utilize the linearly transformed classifier weights as class prototypes, and $\ell _{2}$-normalize the representations $\{\mathbf{z}_{i}^{(2)}, \mathbf{z}_{i}^{(3)} : \forall i \}$.
    The symbols $\text{Sim.}$ and $\text{Avg.}$ denote a similarity computation between two vectors and an averaging operator, respectively.
    \textbf{(b)} The proposed $\mathcal{L}_{\text{CC-GE}}$ loss aligns the classifier weights and class prototypes to promote geometric equilibrium between classifier weights and class center representations.
    \textbf{(c)} For classifier learning, we use a logit-compensated variant of the cross-entropy loss, $\mathcal{L}_{\text{LC}}$.}
    \label{fig:arch}
\vspace{-1pc} 
\end{figure*}

Fig.~\ref{fig:arch} illustrates the overall E2E framework of ECL.
ECL consists of three core modules: 
contrastive representation learning in Fig.~\ref{fig:arch}(a), 
classifier-class center alignment in Fig.~\ref{fig:arch}(b), and 
classifier learning in Fig.~\ref{fig:arch}(c).
The ECL framework uses the following three training losses in each core module: 
\textit{1)} BC-ECL loss that promotes intra-class feature collapse and inter-class mean geometric equilibrium with batch-invariant prototype contributions (Section~\ref{sec:BC-ECL}),
\textit{2)} the CC-GE promoting loss that aligns classifier weights and class center representations (Section~\ref{sec:ccge}), and
\textit{3)} classifier learning loss that can address the bias issue caused by data imbalance (Section~\ref{sec:lc}).

ECL uses the following three major components:
\begin{itemize}
    \item Data augmentation $\widetilde{\mathbf{x}} = \mathrm{Aug}(\mathbf{x})$: Given a batch of samples $\mathbf{x}$, we generate three random augmentation samples $\widetilde{\mathbf{x}}^{(1)}$, $\widetilde{\mathbf{x}}^{(2)}$, and $\widetilde{\mathbf{x}}^{(3)}$, similar to \cite{BCL}.
    In the CL branch (Fig.~\ref{fig:arch}(a)), we use $\widetilde{\mathbf{x}}^{(2)}$ and $\widetilde{\mathbf{x}}^{(3)}$, so the size of each CL mini-batch is $|\mathcal{B}| = 2B$, where $B$ is the number of samples in a mini-batch.
    In the classifier learning branch (Fig.~\ref{fig:arch}(c)), we use $\widetilde{\mathbf{x}}^{(1)}$.

    \item Feature extractor $\mathbf{f}=f(\widetilde{\mathbf{x}})$: It extracts features $\mathbf{f}\in \mathbb{R}^{F}$ from $\widetilde{\mathbf{x}}$, where $F$ is the dimension of features. 
    In the classifier learning branch (Fig.~\ref{fig:arch}(c)), we use $\mathbf{f}$.
    In the CL branch (Fig.~\ref{fig:arch}(a)), following the prevailing convention in CL, we introduce an additional projector $\mathrm{Proj}$ to produce features $\mathbf{z}$ for CL: $\mathbf{z} = \mathrm{Proj}(\mathbf{f})$, where $\mathbf{z} \in \mathbb{R}^{F}$.

    \item Classifier $\mathbf{l} = g(\mathbf{f})$: The classifier maps features $\mathbf{f}$ to the logits $\mathbf{l} \in \mathbb{R}^{C}$, where $C$ is the number of classes.
    We use a linear classifier, i.e.,  $\mathbf{l} = \mathbf{W} \mathbf{f}+\mathbf{b}$, where $\mathbf{W} \in \mathbb{R}^{C \times F}$ is defined as in (\ref{eq:NC3})  and $\mathbf{b} \in \mathbb{R}^C$ is the bias.
\end{itemize}

\subsubsection{BC-ECL:
Balanced Class-wise ECL} 
\label{sec:BC-ECL}

To overcome this limitation arising from batch-dependent prototype contributions, 
we propose a BC-ECL formulation as follows:
\begingroup
\allowdisplaybreaks
\setlength{\thinmuskip}{1.5mu}
\setlength{\medmuskip}{2mu plus 1mu minus 2mu}
\setlength{\thickmuskip}{2.5mu plus 2.5mu}
\ea{
\mathcal{L}_{\text{BC-ECL}} 
:= 
& 
- \frac{1}{|\mathcal{B}|} \sum_{i \in \mathcal{B}} \frac{1}{|\mathcal{P}(i)|} 
\sum\limits_{j\in \mathcal{P}(i)}
\nn
\\
&
\log
 \frac{\exp((\mathbf{z}_{i}\cdot \mathbf{z}_{j} +  \mathbf{z}_{i}\cdot \mathbf{z}_{p_{y_i}} )/ (2\tau) )}{\sum\limits_{c \in \mathcal{Y}} \bigg( \frac{1}{|\mathcal{B}_c|}\sum\limits_{k\in \mathcal{B}_c} \exp(\mathbf{z}_{i}\cdot \mathbf{z}_{k} / \tau ) \bigg) + \exp(\mathbf{z}_{i}\cdot \mathbf{z}_{p_c} / \tau)}.
\label{eq:ECL}
}
\endgroup
Different from BCL \R{eq:BCL}, 
the denominator of the proposed loss \R{eq:ECL} first averages the instances within each class and then takes the sum of the class-average and the corresponding prototype.
In its numerator, we take the average of each positive sample and its corresponding prototype.

The proposed BC-ECL loss \R{eq:ECL} simultaneously balances \textit{1)} the contributions of class-average features and prototypes within each class, and \textit{2)} the overall influence across majority and minority classes, thereby yielding stable and balanced representations across batches.
In addition, \R{eq:ECL} promotes that all the within-class representations collapse to their class means.
We support this claim by deriving its bound whose key idea is to decouple the attraction and repulsion effects~\cite{cp1}:

\begin{theorem}\label{thm:ecl:bound}
The loss in \R{eq:ECL} with $\tau = 1$ is bounded by
\begingroup
\allowdisplaybreaks
\setlength{\thinmuskip}{1.5mu}
\setlength{\medmuskip}{2mu plus 1mu minus 2mu}
\setlength{\thickmuskip}{2.5mu plus 2.5mu}
\ea{
\mathcal{L}_{\text{BC-ECL}}
\ge & 
\frac{1}{|\mathcal{B}|}
\sum_{i\in \mathcal{B}}
\log \Bigg(
1+(C-1)
\exp \Bigg(
\nn
\\
&
\underbrace{\frac{1}{2(C-1)}\sum_{c \in \mathcal{Y} \setminus \{y_i\}}
\bigg( \frac{1}{|\mathcal{B}_c|}\sum_{k\in \mathcal{B}_c}\mathbf{z}_i \cdot \mathbf{z}_k \bigg) 
+ 
\mathbf{z}_i \cdot \mathbf{z}_{p_c}}_{\text{repulsion term}}
\nn
\\
&
-
\underbrace{
\frac{1}{2}
\Bigg(
\bigg(
\frac{1}{|\mathcal{P}(i)|}
\sum_{j\in \mathcal{P}(i)}\mathbf{z}_i \cdot \mathbf{z}_j 
\bigg)
+
\mathbf{z}_i \cdot \mathbf{z}_{p_{y_i}}
\Bigg)
}_{\text{attraction term}}
\Bigg)
\Bigg).
\label{eq:ECL:bound}
}
\endgroup
\end{theorem}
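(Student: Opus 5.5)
The plan is to work anchor by anchor: write the per-anchor loss as $\log$(denominator) minus an attraction term, and then lower-bound the denominator using Jensen's inequality together with the AM--GM inequality. Throughout, $\tau=1$, the representations are $\ell_2$-normalized (so $\mathbf{z}_i\cdot\mathbf{z}_i=1\ge \mathbf{z}_i\cdot\mathbf{z}_k$), and I assume $\mathcal{B}_c\neq\emptyset$ wherever the class average appears, as implicitly assumed in \R{eq:ECL}. Write $a_{ik}:=\mathbf{z}_i\cdot\mathbf{z}_k$.

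\textbf{Step 1 (rewrite the loss).} For a fixed anchor $i$, the denominator
\[
D_i:=\sum_{c\in\mathcal{Y}}\Big(\tfrac{1}{|\mathcal{B}_c|}\textstyle\sum_{k\in\mathcal{B}_c}e^{a_{ik}}+e^{a_{ip_c}}\Big)
\]
does not depend on $j$. Averaging over $j\in\mathcal{P}(i)$ therefore turns the per-anchor loss into exactly $\log D_i - A_i$, where $A_i:=\tfrac12\big(\tfrac{1}{|\mathcal{P}(i)|}\sum_{j\in\mathcal{P}(i)}a_{ij}+a_{ip_{y_i}}\big)$ is the attraction term in \R{eq:ECL:bound}.

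\textbf{Step 2 (per-class lower bound).} For every class $c$, Jensen's inequality gives $\tfrac{1}{|\mathcal{B}_c|}\sum_{k\in\mathcal{B}_c}e^{a_{ik}}\ge \exp\big(\tfrac{1}{|\mathcal{B}_c|}\sum_{k\in\mathcal{B}_c} a_{ik}\big)$. The AM--GM inequality $e^x+e^y\ge 2e^{(x+y)/2}$ then yields
\[
\tfrac{1}{|\mathcal{B}_c|}\textstyle\sum_{k\in\mathcal{B}_c}e^{a_{ik}}+e^{a_{ip_c}}\ \ge\ 2e^{r_c},
\qquad r_c:=\tfrac12\Big(\tfrac{1}{|\mathcal{B}_c|}\textstyle\sum_{k\in\mathcal{B}_c}a_{ik}+a_{ip_c}\Big).
\]
For the negative classes $c\neq y_i$, a second application of Jensen to the sum gives $\sum_{c\neq y_i}2e^{r_c}\ge 2(C-1)\exp(R_i)$, where $R_i:=\tfrac{1}{C-1}\sum_{c\neq y_i}r_c$ is precisely the repulsion term of \R{eq:ECL:bound}.

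\textbf{Step 3 (own class; the main obstacle).} The delicate point is the anchor's own class. Here $\mathcal{B}_{y_i}=\mathcal{P}(i)\cup\{i\}$ contains the anchor itself, whereas the attraction term averages only over $\mathcal{P}(i)$. I would resolve this with normalization. Since $a_{ii}=1\ge a_{ij}$ for every $j$, the average of $a_{ik}$ over $\mathcal{B}_{y_i}$ is a convex combination of $1$ and the average over $\mathcal{P}(i)$, so it is at least $\tfrac{1}{|\mathcal{P}(i)|}\sum_{j\in\mathcal{P}(i)}a_{ij}$. Consequently $r_{y_i}\ge A_i$, and the own-class contribution is at least $2e^{A_i}$.

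\textbf{Step 4 (combine).} Steps 2 and 3 give $D_i\ge 2\big(e^{A_i}+(C-1)e^{R_i}\big)$. Hence
\[
\log D_i-A_i\ \ge\ \log 2+\log\big(1+(C-1)e^{R_i-A_i}\big)\ \ge\ \log\big(1+(C-1)e^{R_i-A_i}\big).
\]
Averaging over $i\in\mathcal{B}$ gives \R{eq:ECL:bound}. The extra $\log 2$ slack is simply dropped, since the paper's bound is stated without it.
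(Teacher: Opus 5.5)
Your proof is correct and follows the paper's argument essentially step for step: you rewrite each anchor's loss as $\log D_i - A_i$, bound each class term below by $2e^{r_c}$ via AM--GM and Jensen (the paper applies the two in the opposite order, with the same result), apply Jensen across the $C-1$ negative classes, and drop the $\log 2$. The one real difference is your Step~3: the paper simply writes the own-class term as an average over $\mathcal{P}(i)$, which implicitly excludes the anchor from $\mathcal{B}_{y_i}$, whereas you treat the literal definition $\mathcal{B}_{y_i}=\mathcal{P}(i)\cup\{i\}$ using $\mathbf{z}_i\cdot\mathbf{z}_i=1\ge\mathbf{z}_i\cdot\mathbf{z}_j$, which closes a gap the paper leaves unaddressed.
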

\begin{proof}
See Section~S1 in the Supplementary Material.
\end{proof}

Theorem~\ref{thm:ecl:bound} leads to the following geometric interpretations, revealing how BC-ECL \R{eq:ECL} shapes the feature representation space:

\begin{itemize}
\item \textbf{Balanced repulsion and attraction.} 
    BC-ECL equally weights the class-mean features and class prototypes in both the repulsion and attraction terms, preventing prototype dilution and ensuring balanced class-wise contributions.
    
\item \textbf{Intra-class feature collapse.}
    Each anchor representation is simultaneously pulled toward its positive samples and its class prototype, encouraging intra-class feature collapse.
    
\item \textbf{Inter-class mean geometric equilibrium.}
    An anchor is pushed away from both the mean features and prototypes of non-target classes, maintaining an inter-class geometric equilibrium with well-separated class means.
    
\item \textbf{Batch-invariant geometry.}
    By equalizing contributions between class means and prototypes, BC-ECL reduces sensitivity to batch composition and maintains consistent geometric relations across batches.

\end{itemize}

\noindent
Given an imbalanced dataset, minimizing (\ref{eq:ECL}) encourages all within-class representations to collapse toward their corresponding class means and ultimately to the vertices of a regular simplex, while preserving a stable inter-class geometric equilibrium across batches.
(Note that given a balanced dataset, 
when one obtains the minimum from supervised CL loss \cite{supcon}, 
representations from each class collapse to the vertices of a regular simplex \cite{cp1,neuralcollapse}.)

Compared to the bound in \R{eq:BCL:bound} for BCL, the bound in \R{eq:ECL:bound} for BC-ECL exhibits a weaker contrastive effect due to the additional $1/2$ factor applied to both attraction and repulsion terms.
This attenuation can be offset by reducing the temperature $\tau$ in \R{eq:ECL}, which strengthens the contrastive sharpness despite the $1/2$ factor.
We compare BCL (\ref{eq:BCL}) and BC-ECL (\ref{eq:ECL}) in terms of their performance on classes grouped into many-, medium-, and few-shot regimes (see later Section~\ref{sec:comparison_cl}).

Given that classifier weights are co-linear with the vertices of the simplex \cite{ cp1,neuralcollapse}, we obtain the class prototypes by passing the classifier weights through a linear transformation.
This differs from \cite{BCL} that calculates class prototypes by applying a non-linear multi-layer perceptron (MLP) model to the classifier weights.
We compare these two forms of prototypes in terms of training time and classification performance (see later Section~\ref{sec:mean_vs_proto}).

The computational complexity of the proposed BC-ECL formulation in \R{eq:ECL} is $\cO(|\mathcal{B}| (|\mathcal{B}| + C) F)$,
which is comparable to that of BCL in \R{eq:BCL}.
Computing and incorporating class prototypes only marginally increases the overall cost over the supervised CL loss \cite{supcon} whose complexity is $\cO(|\mathcal{B}|^2 F)$.
When $B \gg C$, the computational complexity of BC-ECL becomes $\cO(|\mathcal{B}|^2 F)$ that is the same order as the supervised CL loss \cite{supcon}.

\subsubsection{Promoting CC-GE: Classifier-class Center Geometric Equilibrium}
\label{sec:ccge}

The proposed BC-ECL formulation in \R{eq:ECL} can promote the representation geometric equilibrium (see the first two properties in Section~\ref{sec:prelim}) in a batch-invariant manner.
However, in imbalanced learning, BC-ECL in \R{eq:ECL} may \textit{not} promote the CC-GE property (see the third property in Section~\ref{sec:prelim}), which, in turn, deteriorates generalization.
See our corresponding observations in Section~\ref{sec:motivation:cc-ge}.

Our aim is to establish a symmetric geometric configuration in both representation and classifier to improve the generalization performance in imbalanced classification.
We propose a new geometric loss that explicitly promotes the CC-GE property, 
by aligning the class center representations, i.e., class prototypes, and classifier weights:
\begin{equation}
\mathcal{L}_{\text{CC-GE}} :=     
    \left 
    \| 
    \frac{\mathbf{W}^\top}{\left \| \mathbf{W} \right \|_\mathrm{F}}- \frac{\mathbf{P}}{\| \mathbf{P} \|_\mathrm{F}} 
    \right \|_\mathrm{F}^2, 
    \label{eq:CC-GE}
\end{equation}
where the the classifier weights matrix $\mathbf{W}$ is defined as in (\ref{eq:NC3}),
and 
$\mathbf{P} \in
\mathbb{R}^{F \times C}$
is the class prototypes matrix defined by stacking the class prototypes, specifically, $\mathbf{P} := \big[ \mathbf{z}_{{p}_{1}}, \ldots, \mathbf{z}_{{p}_{C}} \big]$.

Note in (\ref{eq:CC-GE}) that
as prototypes,
we apply a linear transformation, i.e., single-layer perceptron model,
to class-specific weights of a linear classifier $\{ \mathbf{w}_1, \ldots, \mathbf{w}_C \}$ to project them into the representation space,
rather than the mean representations.
If the classifier weights are co-linear with the vertices of a simplex to which the classes collapse as achieved in balanced supervised CL \cite{cp1,neuralcollapse}, 
the prototypes obtained through the linearly transformed classifier weights are also co-linear with the vertices of a regular simplex.

Under class imbalance, without minimizing \R{eq:CC-GE}, classifier weights do not maintain symmetric geometric relations; see Section~\ref{sec:motivation:cc-ge}.
By minimizing  (\ref{eq:CC-GE}), one can simplify the geometric configuration of both the representation space and the classifier, effectively making the classifier behave like a nearest-centroid classifier.
In other words, the classifier no longer needs to learn complex decision boundaries; it simply determines which classifier weight is most closely aligned with each instance feature.
This simpler decision rule can improve generalization, particularly under class imbalance.

The computational complexity of the proposed CC-GE promoting loss in \R{eq:CC-GE} is $\cO(FC)$.
When combined with the complexity of BC-ECL in \R{eq:ECL}, the overall cost becomes $\cO(|\mathcal{B}| (|\mathcal{B}| + C) F)$, since in general $B \gg 1$ and $B \geq C$.
When $B \gg C$, this becomes $\cO(|\mathcal{B}|^2 F)$ that is the same order as the supervised CL loss \cite{supcon}.

\subsubsection{Logit compensation loss} \label{sec:lc}

In imbalanced learning, bias typically exists in the logits.
Following \cite{BCL},
we use the logit compensation (LC) approach \cite{BCL,lc,lc2,lc3} to address the bias issue caused by data imbalance:
\begingroup
\allowdisplaybreaks
\setlength{\thinmuskip}{1.5mu}
\setlength{\medmuskip}{2mu plus 1mu minus 2mu}
\setlength{\thickmuskip}{2.5mu plus 2.5mu}
\be{
\mathcal{L}_{\text{LC}}
:=
-\frac{1}{|\mathcal{B}|}\sum\limits_{i \in \mathcal{B}} \sum\limits_{c \in \mathcal{Y}} \mathbb{1}_{c = y_i} \cdot \log \! \bigg( \frac{\exp(g(\mathbf{f}_i)_{c}+\log{{h}_{c}})} {\sum_{c' \in \mathcal{Y}} \exp(g(\mathbf{f}_i)_{c'}+\log{{h}_{c'}}) } \bigg),
\label{eq:lc}
}
\endgroup
where ${h}_{c}=|\mathcal{B}_{c}| / |\mathcal{B}| \in [0,1]$ denotes the frequency of the $c$th class, and
where $\mathbb{1}_{c = y_i}$ denotes the indicator function that equals $1$ if $c = y_i$ and $0$ otherwise.
This can address the bias problem by adding $\log{{h}_{c}}$ to the logits.

\begin{algorithm}[t]
\caption{Proposed ECL}
\label{alg:ECL}
\SetKwInOut{Require}{Require}
\SetKwInOut{Initialize}{Initialize}

\Require{
Training set $\mathcal{D}$, number of epochs $\textit{Epoch}$, batch size $B$, loss balancing parameters $\{ \lambda_{\text{BC-ECL}}, \lambda_{\text{CC-GE}}, \lambda_{\text{LC}} \}$
}

\Initialize{
Parameters $\boldsymbol{\uptheta}$ of feature extractor $f$, parameters $\boldsymbol{\upphi}$ of projector $\text{Proj}$, parameters $( \mathbf{W}, \mathbf{b} )$ of linear classifier $g$, transformation matrix $\mathbf{T}$ mapping $\mathbf{W}$ to class prototypes $\mathbf{P}$
}
\For{$\textit{epoch} = 1$ \KwTo $\textit{Epoch}$}{
    
    Shuffle the dataset indices $\mathcal{I}$

    Partition the shuffled indices into mini-batches of size $B$: 
    $\mathcal{I}_B^{(1)}$, $\mathcal{I}_B^{(2)}$, \ldots, $\mathcal{I}_B^{(M)}$, where $M \approx \lceil N /B \rceil$
    
  \For{$\textit{iter} = 1$ \KwTo $M$}{

    $\mathcal{I}_B = \mathcal{I}_B^{(\textit{iter})}$
    \Comment{Mini-batch selection}

    $\mathcal{Z} := \{ \mathbf{z}_k^{(j)} = \mathrm{Proj}_{\boldsymbol{\upphi}}( f_{\boldsymbol{\uptheta}} ( \text{Aug}_j (\mathbf{x}_{k}) ) :k \in \mathcal{I}_B, j =2,3 \}$
    \Comment{Representations for BC-ECL \R{eq:ECL}}

    $\mathbf{P} = \mathbf{T} \mathbf{W}^\top$
    \Comment{Prototypes for BC-ECL \R{eq:ECL} and CC-GE promotion \R{eq:CC-GE}}
        
    $\mathcal{F} := \{ \mathbf{l}_k = g_{\mathbf{W},\mathbf{b}}( f_{\boldsymbol{\uptheta}} ( \text{Aug}_j (\mathbf{x}_{k})) : k \in \mathcal{I}_B, j = 1 \}$
    \Comment{Logits for classifier learning \R{eq:lc}}

    Compute $\nabla_{\boldsymbol{\uptheta}, \boldsymbol{\upphi}, \mathbf{W}, \mathbf{T}} \mathcal{L}_{\text{BC-ECL}}$ with $\{ \mathbf{z}_i(\boldsymbol{\uptheta}, \boldsymbol{\upphi}) \in \mathcal{Z} \}$ and $\mathbf{P}( \mathbf{W}, \mathbf{T} )$
    \Comment{Gradients for \R{eq:ECL}}

    Compute ${\nabla_{\mathbf{W}, \mathbf{T}}\mathcal{L}}_{\text{CC-GE}}$ with $\mathbf{P}( \mathbf{W}, \mathbf{T} )$ and $ \mathbf{W}$
    \Comment{Gradients for \R{eq:CC-GE}}

    Compute $\nabla_{\boldsymbol{\uptheta}, \mathbf{W}, \mathbf{b}} \mathcal{L}_{\text{LC}}$ with $\{ \mathbf{l}_i (\boldsymbol{\uptheta}, \mathbf{W}, \mathbf{b}) \in \mathcal{F} \}$ and $\{ y_i \in \mathcal{I}_B \}$
    \Comment{Gradients for \R{eq:lc}}

    Update $\{ \boldsymbol{\uptheta}, \boldsymbol{\upphi}, \mathbf{W}, \mathbf{b}, \mathbf{T} \}$ with 
    $\lambda_{\text{BC-ECL}} \cdot \nabla_{\boldsymbol{\uptheta}, \boldsymbol{\upphi}, \mathbf{W}, \mathbf{T}} \mathcal{L}_{\text{BC-ECL}} 
    +
    \lambda_{\text{CC-GE}} \cdot {\nabla_{\mathbf{W}, \mathbf{T}}\mathcal{L}}_{\text{CC-GE}}
    + 
    \lambda_{\text{LC}} \cdot \nabla_{\boldsymbol{\uptheta}, \mathbf{W}, \mathbf{b}} \mathcal{L}_{\text{LC}}$
  }
}
\KwOut{$ \{ \boldsymbol{\uptheta}, \mathbf{W}, \mathbf{b}\}$}
\end{algorithm}

\subsubsection{The overall ECL loss}

The overall ECL loss is given by 
\begin{equation}
    \mathcal{L} =
    \lambda_{\text{BC-ECL}} \cdot 
    \mathcal{L}_{\text{BC-ECL}}
    +
    \lambda_{\text{CC-GE}} \cdot 
    \mathcal{L}_{\text{CC-GE}}
    +
    \lambda_{\text{LC}} \cdot
    \mathcal{L}_{\text{LC}},
\label{eq:total}
\end{equation}
where $\lambda_{\text{BC-ECL}}$, $\lambda_{\text{CC-GE}}$, and $\lambda_{\text{LC}}$ are hyper-parameters that control the strength of $\mathcal{L}_{\text{BC-ECL}}$ in (\ref{eq:ECL}),
$\mathcal{L}_{\text{CC-GE}}$ in (\ref{eq:CC-GE}), and $\mathcal{L}_{\text{LC}}$ in (\ref{eq:lc}), respectively.

Under class imbalance, minimizing the overall loss can promote the three key properties in Section~\ref{sec:prelim}, simplifying the geometry of the representation and classifier.
(Note again that $\mathcal{L}_{\text{BC-ECL}}$ promotes intra-class feature collapse and and inter-class mean geometric equilibrium, and $\mathcal{L}_{\text{CC-GE}}$ promotes the CC-GE.)
We verify this with some empirical metrics (see later Section~\ref{sec:metric}).
All the three losses in \R{eq:total} guide a model toward a common geometric objective under class imbalance---the geometric equilibrium in Fig.~\ref{fig:configuration}. 
Consequently, their gradient directions are naturally aligned, leading to cooperative rather than conflicting updates.

We include the pseudo-code of the proposed ECL framework in Algorithm~\ref{alg:ECL}.
In inference, we only use a feature extractor $f_{\boldsymbol{\uptheta}}$ and a linear classifier $g_{\mathbf{W}, \mathbf{b}}$, removing a projector $\mathrm{Proj}_{\boldsymbol{\upphi}}$ and a linear transformation $\mathbf{T}$ producing prototypes used in BC-ECL (\ref{eq:ECL}) and CC-GE promoting loss (\ref{eq:CC-GE}).

\section{Experimental setups}
\label{sec:exp}

This section describes experimental setups.
We compared the Top-1 accuracy of ECL with six SOTA supervised CL methods proposed to handle imbalanced datasets: 
TSC \cite{TSC}, 
BCL \cite{BCL}, 
GLMC \cite{GLMC}, 
PaCo \cite{PaCo}, 
GPaCo \cite{GPaco}, and 
ProCo \cite{Probco}.
We additionally compared the proposed method with the recent supervised CL method, CoNe \cite{cone}, that outperformed the representative supervised CL method \cite{PaCo}.
Furthermore, we compared proposed ECL with several supervised learning baselines with the natural image benchmark datasets.

We performed several ablation studies to investigate the primary components of the proposed method.
(ECL is an E2E learning approach so using only $\mathcal{L}_{\text{BC-ECL}}$ (\ref{eq:ECL}) that needs two-stage learning is omitted in its ablation studies.)

Section~\ref{sec:data} introduces the five imbalanced datasets, CIFAR-10-LT, CIFAR-100-LT, ImageNet-LT, ISIC 2019, and LCCT. 
Section~\ref{sec:impl} includes implementation details.
Section~\ref{sec:metric} introduces the metrics and a visualization tool for evaluating how well the key properties in Section~\ref{sec:prelim} are satisfied.

\subsection{Imbalanced datasets} \label{sec:data}

\noindent\textbf{CIFAR-10-LT and CIFAR-100-LT datasets.}
The datasets are long-tailed versions of CIFAR-10 and CIFAR-100 \cite{cifar}. 
They consist of $10$ and $100$ classes, respectively.
We used their standard dataset split that divides each dataset into training, validation, and test sets, with $50$K, $10$K, and $10$K images, respectively \cite{BCL,GLMC}.
Following \cite{BCL,TSC}, we ran experiments with the imbalance factors ($\rho) = \{ 100, 50, 10 \}$.
The $\rho = N_{\text{max}} / N_{\text{min}}$ reflects the imbalance degree in the data, where $N_{\text{max}}$ and $N_{\text{min}}$ is the number of samples in the majority and minority classes, respectively.

\noindent\textbf{ImageNet-LT dataset.} 
The dataset is a long-tailed version of vanila ImageNet \cite{ImageNet-LT, imagenet} consisting of $115.8$K images. The ImageNet-LT dataset has $1,\!000$ classes in total, with five to $1,\!280$ images per class (i.e., $\rho \!=\! 256$). 
We used its standard dataset split that divides ImageNet-LT into a training, validation, and test set, approximately with the $11\!\!:\!\!2\!\!:\!\!5$ ratio \cite{BCL,GLMC}.
We report the classification accuracy for Many-shot classes (specifically, the number of training samples $> 100$), Medium-shot classes (specifically, the number of training samples is between $[20,100]$), Few-shot classes (specifically, the number of training samples $\le 20$), and overall classes.

\noindent\textbf{ISIC 2019 dataset.}
The ISIC 2019 dataset consists of 25,331 dermatoscopic images for skin lesion classification with eight classes \cite{ISIC}.
Following \cite{ECL}, we randomly split the dataset into training, validation, and test sets with a ratio of $3\!:\!1\!:\!1$. 
The imbalance factor $\rho$ is $54.02$.

\noindent\textbf{LCCT dataset.}
We collected chest CT scans of lung adenocarcinoma images from $55$ patients, in Samsung Medical Center.
This dataset was registered as a prospective study \cite{lcct} and received IRB approval (IRB No.~2018-01-099), where patient dataset were encrypted to prevent their direct identification.
Lung adenocarcinoma is a subtype of cancer that consists of heterogeneous histology. 
Micropapillary and solid pattern (MPsol) is a well-known risk factor for recurrence \cite{mpsol,mpsol2,recurrence}. 
We labeled CT images as high-risk when the presence of MPsol is pathologically proven \cite{grading1,grading2,grading3}.
We used a semi-automatic approach in the AVIEW software (developed by Coreline Soft Co., Ltd., Seoul, South Korea) in drawing the tumor region of interests.
The constructed dataset consists of 687 images, with high-risk and low-risk classes, approximately of the $3 \!:\! 1$ ratio, i.e., $\rho \approx 3$.
We set the ratio between the training and test sets as approximately $5\!:\!1$.

\subsection{Implementation details}
\label{sec:impl}

This section lists hyperparameters of all methods evaluated in this paper.
We trained and tested them with the PyTorch ver.~2.0.0 \cite{pytorch} and NVIDIA GeForce RTX 4090 and A100 GPUs.
For the CIFAR-10-LT, CIFAR-100-LT, and ImageNet-LT benchmark datasets, we did not re-train competing methods unless stated otherwise, and instead relied on the officially reported results from their original publications for fair comparisons. We omit the hyperparameters of competing methods for these three datasets, assuming that the hyperparameters of each existing method had already been finely tuned for these benchmarks in their original studies.

\subsubsection{Common hyperparameters}

We first list hyperparameters commonly used throughout the methods.

For the CIFAR-10-LT and CIFAR-100-LT datasets, 
we used ResNet-$32$ \cite{resnet} for the feature extraction backbone $f$, and set feature dimension $F$ and number of epochs $N_{\text{epoch}}$ as $64$, $200$, respectively.
For the ImageNet-LT dataset, we used ResNeXt-50-32x4d \cite{resnext} for $f$, 
and set $F$, batch size $B$, and $N_{\text{epoch}}$ as $2048$, $256$, and $180$, respectively.
For the ISIC 2019 and LCCT datasets, 
we used ResNet-$50$ \cite{resnet} for $f$ and set $F$ as $2048$, respectively.
For the ISIC 2019 dataset, we set $B$ and $N_\text{epoch}$ as $128$ and $600$, respectively, and used the same augmentation strategy with \cite{ECL}.
For the LCCT dataset, we set $B$ and $N_\text{epoch}$ as $256$ and $2500$, respectively, and used \textsf{RandomCrop}, \textsf{RandomHorizontalFlip}, \textsf{GaussianBlur}, and \textsf{RandomErasing} modules for data augmentation.

\subsubsection{Existing methods}

We ran two existing methods, PaCo and GPaCo, on the CIFAR-10-LT dataset using their default setups for CIFAR-100-LT, as their original papers report results only on CIFAR-100-LT.
We ran five existing methods, TSC, CoNe, GLMC, BCL, and GPaCo, with their default setups on the ISIC2019 and LCCT datasets unless stated otherwise.
To achieve the best performance for the ISIC 2019 benchmark and our LCCT datasets, 
we tuned a learning rate of each existing method.
For the ISIC 2019 dataset, 
we set the learning rate as $0.02$ for TSC, CoNe, GLMC, and BCL, and $0.04$ for GPaCo.
For the LCCT dataset, 
we set the learning rate as $0.01$ for GLMC and CoNe, $0.03$ for BCL and TSC, and $0.04$ for GPaCo.

\subsubsection{Proposed ECL}

Following \cite{BCL}, we used the stochastic gradient descent method with a momentum of $0.9$ and decayed the learning rate at the $160$th and $180$th epochs with a step size of $0.1$. 
We used a fixed set of ECL hyperparameters in (\ref{eq:total}) for all datasets.
We set them as $\{ \lambda_\text{BC-ECL} \!=\! 0.5, \lambda_\text{CC-GE} \!=\! 3, \lambda_\text{LC} \!=\! 0.5 \}$, 
by adjusting them so that the initial magnitudes of the three weighted loss terms lie on a similar scale on CIFAR-10-LT ($\rho=100$).
See further details of the fine tuning process in Section~S2 of the Supplementary Material.
Following \cite{BCL}, we implemented the additional projector $\mathrm{Proj}$ in CL as a two-layer MLP composed with a Rectified Linear Unit (ReLU) activation between two linear layers (without bias), and we used the same hidden dimension $H$ as in \cite{BCL}, unless otherwise specified.

For the CIFAR-10-LT and CIFAR-100-LT datasets, we set temperature $\tau$, $B$, learning rate, and weight decay as $0.05$, $256$, $0.3$, and $5 \!\times\! 10^{-4}$ respectively, and used the same augmentation schemes as in \cite{BCL}. 
For the ImageNet-LT dataset, we set $\tau$, $B$, learning rate, and  weight decay as $0.1$, $256$, $0.3$,  and $3 \!\times\! 10^{-4}$, respectively, and used the same augmentation schemes as in \cite{Probco}.
For the ISIC 2019 dataset, we set $\tau$, $H$, learning rate, and weight decay as $0.1$, $1024$, $0.15$,  and $2 \!\times\! 10^{-4}$ respectively.
For the LCCT dataset, we set $\tau$, $H$, learning rate,  and weight decay as $0.1$, $1024$, $0.15$,  and $1 \!\times\! 10^{-4}$ respectively.

Note that throughout all datasets, 
we used half of the temperature $\tau$ employed in BCL to compensate for the reduced contrastive effect analyzed in Theorem~\ref{thm:ecl:bound}. 
See our remarks following Theorem~\ref{thm:ecl:bound}.

\subsection{Evaluation metrics and a visualization tool for representation learning}\label{sec:metric}

In analyzing representation learning with the perspectives of the three key properties in Section~\ref{sec:prelim},
we use the two widely used key metrics \cite{simclr,momentum,TSC,au}, 
and the alignment between mean representations and classifier weights in (\ref{eq:NC3}).

\subsubsection{Intra-class Feature Collapse (FC)} 

The FC metric averages distances between features from the same class \cite{au,TSC}:
\begin{equation}
    \text{FC} := \frac{1}{C}\sum_{c=1}^{C}\frac{1}{|\mathcal{F}_{c}|^{2}}\sum_{\mathbf{z}_{i}, \mathbf{z}_{j}\in \mathcal{F}_{c}}\left\| \mathbf{z}_{i}-\mathbf{z}_{j}\right\|_{2},
\end{equation}
where $\mathcal{F}_{c}$ denotes the set of all features belonging to $c$th class. 
This metric quantifies the extent to which features from the same class collapse toward a point, i.e., the degree of intra-class feature collapse---property (a) in Section~\ref{sec:prelim}.
A smaller FC value indicates greater compactness among samples within the same class, i.e., better feature collapse.

\subsubsection{Inter-class Mean Spacing (MS)}

The MS metric averages distances between different class means in the representation space \cite{au,TSC}:
\begin{equation}
    \text{MS} := \frac{1}{C(C-1)}\sum_{c=1}^{C}\sum_{c'=1,c' \neq c}^{C}\left\| \mathbf{m}_{c}-\mathbf{m}_{c'}\right\|_{2},
\end{equation}
where $\mathbf{m}_{c}$ denotes the $c$th class mean.
This metric quantifies how well the normalized class means are uniformly distributed on a unit hypersphere, reflecting the degree of inter-class mean geometric equilibrium---property (b) in Section~\ref{sec:prelim}.
The higher, the better separation between different classes, i.e., closer to a regular simplex configuration.

\subsubsection{Self-Duality (SD) between mean embeddings and classifier weights}
The SD metric evaluates the alignment between mean embeddings and classifier weights:

\begin{equation}
    \text{SD} :=   
    \left 
    \| 
    \frac{\mathbf{W}^\top}{\left \| \mathbf{W} \right \|_\mathrm{F}}- \frac{\mathbf{M}}{\| \mathbf{M} \|_\mathrm{F}} 
    \right \|_\mathrm{F},
\end{equation}
where $\mathbf{W}$ and $\mathbf{M}$ are defined as in (\ref{eq:NC3}).
A smaller $\text{SD}$ value indicates better alignment between class means and classifier weights,
i.e., better CC-GE---property (c) in Section~\ref{sec:prelim}.

\subsubsection{Visualization tool}

We qualitatively analyze representation learning by visualizing representations. We first normalizing each feature vector to have unit $\ell_2$ norm and then projecting them onto a two-dimensional plane using principal component analysis.
This visualization tool---that projects high-dimensional features onto the two-dimensional unit sphere---is widely used in many CL studies \cite{cp1, BCL}.

\begin{table}[t]
\centering
\caption{Top-1 accuracy (\%) comparisons between different supervised (C)L methods with CIFAR-10-LT and CIFAR-100-LT datasets.}
\label{tab:cifar}
\resizebox{\columnwidth}{!}{%
\begin{tabular}{c|ccc|ccc}
\toprule
\hline
\multirow{2}{*}{Methods} & \multicolumn{3}{c|}{CIFAR-10-LT}                 & \multicolumn{3}{c}{CIFAR-100-LT}                 \\ \cline{2-7} 
                           & $\rho$=100 & 50    & 10    & $\rho$=100 & 50    & 10    \\ \hline
CB-Focal \cite{CB-Focal}   & 74.5      & 79.2 & 87.1 & 39.6      & 45.1 & 57.9 \\
BBN \cite{BBN}             & 79.8      & 81.1 & 88.3 & 42.5      & 47.0 & 59.1 \\
ResLT \cite{ResLT}         & 82.4      & 85.1 & 89.7 & 48.2      & 52.7 & 62.0 \\
Hybrid-SC \cite{hybrid-sc} & 81.4      & 85.3 & 91.1 & 46.7      & 51.8 & 63.0 \\
TSC \cite{TSC}             & 79.7       & 82.9  & 88.7  & 42.8       & 46.3  & 57.6  \\
CoNe \cite{cone}         & 25.2           & 37.2           & 41.1          & -$^\dagger$    & -$^\dagger$    & -$^\dagger$    \\
GLMC \cite{GLMC}           & 87.7      & 90.1 & 94.0 & 55.8      & 61.0 & 70.7 \\
BCL \cite{BCL}             & 84.3      & 87.2 & 91.1 & 51.9      & 56.5 & 64.8 \\
PaCo \cite{PaCo}           &     77.2       &   79.8    &   83.4    & 52.0       & 56.0  & 64.2  \\
ProCo \cite{Probco}         & 85.9       & 88.2  & 91.9  & 52.8       & 57.1  & 65.5  \\
GPaCo \cite{GPaco}         & 79.2      & 81.6 & 85.9 & 52.3       & 56.4  & 65.4  \\ \hline
\textbf{ECL} (ours)      & \textbf{92.1} & \textbf{92.5} & \textbf{94.4} & \textbf{66.3} & \textbf{67.5} & \textbf{71.0} \\ \hline
\bottomrule
\end{tabular}%
}
\vspace{0.5pc}

{\centering\scriptsize $^\dagger$We failed to run CoNe with CIFAR-100-LT -- we observed the \textsf{NaN} in training loss. \par}
\end{table}

\begin{table}[t]
\centering
\caption{Top-1 accuracy (\%) comparisons between different (C)L methods with ImageNet-LT dataset.}
\label{tab:imagenet}
\resizebox{7cm}{!}{%
\begin{tabular}{c|ccc|c}
\toprule
\hline
Methods                  & Many          & Medium        & Few           & All           \\ \hline
CB-Focal \cite{CB-Focal} & 39.6          & 32.7          & 16.8          & 33.2          \\
LDAM \cite{LDAM}         & 60.4          & 46.9          & 30.7          & 49.8          \\
LADE \cite{LADE}         & 65.1          & 48.9          & 33.4          & 53.0          \\
ResLT \cite{ResLT}       & 63.6          & 55.7          & 38.9          & 56.1          \\
KCL \cite{kcl}           & 61.8          & 49.4          & 30.9          & 51.5          \\
TSC \cite{TSC}           & 63.5          & 49.7          & 30.4          & 52.4          \\
GLMC \cite{GLMC}         & \textbf{70.1} & 52.4          & 30.4          & 56.3          \\
BCL \cite{BCL}           & 67.9          & 54.2          & 36.6          & 57.1          \\
PaCo \cite{PaCo}         & 64.4          & 55.7          & 33.7          & 56.0          \\
ProCo \cite{Probco}       & 68.2          & 55.1          & 38.1          & 57.8          \\
GPaCo \cite{GPaco}       & 64.7          & 56.2          & 39.5          & 58.0          \\ \hline
\textbf{ECL} (ours)      & 68.6          & \textbf{56.3} & \textbf{40.1} & \textbf{58.8} \\ \hline
\bottomrule
\end{tabular}%
}
\end{table}

\begin{table}[t]
\centering
\caption{Top-1 accuracy (\%) comparisons between different supervised CL methods with ISIC 2019 and LCCT datasets.}
\label{tab:medical}
\resizebox{5cm}{!}{%
\begin{tabular}{c|c|c}
\toprule
\hline
Methods          & \begin{tabular}[c]{@{}c@{}}ISIC 2019\\ ($\rho \approx 54$)\end{tabular} & \begin{tabular}[c]{@{}c@{}}LCCT\\ ($\rho \approx 3$)\end{tabular} \\ \hline
TSC \cite{TSC}              & 50.23                                                                   & 63.6                                                               \\
CoNe \cite{cone}            & 63.57                                                                   & 74.93                                                              \\
GLMC \cite{GLMC}            & 70.49                                                                   & 59.33                                                              \\
BCL \cite{BCL}             & 83.76                                                                   & 80.24                                                               \\
GPaCo \cite{GPaco}             & 85.66                                                                   & 74.52                                                               \\

\hline
\textbf{ECL} (ours) & \textbf{87.31}                                                          & \textbf{92.36}                                                     \\ \hline
\bottomrule
\end{tabular}%
}
\end{table}

\begin{table}[t!]
\centering
\caption{Comparisons of $\text{FC}$, $\text{MS}$, and $\text{SD}$ values (see their definitions in Section~\ref{sec:metric}) between different supervised CL methods with CIFAR-10-LT and LCCT datasets (the $\downarrow$ and $\uparrow$ symbols indicate that the lower the better and 
higher the better, respectively).}
\label{tab:metric}
\resizebox{8cm}{!}{%
\begin{tabular}{c|ccc|ccc}
\toprule
\hline
\multirow{2}{*}{Methods} & \multicolumn{3}{c|}{CIFAR-10-LT ($\rho = 100$)}                 & \multicolumn{3}{c}{LCCT}                       \\ \cline{2-7} 
                         & $\text{FC}^{\downarrow}$              & $\text{MS}^{\uparrow}$              & $\text{SD}^{\downarrow}$              & $\text{FC}^{\downarrow}$             & $\text{MS}^{\uparrow}$             & $\text{SD}^{\downarrow}$              \\ \hline
TSC \cite{TSC}                     & 0.95          & 0.30          & 1.42          & 0.91          & 0.56          & 0.97          \\
CoNe \cite{cone}                    & 1.20          & 0.44          & 0.62          & 0.47          & 0.62          & 0.38          \\
GLMC \cite{GLMC}                    & 0.86          & 0.89          & 0.71          & 0.88          & 0.71          & 0.69          \\
BCL \cite{BCL}                     & 0.64          & 0.62          & 0.55          & 0.41          & 0.82          & 0.73          \\ 
GPaCo \cite{GPaco}                     & 0.82          & 0.75          & 0.42          & 0.75          & 0.98          & 0.27          \\ \hline
\textbf{ECL} (ours)     & \textbf{0.36} & \textbf{0.94} & \textbf{0.25} & \textbf{0.23} & \textbf{1.61} & \textbf{0.21} \\ \hline
\bottomrule
\end{tabular}%
}
\end{table}

\begin{figure*}[t!]
    \centering
    \begin{tikzpicture}
        \node[anchor=center, inner sep=0] at (-0.25\textwidth,0) {\includegraphics[width=0.25\textwidth]{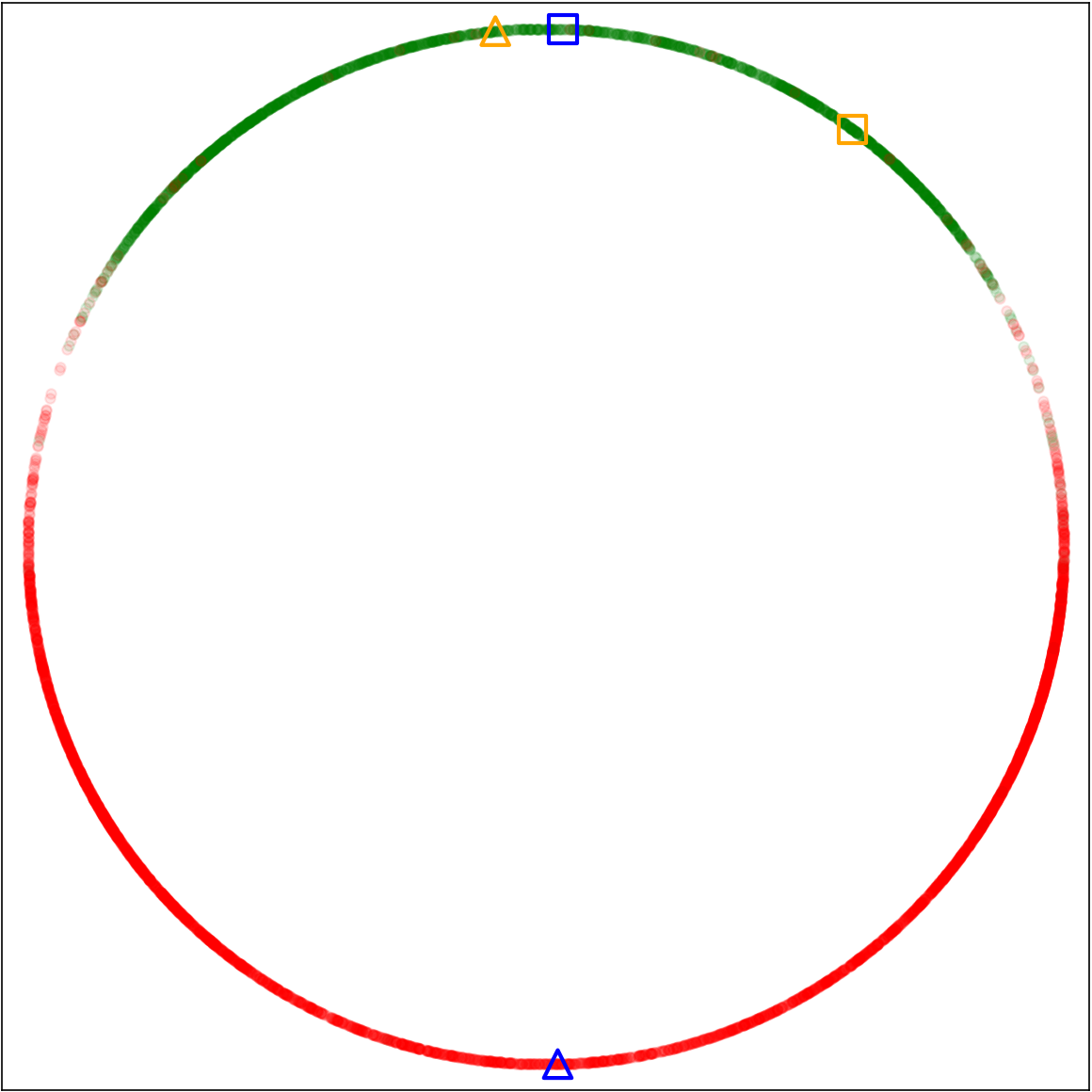}};
        \node[align=center] at (-0.25\textwidth,0.2) {\small $\text{FC}$: 0.66};
        \node[align=center] at (-0.25\textwidth,-0.15) {\small $\text{MS}$: 1.27};
        \node[align=center] at (-0.25\textwidth,-0.5) {\small $\text{SD}$: 0.81};
        \node[anchor=center, inner sep=0] at (0,0) 
        {\includegraphics[width=0.25\textwidth]{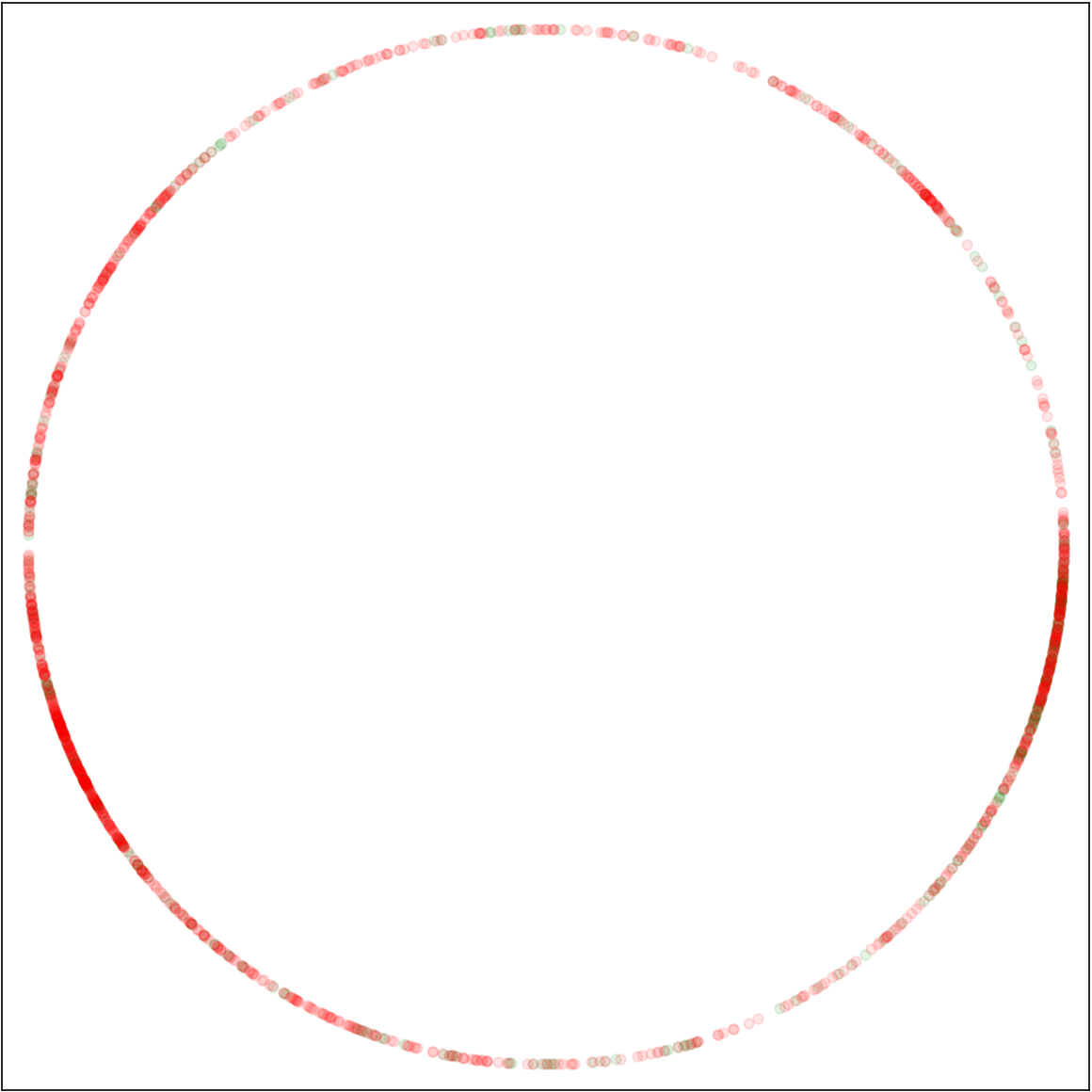}};
        \node[align=center] at (0,0.2) {\small $\text{FC}$: 0.91};
        \node[align=center] at (0,-0.15) {\small $\text{MS}$: 0.56};
        \node[align=center] at (0,-0.5) {\small $\text{SD}$: 0.97};
        \node[align=center] at (-0.125\textwidth,-2.6) {\small (a) TSC \cite{TSC}};
        
        \node[anchor=center, inner sep=0] at (0.25\textwidth,0) 
        {\includegraphics[width=0.25\textwidth]{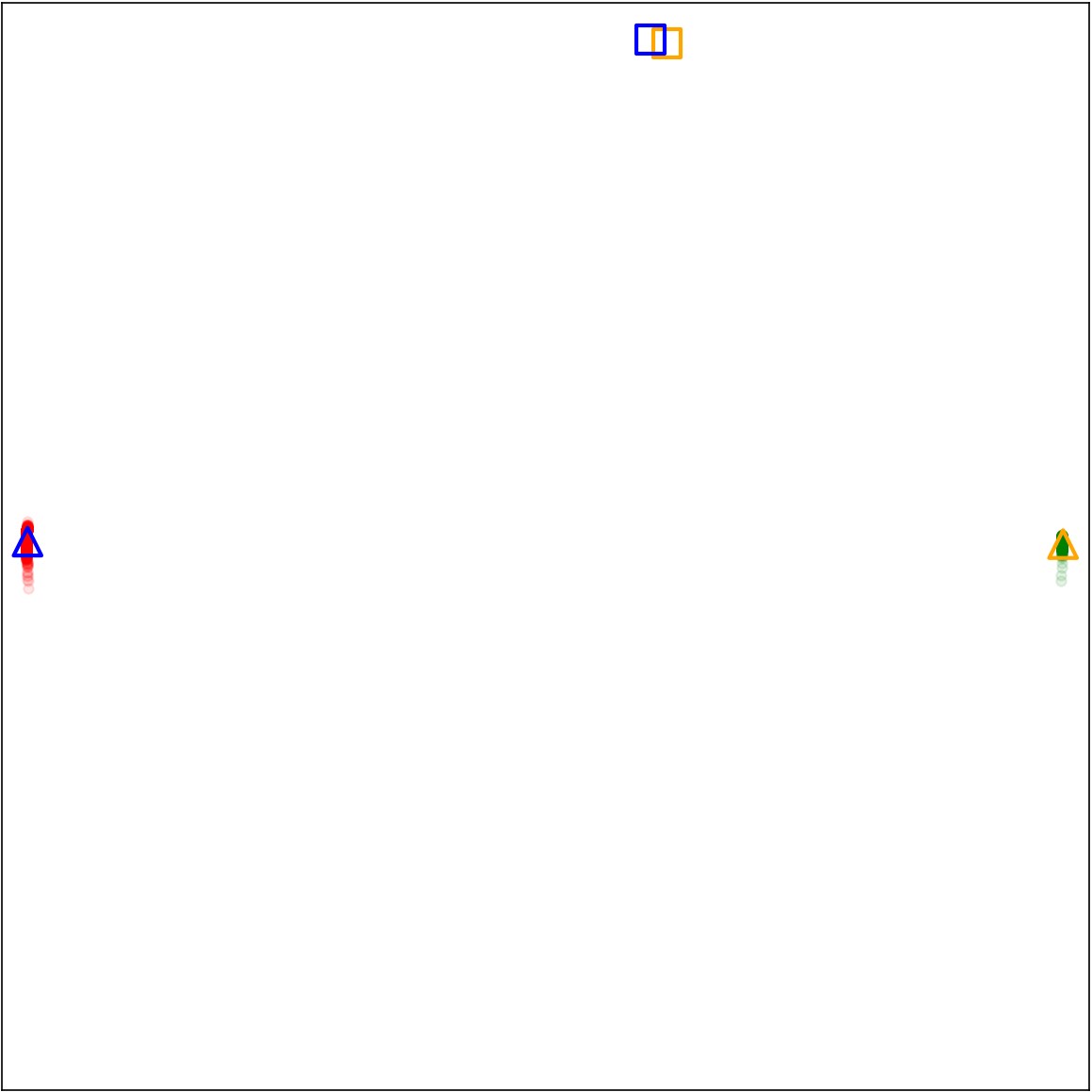}};
        \node[align=center] at (0.25\textwidth,0.2) {\small $\text{FC}$: 0.02};
        \node[align=center] at (0.25\textwidth,-0.15) {\small $\text{MS}$: 1.95};
        \node[align=center] at (0.25\textwidth,-0.5) {\small $\text{SD}$: 0.51};
        
        \node[anchor=center, inner sep=0] at (0.5\textwidth,0) {\includegraphics[width=0.25\textwidth]{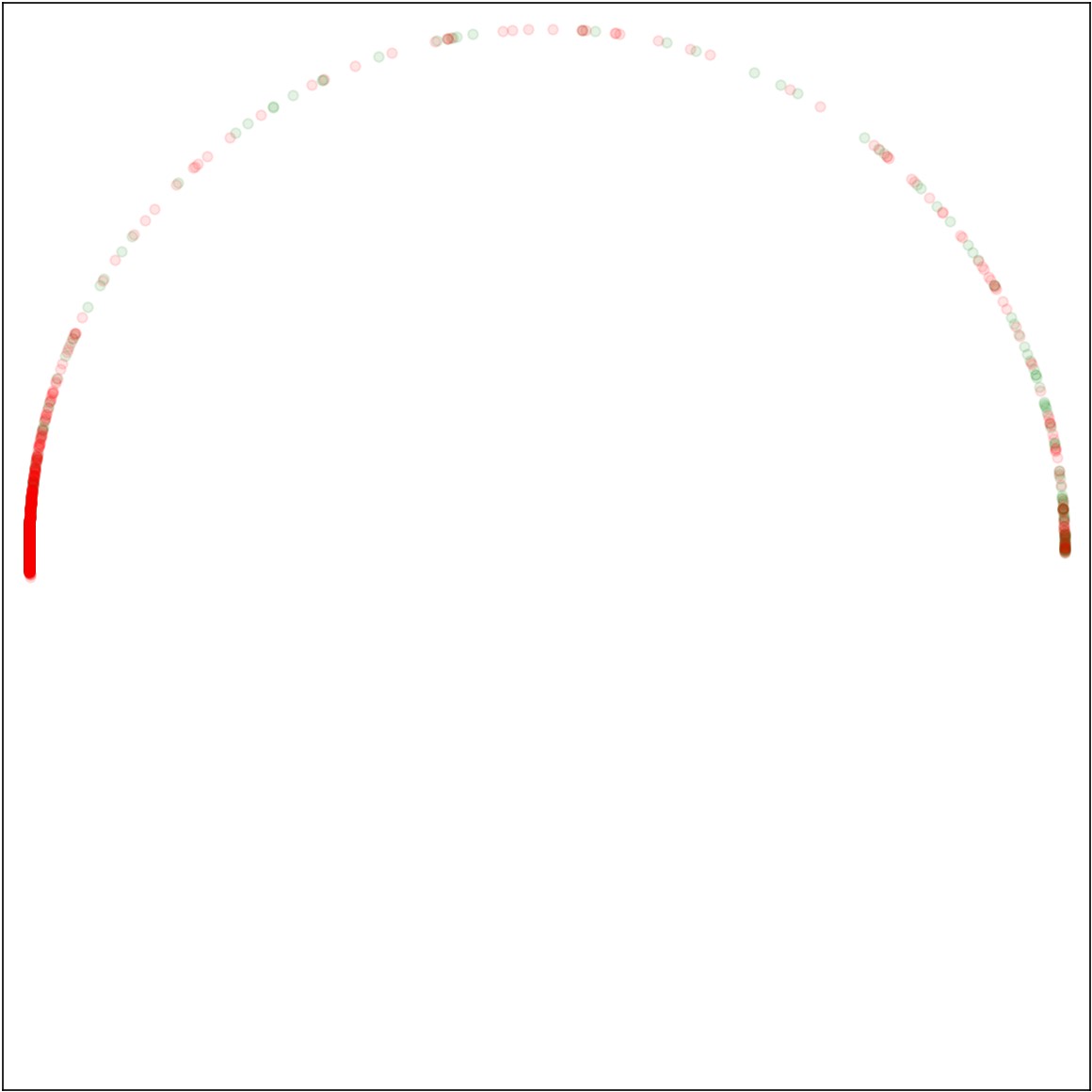}};
        \node[align=center] at (0.5\textwidth,0.2) {\small $\text{FC}$: 0.41};
        \node[align=center] at (0.5\textwidth,-0.15) {\small $\text{MS}$: 0.82};
        \node[align=center] at (0.5\textwidth,-0.5) {\small $\text{SD}$: 0.73};
        \node[align=center] at (0.375\textwidth,-2.6) {\small (b) BCL \cite{BCL}};
        
        \node[anchor=center, inner sep=0] at (-0.25\textwidth,-5.2) {\includegraphics[width=0.25\textwidth]{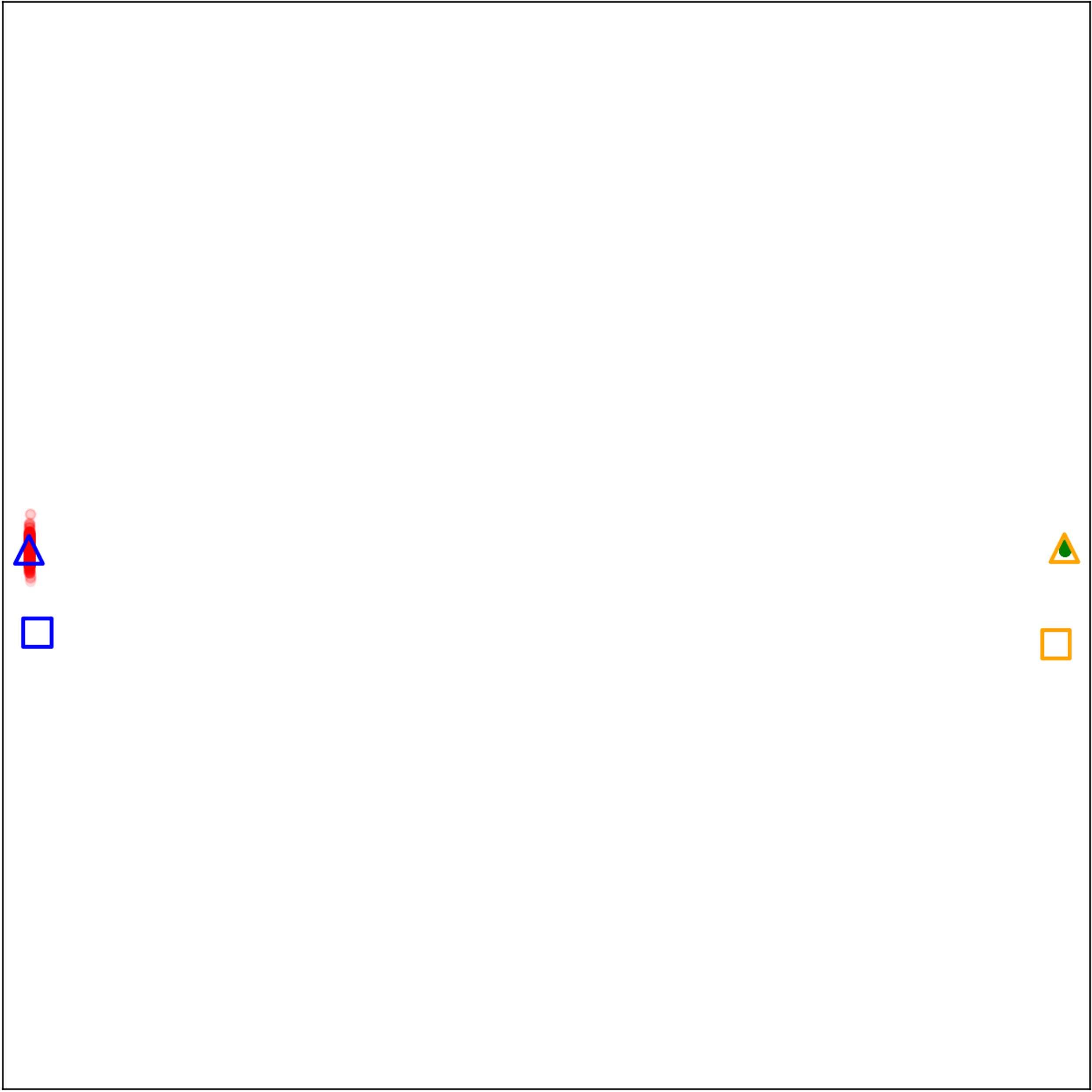}};
        \node[align=center] at (-0.25\textwidth,-5.0) {\small $\text{FC}$: 0.02};
        \node[align=center] at (-0.25\textwidth,-5.35) {\small $\text{MS}$: 1.97};
        \node[align=center] at (-0.25\textwidth,-5.7) {\small $\text{SD}$: 0.24};
        
        \node[anchor=center, inner sep=0] at (0,-5.2) {\includegraphics[width=0.25\textwidth]{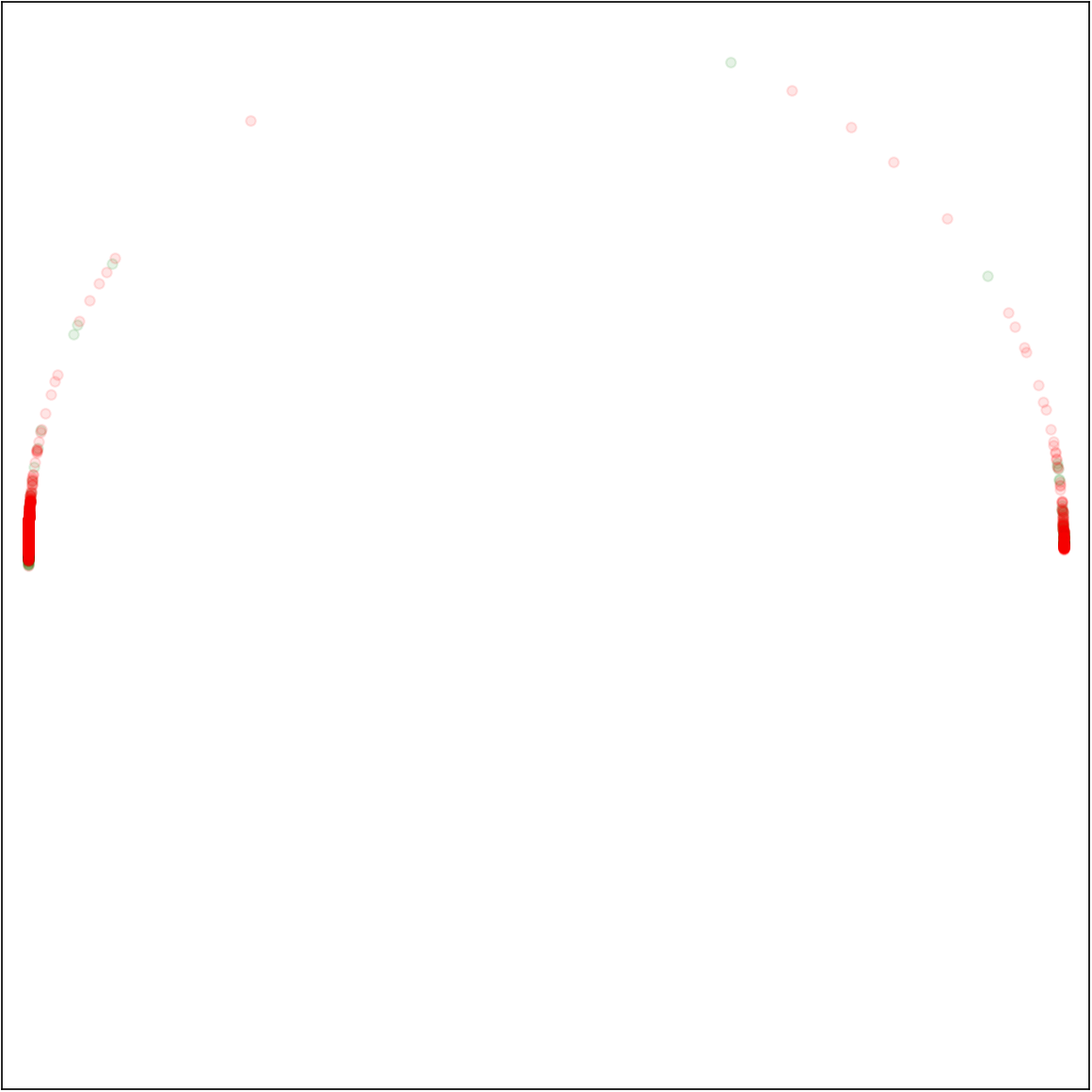}};
        \node[align=center] at (0,-5.0) {\small $\text{FC}$: 0.47};
        \node[align=center] at (0,-5.35) {\small $\text{MS}$: 0.62};
        \node[align=center] at (0,-5.7) {\small $\text{SD}$: 0.38};
        \node[align=center] at (-0.125\textwidth,-7.8) {\small (c) CoNe \cite{cone}};

        \node[anchor=center, inner sep=0] at (0.25\textwidth,-5.2) {\includegraphics[width=0.25\textwidth]{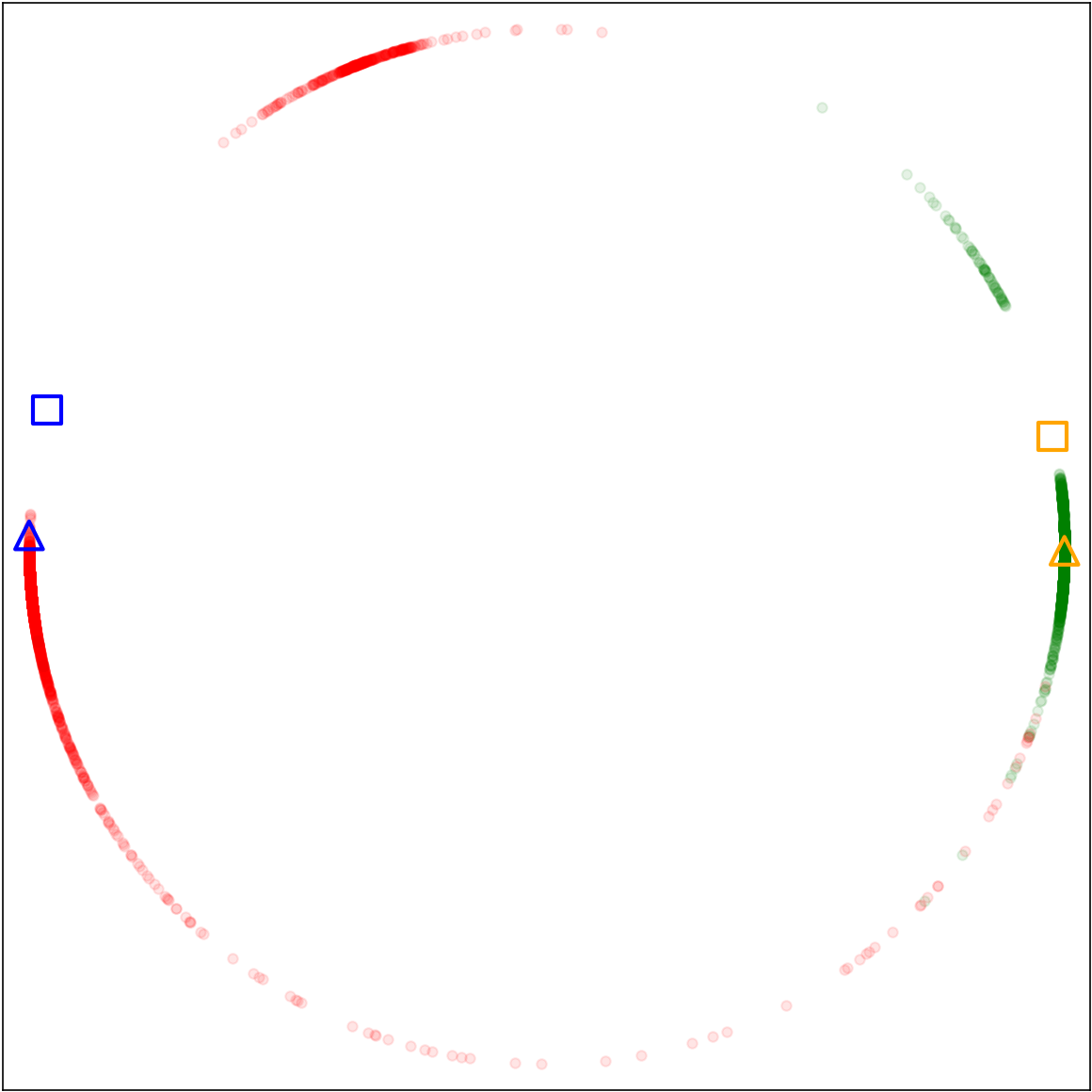}};
        \node[align=center] at (0.25\textwidth,-5.0) {\small $\text{FC}$: 0.49};
        \node[align=center] at (0.25\textwidth,-5.35) {\small $\text{MS}$: 1.76};
        \node[align=center] at (0.25\textwidth,-5.7) {\small $\text{SD}$: 0.56};
        
        \node[anchor=center, inner sep=0] at (0.5\textwidth,-5.2) {\includegraphics[width=0.25\textwidth]{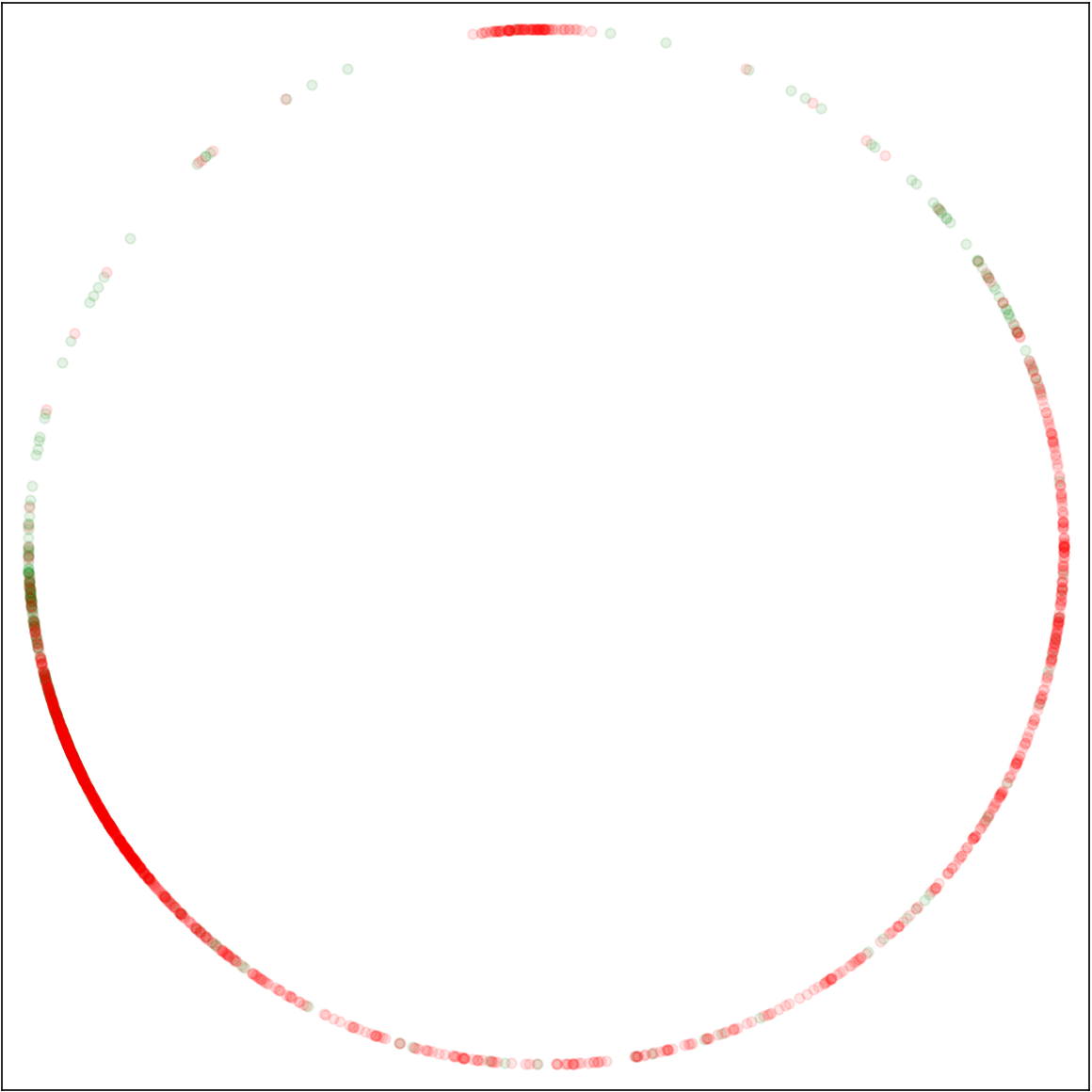}};
        \node[align=center] at (0.5\textwidth,-5.0) {\small $\text{FC}$: 0.88};
        \node[align=center] at (0.5\textwidth,-5.35) {\small $\text{MS}$: 0.71};
        \node[align=center] at (0.5\textwidth,-5.7) {\small $\text{SD}$: 0.69};
        \node[align=center] at (0.375\textwidth,-7.8) {\small (d) GLMC \cite{GLMC}};

        \node[anchor=center, inner sep=0] at (-0.25\textwidth,-10.4) {\includegraphics[width=0.25\textwidth]{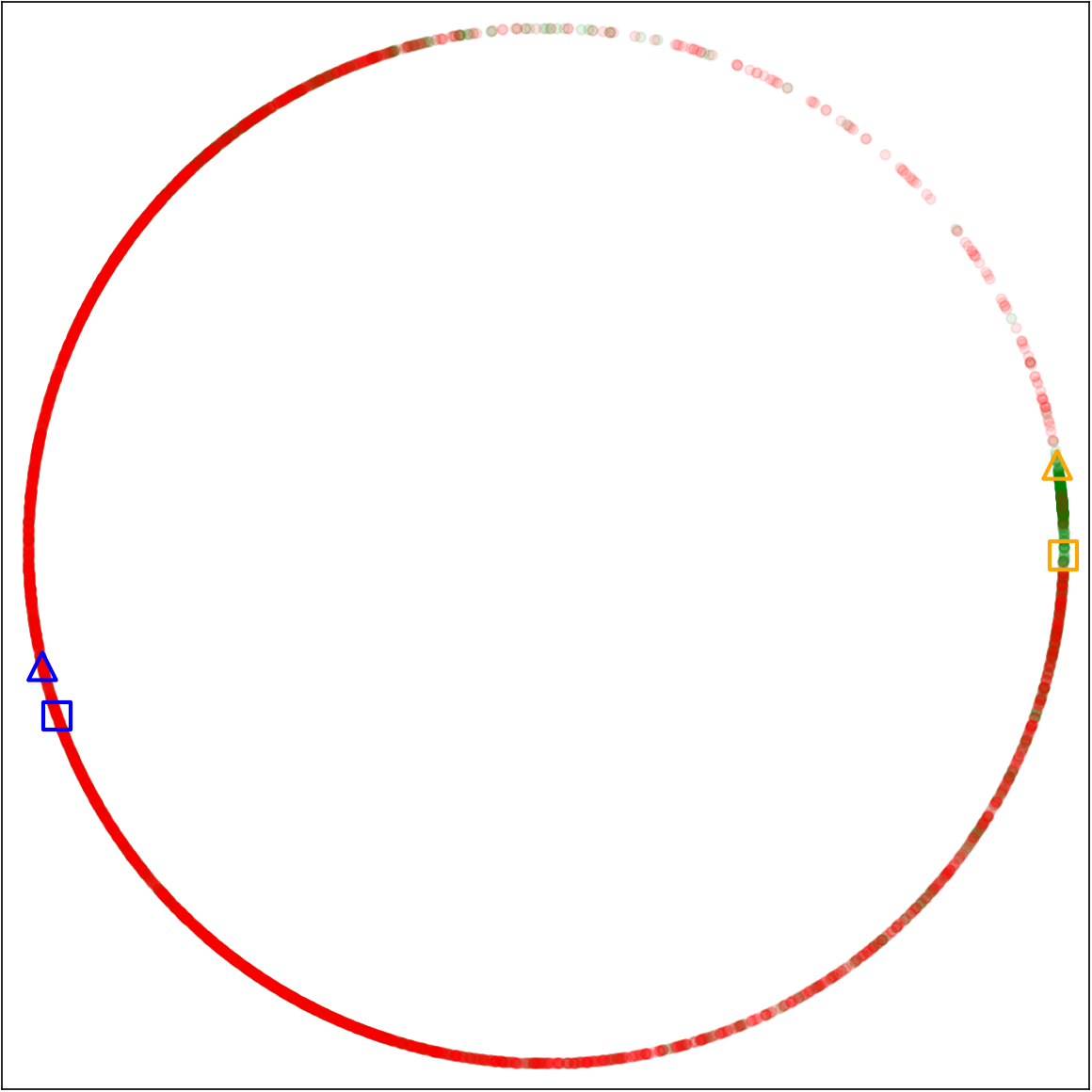}};
        \node[align=center] at (-0.25\textwidth,-10.2) {\small $\text{FC}$: 0.54};
        \node[align=center] at (-0.25\textwidth,-10.55) {\small $\text{MS}$: 1.63};
        \node[align=center] at (-0.25\textwidth,-10.9) {\small $\text{SD}$: 0.19};
        
        \node[anchor=center, inner sep=0] at (0,-10.4) {\includegraphics[width=0.25\textwidth]{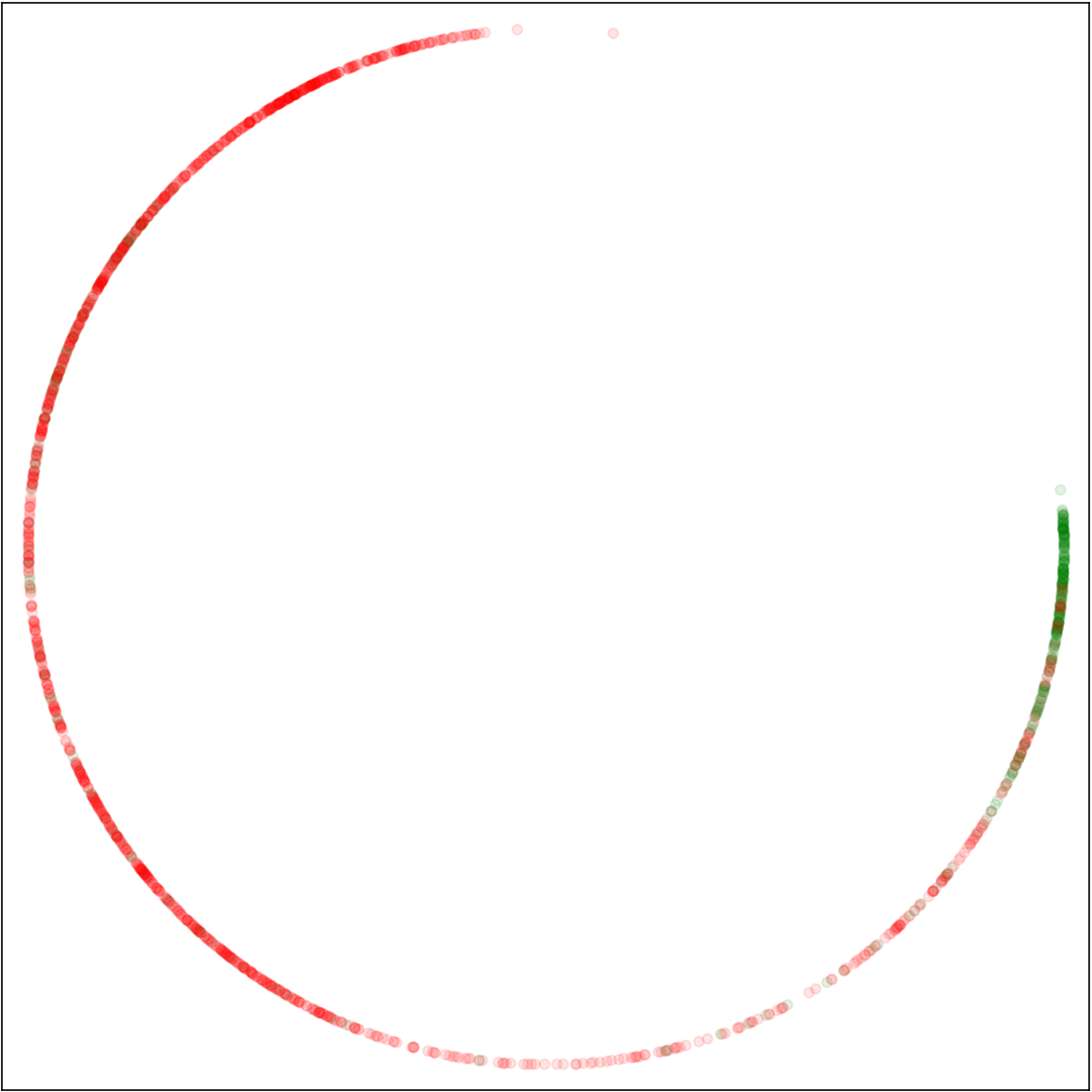}};
        \node[align=center] at (0,-10.2) {\small $\text{FC}$: 0.75};
        \node[align=center] at (0,-10.55) {\small $\text{MS}$: 0.98};
        \node[align=center] at (0,-10.9) {\small $\text{SD}$: 0.27};
        \node[align=center] at (-0.125\textwidth,-13) {\small (e) GPaCo \cite{GPaco}};

        \node[anchor=center, inner sep=0] at (0.25\textwidth,-10.4) {\includegraphics[width=0.25\textwidth]{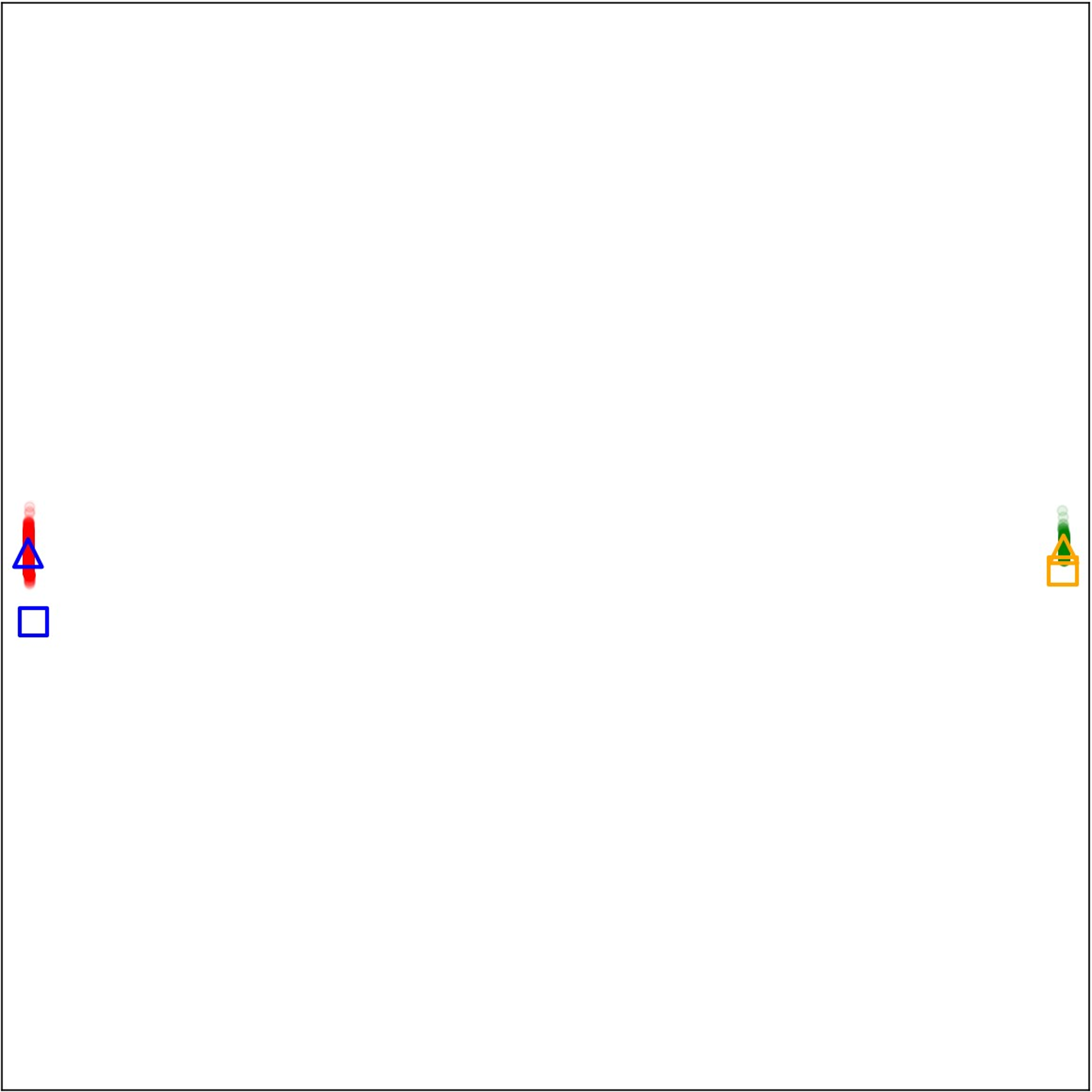}};
        \node[align=center] at (0.25\textwidth,-10.2) {\small $\text{FC}$: 0.04};
        \node[align=center] at (0.25\textwidth,-10.55) {\small $\text{MS}$: 1.95};
        \node[align=center] at (0.25\textwidth,-10.9) {\small $\text{SD}$: 0.12};
        
        \node[anchor=center, inner sep=0] at (0.5\textwidth,-10.4) {\includegraphics[width=0.25\textwidth]{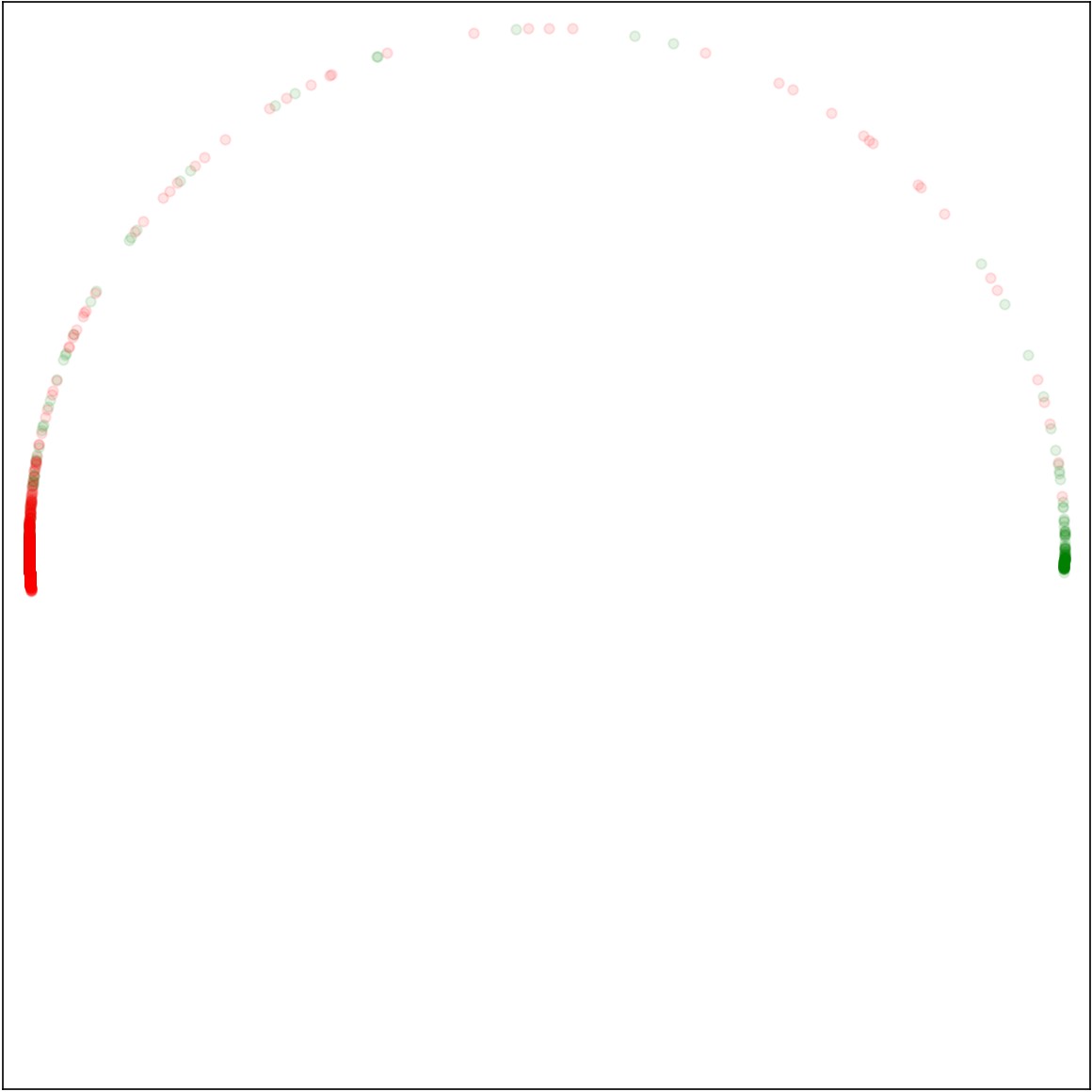}};
        \node[align=center] at (0.5\textwidth,-10.2) {\small $\text{FC}$: 0.23};
        \node[align=center] at (0.5\textwidth,-10.55) {\small $\text{MS}$: 1.61};
        \node[align=center] at (0.5\textwidth,-10.9) {\small $\text{SD}$: 0.21};
        \node[align=center] at (0.375\textwidth,-13) {\small (f) \textbf{ECL}};

        \node[align=center] at (-0.25\textwidth,2.5) {\small Training set};
        \node[align=center] at (0,2.5) {\small Test set};
        \node[align=center] at (0.25\textwidth,2.5) {\small Training set};
        \node[align=center] at (0.5\textwidth,2.5) {\small Test set};

    \end{tikzpicture}

  \caption{
  Visualization of feature distribution on the unit hypersphere for different supervised CL methods with LCCT dataset; $\{ x \in [-1, 1], y \in [-1, 1] \}$. 
  If class-wise features are clustered (see Fig.~\ref{fig:properties}(a)) to form a regular simplex configuration (see Fig.~\ref{fig:properties}(b)) and their class means are aligned with the classifier weights (see Fig.~\ref{fig:properties}(c)), then good representations and classifier are learned.
  \textit{Legends:} 
  \protect\symDot{FeatGreen} denotes minority class samples;
  \protect\symDot{FeatRed} denotes majority class samples;
  \protect\symTri{MeanOrange} denotes minority class mean;
  \protect\symTri{WBlue} denotes majority class mean;
  \protect\symSq{MeanOrange} denotes minority classifier weight; and
  \protect\symSq{WBlue} denotes majority classifier weight.
}
    \label{fig:visualization}
    \vspace{-1pc} 
\end{figure*}

\section{Results and discussion}

This section presents and discusses the results obtained with the five imbalanced/long-tailed datasets in Section~\ref{sec:data}.
Section~\ref{sec:acc} compares the Top-1 classification accuracy
between different supervised CL methods, and analyzes their representation learning with the three metrics in Section~\ref{sec:metric}.
Section~\ref{sec:ablation} conducts ablation studies of proposed ECL (\ref{eq:total}), particularly to evaluate the contribution of each component in (\ref{eq:total}) and study different forms of CL and prototypes within the ECL framework.
Section~\ref{sec:clf} investigates how the CC-GE property behaves with linear and nonlinear classifiers under class imbalance.

\subsection{Comparisons between different supervised (C)L methods with imbalanced datasets}
\label{sec:acc}

\subsubsection{Classification performance comparisons between different supervised (C)L methods}

TABLES~\ref{tab:cifar}--\ref{tab:medical} demonstrate that the proposed ECL framework achieves significantly better classification performance compared to 
several existing SOTA supervised CL methods (listed in Section~\ref{sec:exp}) and several supervised learning baselines, for all the five imbalanced datasets.

\noindent\textbf{CIFAR-10-LT and CIFAR-100-LT.}
Comparing results with the CIFAR-10-LT and CIFAR-100-LT datasets with different $\rho$ values shows that the performance gap between existing methods and ECL becomes larger when $\rho$ is larger.
See TABLE~\ref{tab:cifar}.
This implies that ECL is in general particularly useful for handling extremely long-tailed datasets.
ECL also significantly improved precision and recall performance over existing supervised CL methods; see the CIFAR-10-LT results in Section~S3-A of the Supplementary Material.

\noindent\textbf{ImageNet-LT.}
Comparing results with ImageNet-LT in TABLE~\ref{tab:imagenet} shows that the proposed ECL outperforms existing SOTA supervised CL methods and supervised learning baselines.
In particular, ECL achieved significantly higher classification performance on Few-shot classes, implying that ECL is useful for handling minority classes.

\noindent\textbf{ISIC2019 and LCCT.}
TABLE~\ref{tab:medical} shows the outperforming performance of the proposed ECL over the five aforementioned SOTA methods.
The confusion matrices in Section~S3-B of the Supplementary Material show that ECL achieves low false-negative rates for minority classes while avoiding excessive false positives, reflecting its ability to maintain a more stable trade-off between sensitivity and specificity.
In addition, the receiver operating characteristic (ROC) and precision-recall (PR) curves in Section~S3-C of the Supplementary Material show that ECL effectively bridges the gap between general classification and minority-class detection, 
by limiting false positives while preserving a favorable precision–recall trade-off for both majority and minority classes.

\subsubsection{Representation learning comparisons with different supervised CL methods}

TABLE~\ref{tab:metric} with the two imbalanced datasets shows that ECL better induces the three key properties in Section~\ref{sec:prelim} compared to the five SOTA methods.
The results in TABLE~\ref{tab:metric} well correspond to the classification performances in TABLES~\ref{tab:cifar}--\ref{tab:medical}, underlining the importance of promoting three key properties in Section~\ref{sec:prelim} under class imbalance.

The visualizations with the representation learning perspective in Fig.~\ref{fig:visualization} can explain the outperforming performance of ECL over existing supervised CL methods under class imbalance.
First, we observe that ECL better encourages the feature distribution similarity between training and test sets compared to the existing methods, improving the performance in imbalanced test sets.
Second, ECL better promotes all three key properties compared to the existing methods.
Comparing Fig.~\ref{fig:visualization}(b) with Fig.~\ref{fig:visualization}(e) particularly demonstrates that ECL significantly improves the self-duality over BCL.

\subsection{Ablation studies for proposed ECL} \label{sec:ablation}

This section studies the followings:
\begin{itemize}
\item Section~\ref{subsec:ablation} studies the contribution of each component in the proposed ECL framework (\ref{eq:total}); 
\item Section~\ref{sec:comparison_cl} studies ECL with its variant that replaces proposed BC-ECL loss (\ref{eq:ECL}) with the existing BCL loss (\ref{eq:BCL});
\item Section~\ref{sec:mean_vs_proto} studies different forms of prototypes in the the proposed CC-GE promoting loss (\ref{eq:CC-GE}).
\end{itemize}

\subsubsection{Ablation study of ECL (\ref{eq:total})} 
\label{subsec:ablation}

Several ablation studies in TABLE~\ref{tab:ablation} investigate the contribution of each component in the proposed ECL loss (\ref{eq:total}) by calculating the Top-1 classification accuracy and the $\text{FC}$, $\text{MS}$, and $\text{SD}$ measures defined in Section~\ref{sec:metric}.

TABLE~\ref{tab:ablation} shows that 
in imbalanced learning,
to improve all the $\text{FC}$, $\text{MS}$, $\text{SD}$ measures,
i.e., to simultaneously promote the intra-class feature collapse, inter-class  mean geometric equilibrium, and CC-GE properties in Section~\ref{sec:prelim}, 
one needs to use all the components $\mathcal{L}_{\text{BC-ECL}}$ (\ref{eq:ECL}), $\mathcal{L}_{\text{LC}}$ (\ref{eq:lc}), and $\mathcal{L}_{\text{CC-GE}}$ (\ref{eq:CC-GE}).
For example, comparing the ``$\mathcal{L}_{\text{LC}}$'' setup with the ``$\mathcal{L}_{\text{LC}}$+$\mathcal{L}_{\text{CC-GE}}$'' combination in TABLE~\ref{tab:ablation} demonstrates that 
adding only $\mathcal{L}_{\text{CC-GE}}$ to $\mathcal{L}_{\text{LC}}$ does not improve the $\text{SD}$ measure (even if it is designed to improve $\text{SD}$).
Comparing ``$\mathcal{L}_{\text{BC-ECL}}$+$\mathcal{L}_{\text{LC}}$'' with 
``$\mathcal{L}_{\text{BC-ECL}}$+$\mathcal{L}_{\text{LC}}$+$\mathcal{L}_{\text{CC-GE}}$'' in TABLE~\ref{tab:ablation} verifies the effectiveness of the proposed CC-GE loss (\ref{eq:CC-GE}) in improving not only the $\text{SD}$ measures, but also $\text{FC}$ and $\text{MS}$ measures.

In addition, TABLE~\ref{tab:ablation} supports that in imbalanced learning, promoting all three properties can significantly improve the classification performance over the ablated variants.

By comparing the first and last rows of TABLE~\ref{tab:ablation},
we observe that the performance gain of full ECL over the single-classifier learning loss $\mathcal{L}_{\text{LC}}$ is more pronounced on CIFAR-10-LT ($\rho = 100$) than on the LCCT datasets.
This is because the geometric equilibrium in the representation space---illustrated in Fig.~\ref{fig:configuration}---is more easily promoted on LCCT than on the highly imbalanced CIFAR-10-LT ($\rho = 100$).

\subsubsection{Can the proposed $\mathcal{L}_{\text{BC-ECL}}$ \R{eq:ECL} address the limitation of the existing $\mathcal{L}_{\text{BCL}}$ \R{eq:BCL}?}
\label{sec:comparison_cl}

In a nutshell, 
the full ECL model using the proposed CL formulation $\mathcal{L}_{\text{BC-ECL}}$ \R{eq:ECL} consistently outperforms its variant that replaces it with the BCL formulation $\mathcal{L}_{\text{BCL}}$ \R{eq:BCL} across Many, Medium, and Few-shot classes as well as on All classes. 
See the last two rows of TABLE~\ref{tab:comparison_cl}.

By comparing the baseline BCL \cite{BCL}, the aforementioned variant of ECL, and the full ECL model,
we now examine the impact of batch-dependent prototype contributions in BCL.
We first compare the baseline model BCL \cite{BCL}
with the variant of ECL---namely, a model whose differences from BCL are the
addition of the proposed CC-GE loss and linearly transformed prototypes---in TABLE~\ref{tab:comparison_cl}.
The results show that, 
while the variant yields remarkable improvements on Few-shot classes, 
it offers \emph{no} performance gain on the Many- and Medium-shot classes.
This supports our conjecture in Section~\ref{sec:motivation:batch} that
batch-dependent prototype contributions can limit performance improvements for classes with many instances, e.g., Many- and Medium-shot groups.
In contrast, comparing full ECL with its variant in
TABLE~\ref{tab:comparison_cl} confirms that the proposed CL loss,
$\mathcal{L}_{\text{BC-ECL}}$ \R{eq:ECL}, effectively resolves this limitation inherent in $\mathcal{L}_{\text{BCL}}$ \R{eq:BCL}.

\begin{table}[t!]
\centering
\caption{
Ablation study for the primary components of proposed ECL (the $\downarrow$ and $\uparrow$ symbols indicate that the lower the better and 
and higher the better, respectively). 
}
\label{tab:ablation}
\resizebox{\columnwidth}{!}{
\begin{tabular}{ccc|cccc|cccc}
\toprule
\hline
$\mathcal{L}_{\text{BC-ECL}}$ & $\mathcal{L}_{\text{CC-GE}}$ & $\mathcal{L}_{\text{LC}}$ & \multicolumn{4}{c|}{CIFAR-10-LT ($\rho = 100$)}                                  & \multicolumn{4}{c}{LCCT}                                          \\ \cline{4-11} 
(\ref{eq:ECL}) & (\ref{eq:CC-GE})  & (\ref{eq:lc}) & Top-1 acc. & \multicolumn{1}{c}{$\text{FC}^{\downarrow}$} & \multicolumn{1}{c}{$\text{MS}^{\uparrow}$} & $\text{SD}^{\downarrow}$    & Top-1 acc. & \multicolumn{1}{c}{$\text{FC}^{\downarrow}$} & \multicolumn{1}{c}{$\text{MS}^{\uparrow}$} & $\text{SD}^{\downarrow}$    \\ \hline
\xmark &
  \xmark &
  \cmark &
  82.27 \% &
  0.90 &
  0.39 &
  0.85 &
  79.20 \% &
  0.88 &
  0.52 &
  0.74 \\
\xmark &
  \cmark &
  \cmark &
  87.26 \% &
  0.81 &
  0.37 &
  0.80 &
  81.62\% &
  0.86 &
  0.56 &
  0.72 \\
\cmark &
  \xmark &
  \cmark &
  88.61 \% &
  0.64 &
  0.65 &
  0.86 &
  88.52 \% &
  0.37 &
  0.76 &
  0.71 \\
\cmark &
  \cmark &
  \cmark &
  \textbf{92.13 \%} &
  \textbf{0.36} &
  \textbf{0.94} &
  \textbf{0.35} &
  \textbf{92.36 \%} &
  \textbf{0.23} &
  \textbf{1.61} &
  \textbf{0.21} \\ \hline
\bottomrule
\end{tabular}
}
\end{table}

\begin{table}[t!]
\centering
\caption{Top-1 accuracy (\%) comparisons between ECL and its variant by replacing 
$\mathcal{L}_\text{BC-ECL}$ (\ref{eq:ECL}) with $\mathcal{L}_\text{BCL}$ (\ref{eq:BCL}), with ImageNet-LT dataset.}
\label{tab:comparison_cl}
\resizebox{0.95\columnwidth}{!}{%
\begin{tabular}{c|ccc|c}
\toprule
\hline
Methods                    & Many          & Medium        & Few           & All           
\\ 
\hline
BCL \cite{BCL} (baseline)           & 67.9          & 54.2          & 36.6          & 57.1          \\
Variant of ECL by replacing  & \multirow{2}{*}{67.6}          & \multirow{2}{*}{54.1}          & \multirow{2}{*}{39.6}          & \multirow{2}{*}{57.3}          \\  
$\mathcal{L}_\text{BC-ECL}$ (\ref{eq:ECL}) w/ $\mathcal{L}_\text{BCL}$ (\ref{eq:BCL}) & & & \\
\textbf{ECL} (ours)  & \textbf{68.6} & \textbf{56.3} & \textbf{40.1} & \textbf{58.8} \\
\hline
\bottomrule
\end{tabular}%
}
\end{table}

\begin{table}[t!]
\centering
\caption{Performance comparisons between different forms of prototypes $\mathbf{P}$ in (\ref{eq:CC-GE}) with different datasets.}
\label{tab:prototype}
\resizebox{\columnwidth}{!}{%
\begin{tabular}{c|cc|cc}
\toprule
\hline
\multirow{2}{*}{Methods}                                                      & \multicolumn{2}{c|}{CIFAR-10-LT ($\rho=100$)} & \multicolumn{2}{c}{LCCT}   \\ \cline{2-5} 
                                                                              & Training time    & Top-1 acc.    & Training time & Top-1 acc. \\ \hline
Means (\ref{eq:NC3})                                                                         & -$^\dagger$                & -$^\dagger$             & 11 h           & 81.61 \%     \\
\begin{tabular}[c]{@{}c@{}}Nonlinearly transf.\\ weights of clf.~\cite{BCL}\end{tabular} & 2.5 h            & 90.68 \%        & 7.5 h          & 90.06 \%     \\
\begin{tabular}[c]{@{}c@{}}{\bfseries Linearly transf.}\\ {\bfseries weights of clf.} (ours) \end{tabular}   & \textbf{2 h}            & \textbf{92.13 \%}       & \textbf{7 h}            & \textbf{92.36 \%}     \\ \hline
\bottomrule
\end{tabular}%
}
\vspace{0.5pc}

{\centering\scriptsize $^\dagger$We failed to run experiments with CIFAR-10-LT due to high imbalance factor -- initial minibatch did not include samples from some minority classes. \par}
\end{table}

\subsubsection{Is a linear transformation more effective than a nonlinear one for constructing prototypes in the CC-GE promoting loss (\ref{eq:CC-GE})?}
\label{sec:mean_vs_proto}

We compare different forms of prototypes in the proposed CC-GE loss (\ref{eq:CC-GE}) that are implemented as follows:
\textit{1)} the class means that average the within-class representations;
\textit{2)} the class prototypes obtained by passing the classifier weights through a non-linear MLP transformation \cite{BCL}; and
\textit{3)} the class prototypes obtained by passing the classifier weights through a linear transformation.

Comparing the second and third schemes in TABLE~\ref{tab:prototype} shows that
in producing class prototypes from classifier weights,
using a linear transformation improves both training efficiency and classification performance.
A plausible explanation is as follows.
Constructing prototypes via a linear transformation of classifier weights preserves their directional structure, ensuring that prototype directions remain geometrically consistent with the weight vectors.
In contrast, applying an MLP with piecewise activations such as ReLU may introduce nonlinear distortions that warp angular relationships and create misalignment between prototypes and classifier weights.
Such misalignment weakens the CC-GE property (see Fig.~\ref{fig:properties}(c)), ultimately degrading generalization performance.
Our conclusion is that a linear transformation is preferable to a nonlinear one for constructing prototypes in ECL: it better facilitates CC-GE by avoiding the potential directional distortions that nonlinear mappings can introduce, while also reducing computational overhead.

Comparing the first with the second and third schemes in TABLE~\ref{tab:prototype} supports our claim in Section~\ref{sec:prelim} that using class prototypes, rather than class means, can accelerate training and improve classification performance.

\begin{table}[t]
\centering
\caption{Comparisons of Top-$1$ accuracy, $\text{FC}$, $\text{MS}$, and $\text{SD}$ values between a nonlinear classifier and a linear classifier with CIFAR-10-LT and LCCT datasets.}
\label{tab:clf}
\resizebox{\columnwidth}{!}{%
\begin{tabular}{c|cccc|cccc}
\toprule
\hline
\multirow{2}{*}{Methods} & \multicolumn{4}{c|}{CIFAR-10-LT ($\rho = 100$)}                                                 & \multicolumn{4}{c}{LCCT}                                                                        \\
                         & Top-1 acc.       & $\text{FC}^{\downarrow}$ & $\text{MS}^{\uparrow}$ & $\text{SD}^{\downarrow}$ & Top-1 acc.       & $\text{FC}^{\downarrow}$ & $\text{MS}^{\uparrow}$ & $\text{SD}^{\downarrow}$ \\ \hline
Nonlinear                & 90.65\%          & 0.38                     & 0.88                   & 0.36                     & 89.71\%          & 0.36                     & 1.27                   & 0.33                     \\
\textbf{Linear}          & \textbf{92.13\%} & \textbf{0.36}            & \textbf{0.94}          & \textbf{0.25}            & \textbf{92.36\%} & \textbf{0.23}            & \textbf{1.61}          & \textbf{0.21}            \\ \hline
  \bottomrule
\end{tabular}%
}
\end{table}

\subsection{Can the CC-GE property hold with nonlinear classifiers?}
\label{sec:clf}

Many existing theoretical studies of the three key geometric properties in Section~\ref{sec:prelim}, e.g., \cite{neuralcollapse}, assume a \emph{linear} classifier.
The work \cite{nuclear} theoretically show in \emph{balanced} learning that a two-layer MLP classifier using ReLU activation and no bias can satisfy CC-GE, yet this result requires a strong assumption about nuclear norm equality.
Another theoretical study \cite{NC_deep} demonstrates that in balanced learning, a two- or three-layer MLP classifier with ReLU activation can empirically satisfy CC-GE, 
where they measure (\ref{eq:NC3}) by replacing the classifier weight matrix $\mathbf{W}$ with the product of the weight matrices across layers.

We compare a linear classifier (following the standard convention) with a nonlinear classifier implemented as a three-layer MLP, where ReLU activations and batch normalization layers are inserted between the linear layers, and a bias term is used only in the final linear layer \cite{NC_deep}.
In contrast to the empirical findings of \cite{NC_deep} under balanced learning, 
our results in TABLE~\ref{tab:clf} indicate that the nonlinear classifier does not sufficiently promote CC-GE under class imbalance and likewise fails to adequately promote the two properties of representation geometric equilibrium.
Ultimately, the nonlinear classifier degraded the classification performance relative to our linear classifier.

We conjecture that using a nonlinear classifier hinders the achievement of an overall geometric equilibrium in Fig.~\ref{fig:configuration} that harmoniously balances class representations, class means, and classifier weights in imbalanced learning, ultimately leading to performance degradation.

While nonlinear classifiers do not sufficiently promote the desired geometric equilibrium under the aforementioned experimental setup in ECL, 
future work may explore architectural modifications or training schemes that allow nonlinear classifiers to preserve this equilibrium while still benefiting from their expressive capacity.

\section{Conclusion} 

Imbalanced distributions between classes are easily found in real-world datasets.
Class imbalance degrades generalization performance because it disrupts the geometric equilibrium among representations and classifier weights, leading models to overfit majority classes and overlook rare but societally important minority classes.
This issue becomes even more severe when the dataset is highly imbalanced, resulting in unreliable predictions in domains where safety and trustworthy decision-making are essential.
Several existing supervised CL methods aim to promote a geometric equilibrium among representations from different classes by optionally including class prototypes to more evenly account for all classes.

We propose a new E2E supervised CL framework, ECL, that promotes geometric equilibrium in the representation space by harmoniously balancing class features, class means, and classifier weights.
In ECL, we introduce two complementary strategies that, to the best of our knowledge, have not been explored in prior work.
First, 
we promote a consistent geometric equilibrium of representations across batches by balancing the contributions of class-average features and class prototypes; 
see the proposed BC-ECL formulation in Section~\ref{sec:BC-ECL}.
Second, 
we promote geometric equilibrium between a classifier and class-center representations by mutually aligning classifier weights and class prototypes; 
see the proposed CC-GE promoting loss in Section~\ref{sec:ccge}. 
These two strategies effectively stabilize the geometry of representations and prevent classifier–class mean misalignment, yielding improved generalization.
ECL achieves new benchmarking SOTA performances with four real-world long-tailed benchmark datasets,
and outperforming performance over existing supervised CL methods with a real-world imbalanced medical image dataset that suffers from poor generalization.
By effectively addressing class imbalance, ECL contributes to more reliable and equitable decision-making in safety-critical applications---such as medical diagnosis, autonomous driving, and fraud detection---where failures on minority classes can lead to disproportionately harmful consequences.

Future research directions of ECL can be pursued along several dimensions.
First, we aim to address a limitation of ECL: it still shares the common drawback of supervised CL methods in requiring a large batch size to form diverse positive and negative pairs.
Motivated by ProCo \cite{ProCo}, we plan to modify ECL so that its contrastive objectives are computed against class-wise probability distributions rather than mini-batch samples, while retaining its prototype-based contrasting (which may incur negligible computational overhead).
This would enable ECL to function effectively with a smaller batch size while retaining its geometric equilibrium behavior.
Second, we plan to extend ECL beyond visual domains to general imbalanced modalities such as text and tabular data. 
This can be achieved by incorporating a modality-specific encoder and selecting appropriate augmentation strategies within the proposed ECL framework, 
potentially enabling ECL to serve as a unified representation learning framework across data types.
We believe these extensions will further broaden the applicability and expressiveness of ECL in real-world imbalanced learning scenarios.

{\small
\bibliographystyle{IEEEtran}
\bibliography{ref}
}

\begingroup
\raggedbottom

\begin{IEEEbiography}[{\includegraphics[width=1in,height=1.25in,clip,keepaspectratio]{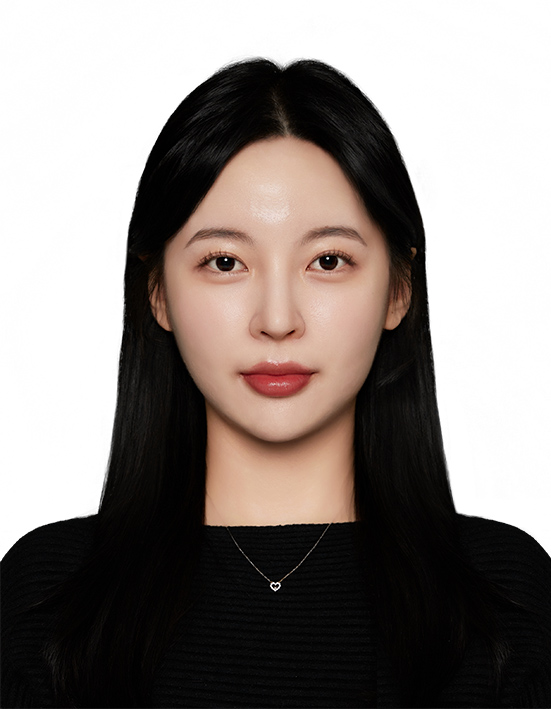}}]{Sumin Roh} 
received the M.Eng.~degree in Electrical and Computer Engineering (ECE), Sungkyunkwan University (SKKU), Suwon, South Korea, in 2025.
She is currently pursuing the Ph.D.~degree in ECE, SKKU, Suwon, South Korea.
Her research interests include long-tailed recognition, contrastive learning, and multimodal learning.
\end{IEEEbiography}

\begin{IEEEbiography}[{\includegraphics[width=1in,height=1.25in,clip,keepaspectratio]{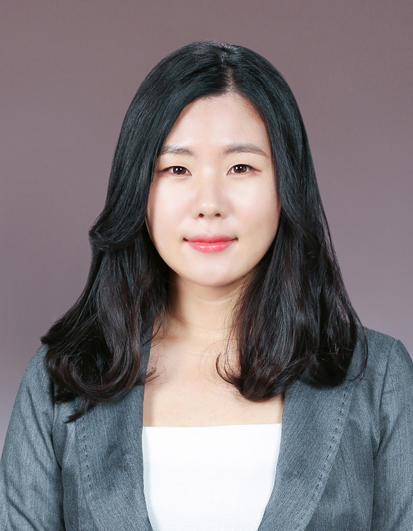}}]{Harim Kim} 
is a 4th year resident currently undergoing training at 
the Department of Radiology, Samsung Medical Center, Seoul, South Korea.
She received her M.D.~degree from Ewha Womans University, Seoul, South Korea (2015–2019).
She also holds a B.A.~degree in English Literature (2011–2015). 
Her research interests focus on tumor
imaging and applications of artificial intelligence to medical imaging.
\end{IEEEbiography}

\begin{IEEEbiography}[{\includegraphics[width=1in,height=1.25in,clip,keepaspectratio]{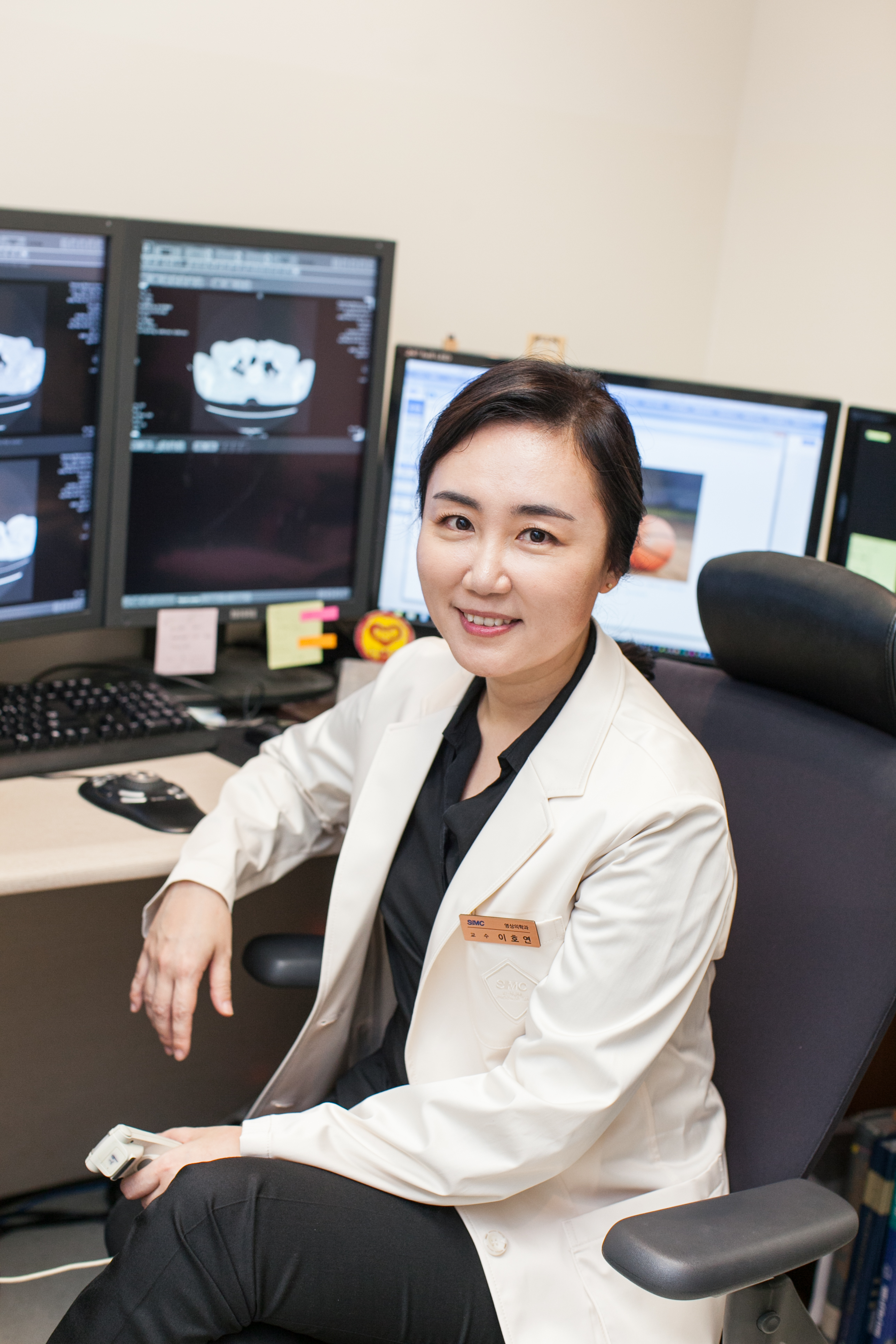}}]{Ho Yun Lee} is a Professor of Radiology (Thoracic Imaging Section) at Samsung Medical Center, Seoul, South Korea, and a Professor at the School of Medicine, Sungkyunkwan University, Suwon, South Korea.
Her research interests include thoracic oncologic imaging and radiomics, with a particular focus on quantitative image analysis and its integration with genomics for prognostic stratification and assessment of treatment response. She currently serves as Chair of the International Liaison Committee of the Korean Society of Thoracic Radiology and Chair of the Planning Committee for the Korean Workshop on Pulmonary Functional Imaging, and is a member of the Fleischner Society.
\end{IEEEbiography}

\begin{IEEEbiography}[{\includegraphics[width=1in,height=1.25in,clip,keepaspectratio]{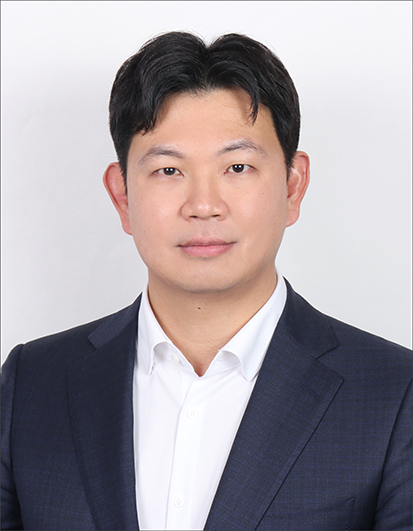}}]{Il Yong Chun} (Member, IEEE) 
received the B.Eng.~degree in Electrical Engineering from Korea University, Seoul, South Korea,
in 2009, and the Ph.D.~degree in Electrical and Computer Engineering from Purdue University, West Lafayette, Indiana, in 2015. 
He is an Associate Professor of
Electrical and Computer Engineering (ECE) at Sungkyunkwan University (SKKU), Suwon, South Korea, from 2024.
He is also affiliated with the Departments of Artificial Intelligence, Advanced Display Engineering, Semiconductor Convergence Engineering, and Display Convergence Engineering, 
SKKU, Suwon, South Korea,
and the Center for Neuroscience Imaging Research, Institute for Basic Science, Suwon, South Korea.
He joined the School of Electronic and Electrical Engineering (now ECE) at SKKU in 2021, as an Assistant Professor. 
Prior to joining SKKU, he was an Assistant Professor of Electrical and Computer Engineering at the University of Hawai’i, Manoa,
a Research Fellow in Electrical Engineering and Computer Science at The University of Michigan, and a Postdoctoral Research Associate in Mathematics at Purdue University, from 2019-2021, 2016-2019, and 2015-2016, respectively.
His research interests include machine learning and artificial intelligence,  
applied to computational imaging and vision.
\end{IEEEbiography}
\endgroup

\newpage


\renewcommand\citeform[1]{S#1}

\renewcommand{\thetable}{S\arabic{table}}
\renewcommand{\thefigure}{S\arabic{figure}}
\renewcommand{\theequation}{S\arabic{equation}}
\renewcommand{\thesection}{S\arabic{section}}
\setcounter{table}{0}
\setcounter{figure}{0}
\setcounter{equation}{0}

\clearpage
\twocolumn[
\begin{@twocolumnfalse}
\begin{center}
\fontsize{20}{22}\selectfont
Supplementary Material for
``Equilibrium contrastive learning for imbalanced image classification''
\vspace{1pc}
\end{center}
\end{@twocolumnfalse}
]

For clarity and completeness, this supplementary material provides the followings:
\begin{itemize}
\item Section~\ref{sup:proof} presents the proof of Theorem~1;
\item Section~\ref{sup:hyperparameter} details our hyperparameter settings;
\item Section~\ref{sup:more results} reports precision and recall results for different methods on a benchmark dataset, and confusion matrices for ECL on the medical image datasets.
\end{itemize}

\section{Proofs for Theorem~1}
\label{sup:proof}

In this section, we derive the lower bound for BC-ECL (4).
Following \cite{cp1, BCL}, we aim to decompose the loss into \emph{attraction} and  \emph{repulsion} terms.
(For notational brevity, we set $\tau=1$.)
We first rewrite the overall BC-ECL los as follows:
\begingroup
\setlength{\thinmuskip}{1.5mu}
\setlength{\medmuskip}{2mu plus 1mu minus 2mu}
\setlength{\thickmuskip}{2.5mu plus 2.5mu}
\fontsize{9.5pt}{11.4pt}\selectfont
\ea{
&~\mathcal{L}_{\text{BC-ECL}}
\nn
\\
&=
-  \frac{1}{|\mathcal{B}|}  \sum_{i \in \mathcal{B}} \frac{1}{|\mathcal{P}(i)|} 
\sum\limits_{j\in \mathcal{P}(i)}
\!\log\!
 \frac{\exp( (\mathbf{z}_{i}\cdot \mathbf{z}_{j} +  \mathbf{z}_{i}\cdot \mathbf{z}_{p_{y_i}}) / 2 )}{\sum\limits_{c \in \mathcal{Y}}\frac{1}{|\mathcal{B}_c|}\sum\limits_{k\in \mathcal{B}_c} \exp(\mathbf{z}_{i}\cdot \mathbf{z}_{k} ) + \exp(\mathbf{z}_{i}\cdot \mathbf{z}_{p_c} )}
 \nn
\\
&=
\frac{1}{|\mathcal{B}|} \sum_{i \in \mathcal{B}}
\left(
\frac{1}{|\mathcal{P}(i)|} \sum\limits_{j\in \mathcal{P}(i)} 
\log\left(
\sum\limits_{c \in \mathcal{Y}}\frac{1}{|\mathcal{B}_c|}\sum\limits_{k\in \mathcal{B}_c} \exp(\mathbf{z}_{i}\cdot \mathbf{z}_{k} ) + \exp(\mathbf{z}_{i}\cdot \mathbf{z}_{p_c} )
\right) \right.
 \nn
\\
& \hspace{4pc} 
\left.
- \frac{1}{|\mathcal{P}(i)|} \sum\limits_{j\in \mathcal{P}(i)} 
\frac{\mathbf{z}_{i}\cdot \mathbf{z}_{j} +  \mathbf{z}_{i}\cdot \mathbf{z}_{p_{y_i}}}{2}
\right)
 \nn
\\
&=
\frac{1}{|\mathcal{B}|} \sum_{i \in \mathcal{B}}
\left(
\log\left(
\sum\limits_{c \in \mathcal{Y}}\frac{1}{|\mathcal{B}_c|} \sum\limits_{k\in \mathcal{B}_c} \exp(\mathbf{z}_{i}\cdot \mathbf{z}_{k} ) + \exp(\mathbf{z}_{i}\cdot \mathbf{z}_{p_c} )
\right) \right.
\nn
\\
&\hspace{4pc}
\left.
- \frac{1}{|\mathcal{P}(i)|} \sum\limits_{j\in \mathcal{P}(i)} 
\frac{\mathbf{z}_{i}\cdot \mathbf{z}_{j} +  \mathbf{z}_{i}\cdot \mathbf{z}_{p_{y_i}}}{2}
\right)
 \nn
\\
&
=\frac{1}{|\mathcal{B}|} \sum_{i \in \mathcal{B}}
\left(
\log\left(
\sum\limits_{c \in \mathcal{Y}}\frac{1}{|\mathcal{B}_c|}\sum\limits_{k\in \mathcal{B}_c} \exp(\mathbf{z}_{i}\cdot \mathbf{z}_{k} ) + \exp(\mathbf{z}_{i}\cdot \mathbf{z}_{p_c} )
\right) \right.
 \nn
\\
&\hspace{4pc} 
\left.
-\log \left( \exp \left(\frac{1}{|\mathcal{P}(i)|} \sum\limits_{j\in \mathcal{P}(i)} 
\frac{\mathbf{z}_{i}\cdot \mathbf{z}_{j}+\mathbf{z}_{i}\cdot \mathbf{z}_{p_{y_i}}}{2}
\right)
\right)
\right)
 \nn
\\
&
=
\frac{1}{|\mathcal{B}|}  \sum_{i \in \mathcal{B}}
\log 
\frac{
 \sum\limits_{ c \in \mathcal{Y}}
     \frac{1}{|\mathcal{B}_c|}
    \sum\limits_{k \in \mathcal{B}_c}  
    \exp(\mathbf{z}_i \cdot \mathbf{z}_k) +  \exp(\mathbf{z}_i \cdot \mathbf{z}_{p_c})    
}{
    \exp\big(\frac{1}{2|\mathcal{P}(i)|}\sum\limits_{j \in \mathcal{P}(i) } \big(\mathbf{z}_i \cdot \mathbf{z}_j  + \mathbf{z}_i \cdot \mathbf{z}_{p_{y_i}}
    \big) \big)
},
\label{eq:sup1}
}
\endgroup
where the second inequality holds by $\log(a/b) = \log a - \log b$,
the third inequality holds by $ \sum_{j \in \mathcal{A}} a_i = |\mathcal{A}| a_i$.

Now, we split the numerator $N(i)$ in (\ref{eq:sup1}) into intra-class and inter-class terms:
\begingroup
\allowdisplaybreaks
\setlength{\thinmuskip}{1.5mu}
\setlength{\medmuskip}{2mu plus 1mu minus 2mu}
\setlength{\thickmuskip}{2.5mu plus 2.5mu}
\fontsize{9.5pt}{11.4pt}\selectfont
\ea{
N(i)
&:=
\underbrace{
     \frac{1}{|\mathcal{P}(i)|}
    \sum\limits_{j \in \mathcal{P}(i)} 
    \exp(\mathbf{z}_i \cdot \mathbf{z}_j) +  \exp(\mathbf{z}_i \cdot \mathbf{z}_{p_{y_i}}) 
}_{\text{(a) intra-class} }
 \nn
\\
&
\hspace{1pc}
+
\sum\limits_{ c \in \mathcal{Y} \setminus \{y_i\}}
\underbrace{  
   \frac{1}{|\mathcal{B}_c|}
    \sum\limits_{k \in \mathcal{B}_c}  
    \exp(\mathbf{z}_i \cdot \mathbf{z}_k) +  \exp(\mathbf{z}_i \cdot \mathbf{z}_{p_c}) 
}_{\text{(b) inter-class}}
.
\label{eq:sup2}
}
\endgroup

We obtain the lower bound of the term (a) in (\ref{eq:sup2}) as follows:
\begingroup
\setlength{\thinmuskip}{1.5mu}
\setlength{\medmuskip}{2mu plus 1mu minus 2mu}
\setlength{\thickmuskip}{2.5mu plus 2.5mu}
\fontsize{9.5pt}{11.4pt}\selectfont
\ea{
&
\frac{1}{|\mathcal{P}(i)|}
    \sum\limits_{j \in \mathcal{P}(i)} 
    \exp(\mathbf{z}_i \cdot \mathbf{z}_j) +  \exp(\mathbf{z}_i \cdot \mathbf{z}_{p_{y_i}}) 
 \nn
\\
&
\ge
\frac{2}{|\mathcal{P}(i)|} \sum_{j \in \mathcal{P}(i)}
\exp\left( \frac{\mathbf{z}_i \cdot \mathbf{z}_j + 
\mathbf{z}_i \cdot \mathbf{z}_{p_{y_i}}}{2} 
\right)
 \nn
\\
&
\ge
2\exp\left(
  \frac{1}{|\mathcal{P}(i)|}
  \sum_{j \in \mathcal{P}(i)} 
  \frac{
\mathbf{z}_i \cdot \mathbf{z}_j + 
\mathbf{z}_i \cdot \mathbf{z}_{p_{y_i}}
}{2} 
\right).
\label{eq:sup3}
}
\endgroup
where the first inequality in (\ref{eq:sup3}) holds by the arithmetic mean--geometric mean (AM--GM) inequality, 
i.e., for any positive numbers $x$ and $y$, 
$x + y \ge 2\sqrt{xy}$,
and the second inequality holds by the Jensen's inequality for a convex function, i.e.,
for any real numbers $\{a_i: i \in \mathcal{A} \}$,
$|\mathcal{A}|^{-1} \sum_{i\in\mathcal{A}} \exp(a_i)\ge 
\exp(|\mathcal{A}|^{-1} \sum_{i\in\mathcal{A}} a_i).$
Similarly, we obtain the lower bound of the term (b) in (\ref{eq:sup2}) as follows:
\begingroup
\setlength{\thinmuskip}{1.5mu}
\setlength{\medmuskip}{2mu plus 1mu minus 2mu}
\setlength{\thickmuskip}{2.5mu plus 2.5mu}
\fontsize{9.5pt}{11.4pt}\selectfont
\ea{
&
\frac{1}{|\mathcal{B}_c|}
\sum\limits_{k \in \mathcal{B}_c}  
\exp(\mathbf{z}_i \cdot \mathbf{z}_k) +  \exp(\mathbf{z}_i \cdot \mathbf{z}_{p_c})
\nn
\\
&
\ge 
2\exp \left(
   \frac{1}{|\mathcal{B}_{c}|}
    \sum\limits_{k \in \mathcal{B}_{c}} 
\frac{ \mathbf{z}_i \cdot \mathbf{z}_k + \mathbf{z}_i \cdot \mathbf{z}_{p_{c}} }{2}
\right).
\label{eq:sup4}
}
\endgroup

\noindent

By combining (\ref{eq:sup3}) and (\ref{eq:sup4}), $N(i)$ in (\ref{eq:sup2}) is lower bounded as follows:
\begingroup
\setlength{\thinmuskip}{1.5mu}
\setlength{\medmuskip}{2mu plus 1mu minus 2mu}
\setlength{\thickmuskip}{2.5mu plus 2.5mu}
\fontsize{9.5pt}{11.4pt}\selectfont
\ea{
N(i)
&
\ge
2\exp \left(
   \frac{1}{|\mathcal{P}(i)|}
    \sum\limits_{j \in \mathcal{P}(i)} 
\frac{\mathbf{z}_i \cdot \mathbf{z}_j + \mathbf{z}_i \cdot \mathbf{z}_{p_{y_i}} }{2}
\right)
\nn
\\
&
\hspace{0.7pc}
+
\underbrace{
\sum\limits_{ c \in \mathcal{Y} \setminus \{y_i\}}
2\exp \left(
   \frac{1}{|\mathcal{B}_{c}|}
    \sum\limits_{k \in \mathcal{B}_{c}} 
\frac{ \mathbf{z}_i \cdot \mathbf{z}_k + \mathbf{z}_i \cdot \mathbf{z}_{p_{c}} }{2}
\right)}_{\text{(c)}}
.
\label{eq:sup5}
}
\endgroup
\noindent
By applying the Jensen's inequality (for a convex function) again to the term (c), we obtain its lower bound:
\begingroup
\setlength{\thinmuskip}{1.5mu}
\setlength{\medmuskip}{2mu plus 1mu minus 2mu}
\setlength{\thickmuskip}{2.5mu plus 2.5mu}
\fontsize{9.5pt}{11.4pt}\selectfont
\ea{
&
\sum\limits_{ c \in \mathcal{Y} \setminus \{y_i\}}
2\exp \left(
   \frac{1}{|\mathcal{B}_{c}|}
    \sum\limits_{k \in \mathcal{B}_{c}} 
\frac{ \mathbf{z}_i \cdot \mathbf{z}_k + \mathbf{z}_i \cdot \mathbf{z}_{p_{c}} }{2}
\right)
\nn
\\
&
\ge 2(C-1)
\exp \left(
\frac{1}{(C-1)}\sum_{c \in \mathcal{Y} \setminus \{y_i\}}
\frac{1}{|\mathcal{B}_c|}\sum_{k\in \mathcal{B}_c}
\frac{\mathbf z_i\!\cdot\!\mathbf z_k
+ \mathbf z_i\!\cdot\!\mathbf{z}_{p_c}}{2}
\right).
\label{eq:sup6}
}
\endgroup
By combining the results in (\ref{eq:sup5}) and (\ref{eq:sup6}), we obtain the final lower bound of $N(i)$ in (\ref{eq:sup2}):
\begingroup
\setlength{\thinmuskip}{1.5mu}
\setlength{\medmuskip}{2mu plus 1mu minus 2mu}
\setlength{\thickmuskip}{2.5mu plus 2.5mu}
\fontsize{9.5pt}{11.4pt}\selectfont
\ea{
N(i)
&
\ge
2\exp\left(
  \frac{1}{2|\mathcal{P}(i)|}
  \sum_{j \in \mathcal{P}(i)} 
(\mathbf{z}_i \cdot \mathbf{z}_j + 
\mathbf{z}_i \cdot \mathbf{z}_{p_{y_i}})
\right)
\nn
\\
&
\hspace{0.6pc}
+
2(C-1)
\exp \left(
\frac{1}{2(C-1)}\sum_{c \in \mathcal{Y} \setminus \{y_i\}}
\frac{1}{|\mathcal{B}_c|}\sum_{k\in \mathcal{B}_c}
(\mathbf z_i\!\cdot\!\mathbf z_k
+ \mathbf z_i\!\cdot\!\mathbf{z}_{p_c})
\right).
\label{eq:sup7}
}
\endgroup

Finally, we obtain the lower bound of $\mathcal{L}_{\text{BC-ECL}}$ in (\ref{eq:sup1}).
For simplicity, we define the followings in (\ref{eq:sup7}):
\begingroup
\setlength{\thinmuskip}{1.5mu}
\setlength{\medmuskip}{2mu plus 1mu minus 2mu}
\setlength{\thickmuskip}{2.5mu plus 2.5mu}
\fontsize{9.5pt}{11.4pt}\selectfont
\ea{
A(i) 
&:= 
\frac{1}{2|\mathcal{P}(i)|}
  \sum_{j \in \mathcal{P}(i)} 
(\mathbf{z}_i \cdot \mathbf{z}_j + 
\mathbf{z}_i \cdot \mathbf{z}_{p_{y_i}})
\label{eq:Ai}
\\
R(i)
&:=
\frac{1}{2(C-1)}
\sum_{c \in \mathcal{Y} \setminus \{y_i\}}
\frac{1}{|\mathcal{B}_c|}\sum_{k\in \mathcal{B}_c}
(\mathbf z_i\!\cdot\!\mathbf z_k
+ \mathbf z_i\!\cdot\!\mathbf{z}_{p_c})
\label{eq:Ri}
}
By applying the result in (\ref{eq:sup7}) and the definitions in (\ref{eq:Ai})--(\ref{eq:Ri}) to  (\ref{eq:sup1}),
we derive the following lower bound of $\mathcal{L}_{\text{BC-ECL}}$:
\begingroup
\setlength{\thinmuskip}{1.5mu}
\setlength{\medmuskip}{2mu plus 1mu minus 2mu}
\setlength{\thickmuskip}{2.5mu plus 2.5mu}
\fontsize{9.5pt}{11.4pt}\selectfont
\ea{
&~\mathcal{L}_{\text{BC-ECL}}
\nn
\\
&=
\frac{1}{|\mathcal{B}|}  \sum_{i \in \mathcal{B}}
\log 
\frac{
 \sum\limits_{ c \in \mathcal{Y}}
     \frac{1}{|\mathcal{B}_c|}
    \sum\limits_{k \in \mathcal{B}_c}  
    \exp(\mathbf{z}_i \cdot \mathbf{z}_k) +  \exp(\mathbf{z}_i \cdot \mathbf{z}_{p_c})    
}{
    \exp\big(\frac{1}{2|\mathcal{P}(i)|}\sum\limits_{j \in \mathcal{P}(i) } \big(\mathbf{z}_i \cdot \mathbf{z}_j  + \mathbf{z}_i \cdot \mathbf{z}_{p_{y_i}}
    \big) \big)
}
\quad \text{in (\ref{eq:sup1})}
\nn
\\
&
\ge
\frac{1}{|\mathcal{B}|}  \sum_{i \in \mathcal{B}}
\log 
\frac{
2 \cdot \exp (A(i)) + 2(C-1) \cdot \exp (R(i))
}{
\exp (A(i))}
\nn
\\
& 
\ge
\frac{1}{|\mathcal{B}|}  \sum_{i \in \mathcal{B}}
\log 
\left(
2\left(
1 + (
C-1)\exp (R(i)-A(i))
\right)
\right)
\nn
\\
&
\ge
\frac{1}{|\mathcal{B}|}  \sum_{i \in \mathcal{B}}
\log2+
\log 
\left(
1 + (C-1)
\exp (R(i)-A(i)
)
\right)
\nn
\\
&
\ge
\frac{1}{|\mathcal{B}|}  \sum_{i \in \mathcal{B}}
\log 
\left(
1 + (C-1)
\exp (R(i)-A(i)
)
\right)
\label{eq:sup8}
}
\endgroup
Substituting (\ref{eq:Ai})--(\ref{eq:Ri}) into (\ref{eq:sup8}) completes the proofs of Theorem~1.

\section{Hyperparameter settings}
\label{sup:hyperparameter}

\begin{table}[t]
\centering
\caption{Top-1 accuracy comparisons of ECL under varying balancing parameter combinations with CIFAR-10-LT ($\rho=100$).}
\label{tab:hyperparameters}
\resizebox{6cm}{!}{%
\begin{tabular}{ccc|c}
\toprule
\hline
$\lambda_{\text{BC-ECL}}$ & $\lambda_{\text{LC}}$ & $\lambda_{\text{CC-GE}}$ & CIFAR-10-LT  \\ \hline
0.5 & 1   & 1 & 91.9          \\
0.5 & 0.5 & 1 & 91.5          \\
0.5 & 1   & 3 & 92.0          \\
0.5 & 0.5 & 3 & \textbf{92.1} \\
1   & 1   & 1 & 91.3          \\
1   & 0.5 & 1 & 91.7          \\
1   & 1   & 3 & 91.8          \\
1   & 0.5 & 3 & 91.6          \\ \hline
\bottomrule
\end{tabular}%
}
\end{table}

To tune the hyperparameters $\{\lambda_\text{BC-ECL}, \lambda_\text{CC-GE}, \lambda_\text{LC}\}$ of the overall ECL loss (8), 
we selected their values such that the three weighted loss terms 
$\lambda_\text{BC-ECL}\mathcal{L}_{\text{BC-ECL}}$, 
$\lambda_\text{CC-GE}\mathcal{L}_{\text{CC-GE}}$, and 
$\lambda_\text{LC}\mathcal{L}_{\text{LC}}$ 
have similar magnitudes at the first training epoch on the CIFAR-10-LT ($\rho=100$) benchmark dataset. 
We then applied this set of hyperparameters consistently across all datasets. 
The hyperparameters selected from our simple hyperparameter tuning scheme yielded stable and reliable performance across all datasets.

Table~\ref{tab:hyperparameters} reports the top-1 accuracy of ECL on CIFAR-10-LT ($\rho=100$) under different balancing parameter combinations 
$\{\lambda_{\text{BC-ECL}}, \lambda_{\text{LC}}, \lambda_{\text{CC-GE}}\}$.
Overall, the performance remains stable across the tested combinations, and 
we obtained the best result with 
$\{\lambda_\text{BC-ECL}=0.5,\ \lambda_\text{LC}=0.5,\ \lambda_\text{CC-GE}=3\}$.
Other combinations yielded comparable accuracy, indicating that ECL is not overly sensitive to the choice of these balancing parameters.
This sensitivity analysis demonstrates that ECL is robust to a wide range of parameter choices and consistently outperforms existing state-of-the-art supervised methods on CIFAR-10-LT ($\rho=100$) across all combinations we evaluated.

\section{More experimental results}
\label{sup:more results}

Section~\ref{sup:precision&recall} compares precision and recall between different supervised CL methods on the CIFAR-10-LT benchmark dataset.
Section~\ref{sup:confusion} presents confusion matrix results of ECL with the two medical image datasets, LCCT and ISIC 2019.
Section~\ref{sup:pr&roc} includes receiver operating characteristic (ROC) and precision-recall (PR) curves of ECL with the two medical image datasets, LCCT and ISIC 2019.

\subsection{Comparisons of precision and recall between different supervised CL methods}
\label{sup:precision&recall}

\begin{table}[t]
\centering
\caption{Comparisons of precision and recall with CIFAR-$10$-LT dataset ($\rho=100$).}
\label{tab:prec&rec}
\resizebox{0.55\columnwidth}{!}{%
\begin{tabular}{ccc}
\toprule
\hline
Methods             & Precision      & Recall         \\ \hline
BCL \cite{BCL}                & 86.65          & 84.14          \\
PaCO \cite{PaCo}               & 77.21          & 73.72          \\
GPaCo \cite{GPaco}              & 79.73          & 77.56          \\ \hline
\textbf{ECL} & \textbf{94.41} & \textbf{91.60} \\ \hline
\bottomrule
\end{tabular}%
}
\end{table}

Table~\ref{tab:prec&rec} reports precision and recall on the CIFAR-$10$-LT benchmark, comparing ECL with supervised CL baselines, BCL \cite{BCL}, PaCo \cite{PaCo}, and GPaCo \cite{GPaco}, to assess their robustness to class imbalance. 
The proposed method achieved the highest precision and recall among all baselines, demonstrating more balanced performance across both majority and minority classes. 
These results suggest that ECL is more robust to class imbalance than existing supervised CL methods.

\subsection{Confusion matrices for ECL}
\label{sup:confusion}

\begin{figure}[t]
\centering
\small\addtolength{\tabcolsep}{-5pt}
\renewcommand{\arraystretch}{1}

\begin{tabular}{cc}
    \centering
    \includegraphics[width=0.5\columnwidth]{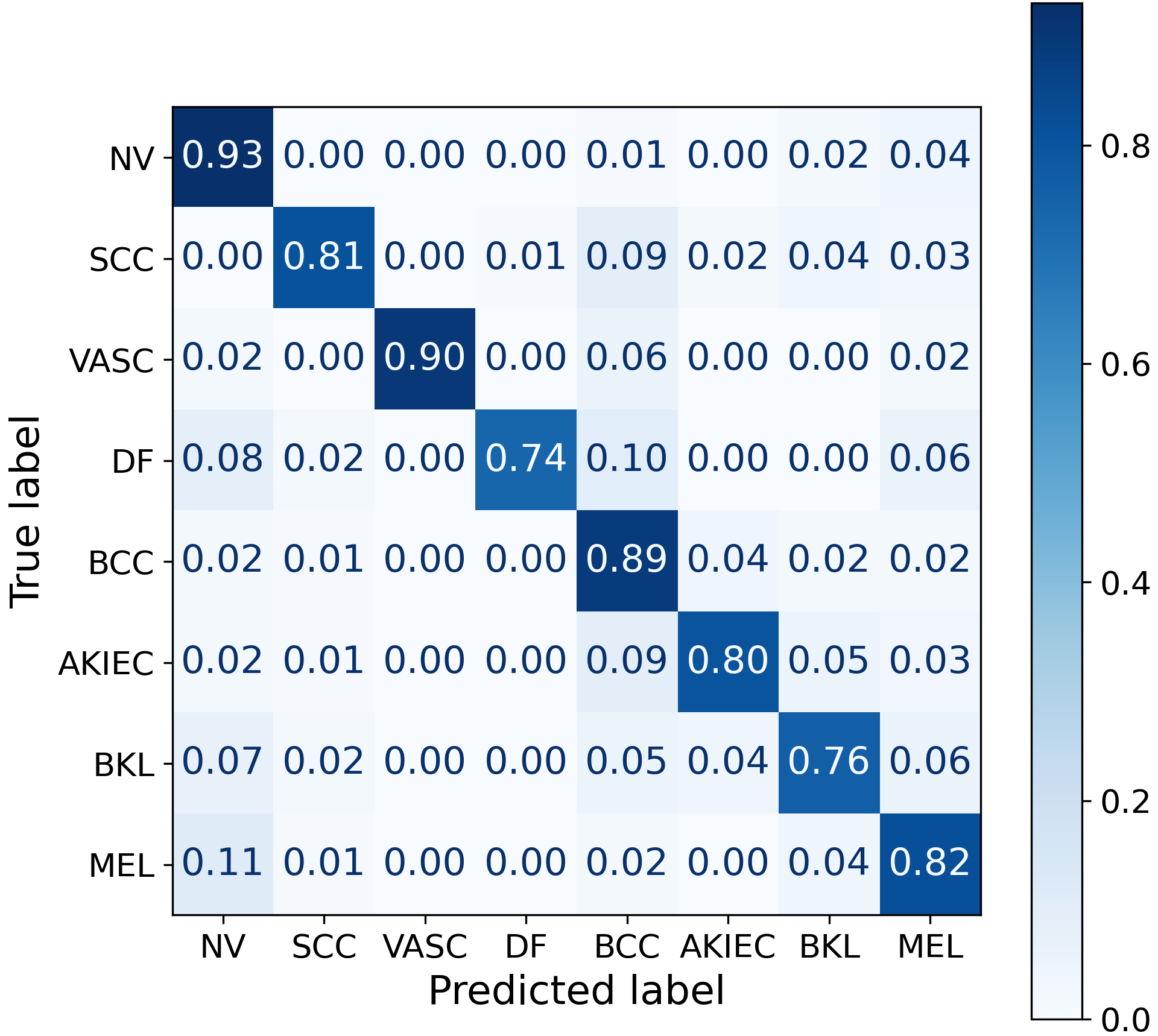} & 
    \includegraphics[width=0.5\columnwidth]{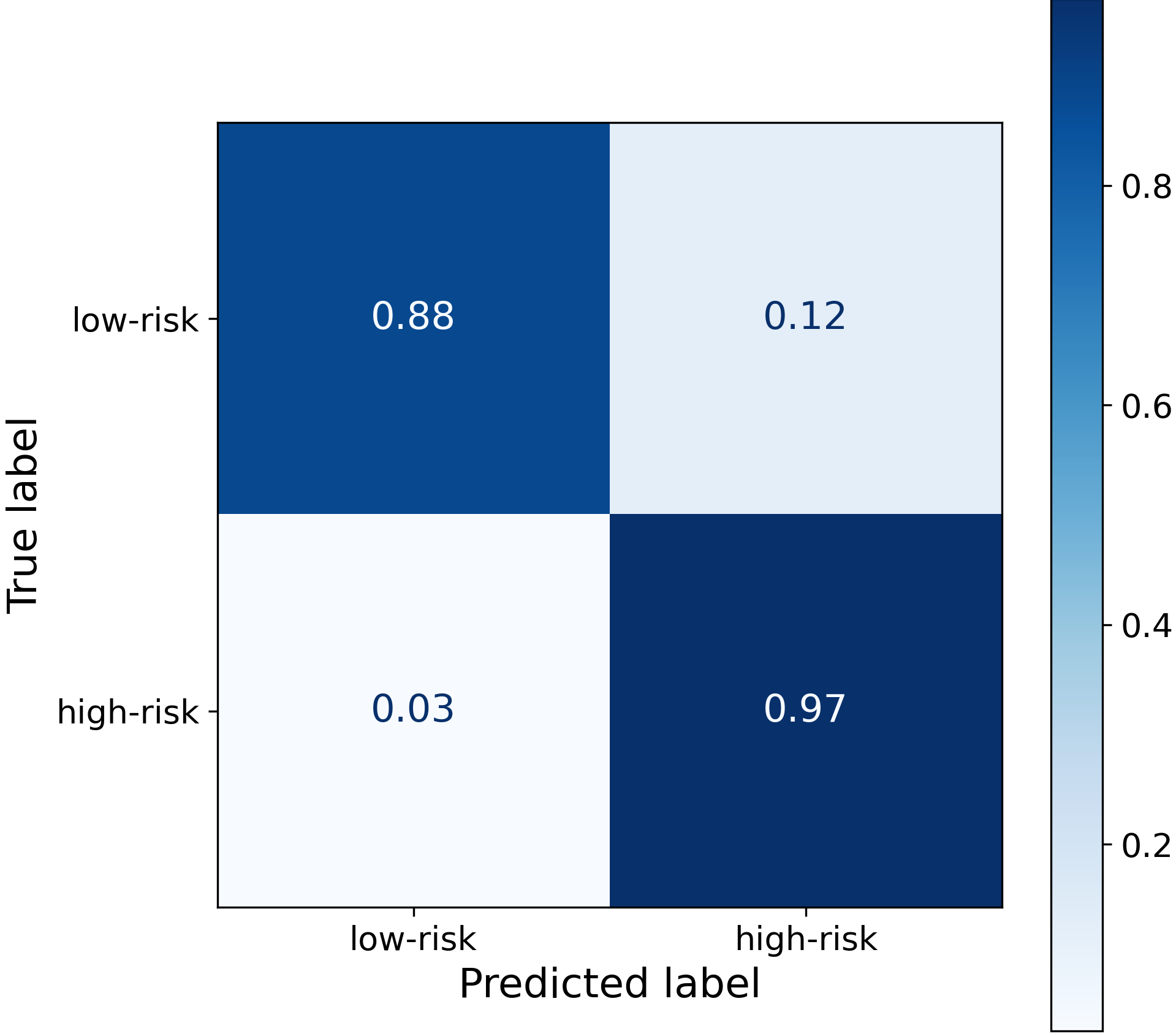} 
    \\
    (a) ISIC 2019 & (b) LCCT \\
\end{tabular}
  \caption{Confusion matrices for ECL on two different medical image datasets. The bar on the right indicates the classification accuracy.}
\label{fig:confusion}
\end{figure}

\begin{figure}[t]
\centering
\small\addtolength{\tabcolsep}{-5pt}
\renewcommand{\arraystretch}{1}

\begin{tabular}{c}
    \includegraphics[width=0.8\columnwidth]{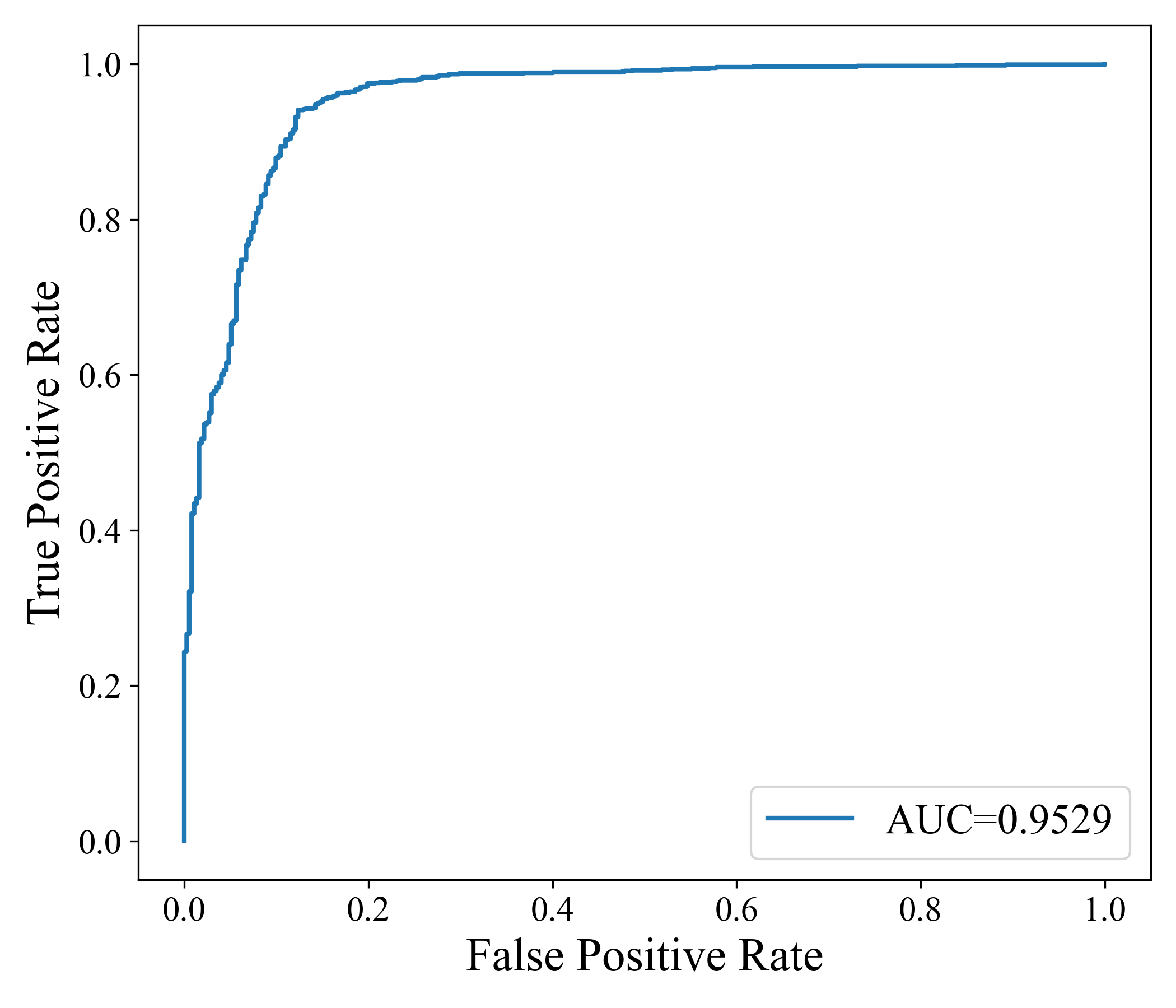} \vspace{-0.5pc} \\
    (a) ROC curve \\
    \includegraphics[width=0.8\columnwidth]{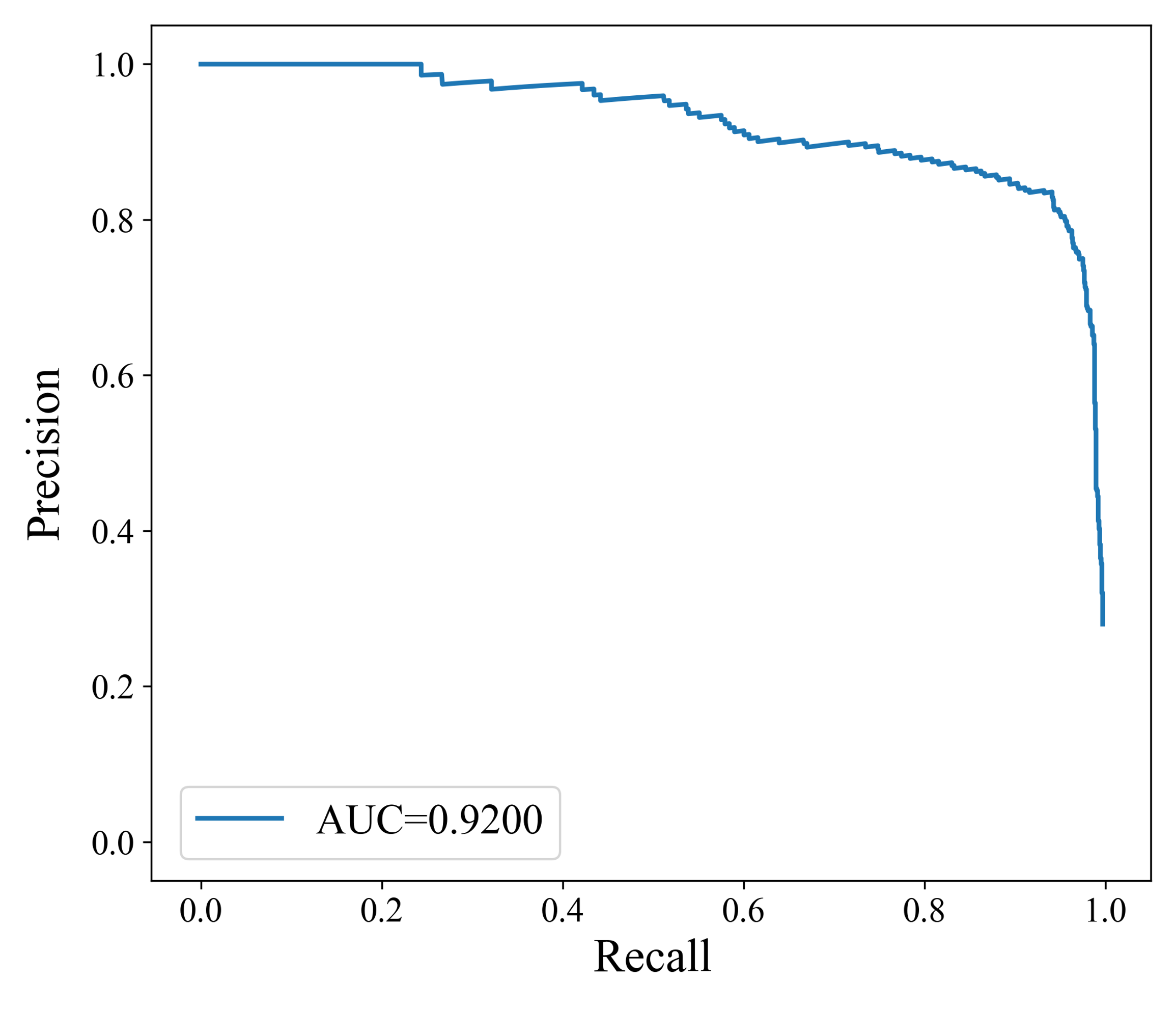} \vspace{-0.5pc} \\
    (b) PR curve
\end{tabular}

\caption{ROC and PR curves of ECL with the LCCT dataset.}
\label{fig:pr_roc_lcct}
\end{figure}

\begin{figure}[t]
\centering
\small\addtolength{\tabcolsep}{-5pt}
\renewcommand{\arraystretch}{1}

\begin{tabular}{c}
    \includegraphics[width=0.8\columnwidth]{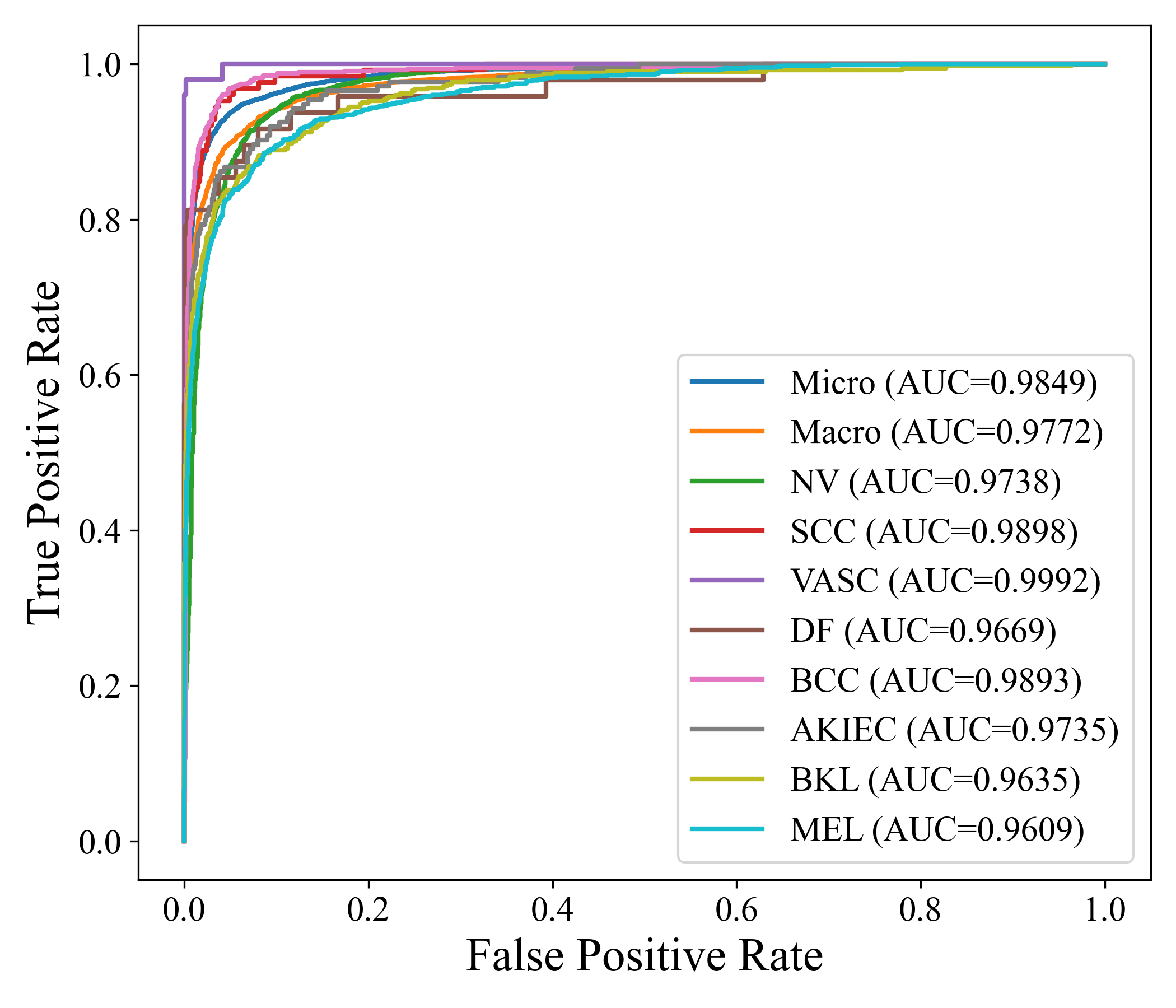} \vspace{-0.5pc} \\
    (a) ROC curve \\
    \includegraphics[width=0.8\columnwidth]{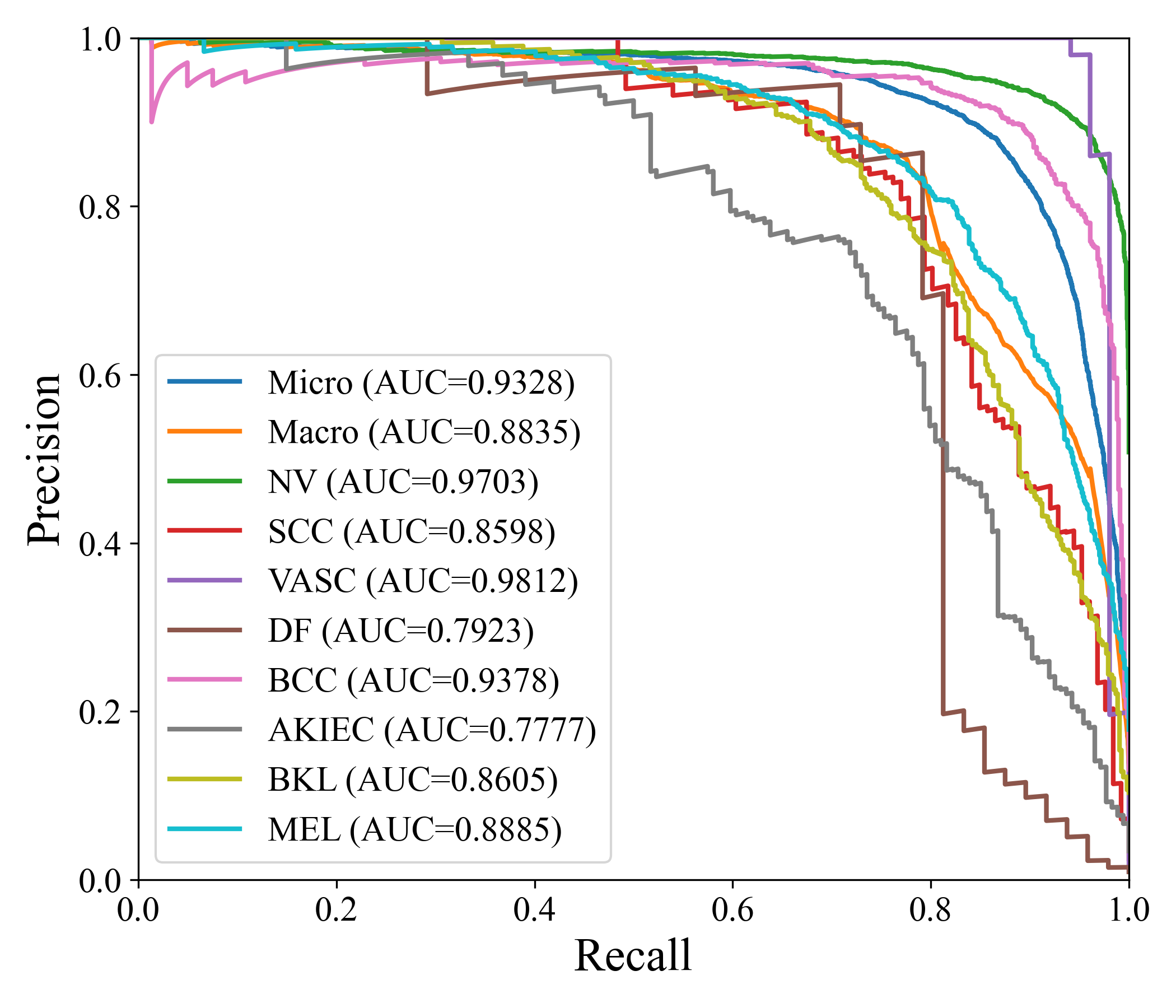} \vspace{-0.5pc} \\
    (b) PR curve
\end{tabular}

\caption{ROC and PR curves of ECL with the ISIC 2019 dataset.}
\label{fig:pr_roc_isic}
\end{figure}

Fig.~\ref{fig:confusion} visualizes the confusion matrices for the LCCT and ISIC 2019 datasets to examine whether ECL achieves balanced classification under class imbalance. 
In LCCT, the high‐risk class is the majority class and the low‐risk class is the minority class.
In ISIC 2019, \textsf{NV} is the only majority class, whereas the remaining classes--- \textsf{SCC}, \textsf{VASC}, \textsf{DF}, \textsf{BCC}, \textsf{AKIEC}, \textsf{BKL}, and \textsf{MEL}---serve as minority classes.

Across both datasets, 
the diagonal entries remain strong for all classes, indicating that ECL maintains balanced performance rather than collapsing toward the majority classes. 
Although minority classes naturally exhibit higher error rates, 
their misclassifications are largely confined to a few semantically similar class pairs rather than being distributed arbitrarily. 
This suggests that the remaining errors arise primarily from semantic ambiguity---not from a systematic bias toward majority classes.

Overall, these results show that ECL achieves relatively low false‐negative rates for minority classes while avoiding excessive false positives, 
demonstrating its ability to maintain a stable trade-off between sensitivity and specificity under class imbalance.

\subsection{ROC and PR curves for ECL}
\label{sup:pr&roc}

Fig.~\ref{fig:pr_roc_lcct} and Fig.~\ref{fig:pr_roc_isic} visualizes ROC and PR curves of ECL with the LCCT and ISIC 2019 datasets.
We report ROC curves to assess how well the model separates positive from negative samples across decision thresholds and PR curves to better reflect performance under class imbalance, where false positives can substantially degrade precision for minority classes.
For the multi-class setting, we report per-class curves as well as micro- and macro-averaged scores. 
The micro-averaged score computes a weighted average that reflects class frequency, whereas the macro-averaged score computes an unweighted average over classes, treating all classes equally.

Fig.~\ref{fig:pr_roc_lcct}(a) and Fig.~\ref{fig:pr_roc_lcct}(b) present the ROC and PR curves of ECL with the LCCT dataset, respectively, where the positive class corresponds to the low-risk (minority) class.
In Fig.~\ref{fig:pr_roc_lcct}(a), the ROC curve achieves a high area under the curve (AUC), with the true positive rate remaining high while the false positive rate stays low across a wide range of decision thresholds.
In Fig.~\ref{fig:pr_roc_lcct}(b), the PR curve achieves a high AUC and maintains high precision as recall increases across decision thresholds.
These results indicate that ECL clearly separates low-risk and high-risk classes while effectively limiting false positives under class imbalance.

Fig.~\ref{fig:pr_roc_isic}(a) and Fig.~\ref{fig:pr_roc_isic}(b) present the ROC and PR curves of ECL with the ISIC 2019 dataset, respectively.
Fig.~\ref{fig:pr_roc_isic}(a) shows that the per-class ROC curves achieve high AUC values, with true positive rates remaining high while false positive rates stay low across a wide range of decision thresholds.
Fig.~\ref{fig:pr_roc_isic}(b) shows the per-class PR curves. 
The majority class NV achieves a high AUC value, maintaining strong precision even at high recall. 
Notably, several minority classes such as VASC and BCC achieve comparable or even higher AUC values than NV, 
indicating that ECL can learn discriminative representations for certain minority classes despite class imbalance. 
For more challenging minority classes such as AKIEC and DF, 
precision decreases as recall increases, yet the curves show a gradual degradation rather than an abrupt collapse, suggesting a stable precision–recall trade-off under data imbalance. 
The gap between the micro- and macro-averaged PR curves suggests that ECL performs well on average, while indicating higher misclassification rates in a few challenging minority classes (e.g., AKIEC and DF).
Corresponding to the confusion-matrix analysis in Section~\ref{sup:confusion}, these misclassifications are largely concentrated in a few semantically similar class pairs, suggesting semantic ambiguity rather than a systematic bias toward the majority class.

Overall, the proposed ECL model effectively addresses data imbalance by limiting false positives through high precision in both binary and multi-class classification tasks, 
while still leaving room for improvement for some challenging minority classes in the multi-class settings.
These results suggest potential clinical utility for real-world medical applications.

\end{document}